\def\bSig\mathbf{\Sigma}
\newcommand{\Opt}{\mathrm{MED}}
\newcommand{\eff}{\text{eff}}
\newcommand{\Set}[1]{\mathchoice%
{\left\{ #1 \right\}}{\{ #1 \}}{\{ #1 \}}{\{ #1 \}}}
\newcommand{\E}{\mathds{E}}
\renewcommand{\P}{\mathbb{P}}
\newcommand{\kl}{\mathrm{kl}}
\newcommand{\ymid}{y_{\mathrm{mid}}}
\newcommand{\tblopt}[1]{\underline{#1}} 
\newcommand{\tblwinrec}[1]{\textbf{#1}} 
\newcommand{\dash}[1]{%
    \tikz[baseline=(todotted.base)]{
        \node[inner sep=1pt,outer sep=0pt] (todotted) {#1};
        \draw[dashed] ([yshift=-2pt]todotted.south west) -- ([yshift=-2pt]todotted.south east);
    }%
}%
\title{On Multi-Armed Bandit Designs for Dose-Finding Trials}
\author{Maryam Aziz$^{1}$, Emilie Kaufmann$^{2}$ and Marie-Karelle Riviere$^{3}$ \\ \\
\small $^1$Spotify, $^2$CNRS \& Univ. Lille,  CRIStAL (UMR 9189), Inria SequeL \\ \small $^3$ Statistical Methodology Group, Biostatistics and Programming department, Sanofi R\&D} 
\date{}
\begin{document}
 	   
\maketitle

\begin{abstract}
We study the problem of finding the optimal dosage in early stage clinical trials through the multi-armed bandit lens. We advocate the use of the Thompson Sampling principle, a flexible algorithm that can accommodate different types of monotonicity assumptions on the toxicity and efficacy of the doses. For the simplest version of Thompson Sampling, based on a uniform prior distribution for each dose, we provide finite-time upper bounds on the number of sub-optimal dose selections, which is unprecedented for dose-finding algorithms. Through a large simulation study, we then show that variants of Thompson Sampling based on more sophisticated prior distributions outperform state-of-the-art dose identification algorithms in different types of dose-finding studies that occur in phase I or phase I/II trials.
\end{abstract}


\maketitle

\section{Introduction}

Multi-armed bandit models were originally introduced in the 1930's as a simple model for a (phase III) clinical trial in which one control treatment is tried against one alternative \citep{Thompson33}. While those models are nowadays widely studied with completely different applications in mind, like online advertisement \citep{LiChapelle11}, recommender systems \citep{LiCLS10News} or cognitive radios \citep{Anandkumar11}, there has been a surge of interest in the use of bandit algorithms for clinical trials (see \cite{Villar15BCT}). More broadly, Adaptive Clinical Trials have received an increased attention \citep{GuidelinesACT18} as the Food and Drug Administration recently updated a draft of guidelines for their actual use \citep{FDA19}. In this paper, we focus on adaptive designs for phase I and phase I/II clinical trials for single-agent in oncology, for which adaptations of the original bandit algorithms may be of interest.  

Phase I trials are the first stage of testing in human subjects.  Their goal is to evaluate the safety (and feasibility) of the treatment and identify its side effects. For non-life-threatening diseases, phase I trials are usually conducted on human volunteers. In life-threatening diseases such as cancer or AIDS, phase I studies are conducted with patients because of the aggressiveness and possible harmfulness of the treatments, possible systemic treatment effects, and the high interest in the new drug's efficacy in those patients directly. The aim of a phase I dose-finding study is to \emph{determine the most appropriate dose level that should be used in further phases of the clinical trials}. Traditionally, the focus is on determining the highest dose with acceptable toxicity called the Maximum Tolerated Dose (MTD). Once the initial safety of the drug has been confirmed in phase I trials, phase II trials are performed on larger groups and are designed to establish the efficacy of the drug and confirm the safety identified in phase I. In phase II dose-finding studies, the dose-efficacy relationship is modeled in order to estimate the smallest dose to obtain a desired efficacy, called the minimal effective dose (MED). Approaches that use both efficacy and toxicity to find an optimal dose are called phase I/II designs. If the new potential treatment shows some efficacy in phase II, it is compared to alternative treatments in phase III. We here consider two classes of algorithms for dose-finding in early stage trials:
algorithms which consider only toxicity, suited for phase I trials,
and algorithms which consider both toxicity and efficacy, suited for phase I/II trials.

Until recently, cytotoxic agents were the main agent of anti-tumor drug development. A common assumption for these agents is that both toxicity and efficacy of the treatment are monotonically increasing with the dose \citep{chevret06}. Hence, only toxicity is required to determine the optimal dose which is then the Maximum Tolerated Dose. From a statistical perspective, the MTD is often defined as the dose level closest to an acceptable targeted toxicity probability fixed prior to the trial onset \citep{faries94,storer89}. However, Molecularly Targeted Agents (MTAs) have emerged as a new treatment option in oncology that have changed the practice of cancer patient care \citep{Postel-Vinay09,letourneau10,letourneau11,letourneau12}. Previously-common assumptions do not necessarily hold for MTAs. Although toxicity is still assumed to be increasing with the dose, it may be so low that the trial cannot be driven by toxicity occurrence only. Efficacy needs to be studied jointly with toxicity, so that the most appropriate dose is not just the MTD. In particular, for some mechanisms of action, a plateau of efficacy can be observed when increasing the dose \citep{hoering11}, for instance when the targeted receptors are saturated. In this paper, we aim at providing a unified approach that can be used both for phase I trials involving cytotoxic agents and phase I/II trials involving MTAs.

Phase I cytotoxic clinical trials in oncology involve several ethical concerns. Therefore, in order to gather information about the dose-toxicity relationship it is not possible to include a large number of patients and randomize them at each different dose level considered in the trial. Patients treated with dose levels over the MTD would be exposed to very high toxicity, and patients treated at low dose levels would be administrated ineffective dose levels. In addition, the total sample size is often very limited. For these reasons, the doses to be allocated should be selected sequentially, taking into account the outcomes of the previous allocated doses, with ideally two objectives in mind: finding the MTD (which is crucial for the next stages of the trial) and treating as many trial participants as possible with this MTD. This trade-off between treatment (curing patients during the study) and experimentation (finding the best treatment) is a common issue in clinical trials. By viewing optimal dose identification as a particular multi-armed bandit problem, this trade-off can be rephrased as a trade-off between  rewards and error probability, two performance measures that are well-studied in the bandit literature and that are known to be somewhat antagonistic (see \cite{Bubeckal11,ESAIM17}). 

In this paper, we investigate the use of Thompson Sampling \citep{Thompson33} for dose-finding clinical trials. This Bayesian algorithm has gained a lot of popularity in the machine learning community for its successful use for reward maximization in bandit models (see, e.g., \cite{LiChapelle11}). Interestingly, in the growing literature on Bayesian Adaptive Designs \citep{Berry06BAD,Berry10BAD}, several designs that may be viewed as variants of Thompson Sampling have been proposed for other types of clinical trials in which different treatments are compared \citep{Thall07,Berrys16Alzheimer}. However, to the best of our knowledge, the use of Thompson Sampling has not been investigated yet for dose-finding trials, and the present paper aims to fill this gap. We show that, unlike other bandit algorithms that are better suited for phase III trials, Thompson Sampling can indeed be naturally adapted to dose-finding trials. 

Our first contribution is a theoretical study in the context of MTD identification showing that the simplest version of Thompson Sampling based on independent prior distributions for each arm asymptotically minimizes the number of sub-optimal allocations during the trial. Albeit asymptotic, this sanity-check for Thompson Sampling with a simple prior motivates our investigation for its use with more realistic prior distributions, where theoretical guarantees are harder to obtain. Our second contribution is to show that Thompson Sampling using more sophisticated prior distributions can compete with state-of-the art dose-finding algorithms. We indeed show that the algorithm can exploit the monotonicity assumption on the toxicity probabilities that are common for MTD identification (Section~\ref{sec:TSIncreasing}), but also deal with more complex assumptions on both the toxicity and efficacy probabilities that are relevant for trials involving MTAs (Section~\ref{sec:TSEff}). Through extensive experiments on simulated clinical trials we show that our Thompson Sampling variants typically outperform state-of-the-art dose-finding algorithms. Finally, we propose a discussion revisiting the treatment versus experimentation trade-off through a bandit lens, and explain why an adaptation of existing best arm identification designs \citep{Bubeck10BestArm,Karnin13} seems currently less promising for dose-finding clinical trials.

The paper is structured as follows. In Section~\ref{sec:Bandits}, we present a multi-armed bandit (MAB) model for the MTD identification problem and introduce the Thompson Sampling algorithm. In Section~\ref{sec:Analysis}, we propose an analysis of Thompson Sampling with independent Beta priors on the toxicity of each dose: We provide finite-time upper-bounds on the number of sub-optimal selections, which match an (asymptotic) lower bound on those quantities. Then in Section \ref{sec:TS}, we show that Thompson Sampling can leverage the usual monotonicity assumptions in dose-finding clinical trials. 
In Section~\ref{sec:Experiments}, we report the results of a large simulation study to assess the quality of the proposed design. Finally in Section~\ref{sec:Discussion}, we propose a discussion on the use of alternative bandit methods. 
                  
\section{Maximum Tolerated Dose Identification as a Bandit Problem}\label{sec:Bandits}

In this section, we propose a simple statistical model for the MTD identification problem in phase I clinical trials and show that it can be viewed as a particular multi-armed bandit problem.

A dose-finding study involves a number $K$ of dose levels that have been chosen by physicians based on preliminary experiments ($K$ is usually a number between $3$ and $10$). Denoting by $p_k$ the (unknown) toxicity probability of dose $k$, the Maximum Tolerated Dose (MTD) is defined as the dose with a toxicity probability closest to a target:
\[k^* \in \argmin{k \in \{1,\dots,K\}} |\theta - p_k|,\]
where $\theta$ is the pre-specified targeted toxicity probability (typically between 0.2 and 0.35). For clinical trials in life-threatening diseases, efficacy is often assumed to be increasing with toxicity, hence the MTD is the most appropriate dose to further investigate in the rest of the trial. However, we shall see in Section~\ref{sec:TS} that under different assumptions the optimal dose may be defined differently. 

\subsection{A (Bandit) Model for MTD Identification}

A MTD identification algorithm proceeds sequentially: at round $t$ a dose $D_t \in \{1,\dots,K\}$ is selected and administered to a patient for whom a toxicity response is observed. A binary outcome $X_t$ is revealed where $X_t = 1$ indicates that a harmful side-effect occurred and $X_t=0$ indicates than no harmful side-effect occurred. We assume that $X_t$ is drawn from a Bernoulli distribution with mean $p_{D_t}$ and is independent from previous observations. The \emph{selection rule} for choosing the next dose level to be administered is sequential in that it uses the past toxicity observations to determine the dose to administer to the next patient. More formally, $D_t$ is $\cF_{t-1}$-measurable where $\cF_t= \sigma(U_0,D_1,X_1,U_1,\dots,D_t,X_t,U_t)$ is the $\sigma$-field generated by the observations made with the first $t$ patients and the possible exogenous randomness used in each round $t$, $U_{t-1} \sim \cU([0,1])$. Along with this selection rule, a ($\cF_{t}$-measurable) \emph{recommendation rule} $\hat{k}_t$ indicates which dose would be recommended as the MTD, if the experiments were to be stopped after $t$ patients. 

Usually in clinical trials the total number of patients $n$ is fixed in advance and the first objective is to ensure that the dose $\hat{k}_n$ recommended at the end of the trial is close to the MTD, $k^*$, but there is also an incentive to treat as many patients as possible with the MTD during the trial. Letting $N_k(t) = \sum_{s=1}^t\ind_{(D_s = k)}$ be the number of time dose $k$ has been given to one of the first $t$ patients, this second objective can be formalized as that of minimizing $N_k(n)$ for $k\neq k^*$.  In the clinical trial literature,  empirical evaluations of dose-finding designs usually report both the empirical distribution of the recommendation strategy $\hat{k}_n$ (that should be concentrated on the MTD) and estimates of $\bE[N_k(n)]/n$ for all doses $k$ to assess the quality of the selection strategy in terms of allocating MTD as often as possible.

The sequential interaction protocol described above is reminiscent of a stochastic multi-armed bandit (MAB) problem (see \cite{BanditBook18} for a recent survey). A MAB model refers to a situation in which an agent sequentially chooses arms (here doses) and gets to observe a realization of an underlying probability distribution (here a Bernoulli distribution with mean being the probability that the chosen dose is toxic). Different objectives have been considered in the bandit literature, but most of them are related to \emph{learning the arm with largest mean}, whereas in the context of clinical trials we are rather concerned with the arm which is the closest to some threshold. 

\subsection{Thompson Sampling for MTD Identification}

Early works on bandit models \citep{Robbins52,LaiRobbins85bandits} mostly consider a \emph{reward maximization} objective: The samples $(X_t)$ are viewed as rewards, and the goal is to maximize the sum of these rewards, which boils down to choosing the arm with largest mean as often as possible. This problem was originally introduced in the 1930s in the context of phase III clinical trials \citep{Thompson33}. In this context, each arm models the response to a particular treatment, and maximizing rewards amounts to giving the treatment with largest probability of success to as many patients as possible. This suggests a phase III trial is designed for treating as many patients as possible with the best treatment rather than identifying it. The trade-off between treatment and identification is also relevant for MTD identification: besides finding the MTD another objective is to treat as many patients as possible with it during the trial.

Reward maximization in a Bernoulli bandit model is a well-studied problem \citep{Jacko19Binary}. In particular, it is known since \citep{LaiRobbins85bandits} that any algorithm that performs well on every bandit instance should select each sub-optimal arm $k$ more than $C_k\log(n)$ times, where $C_k$ is some constant, in a regime of large values of $n$. Algorithms with finite-time upper bounds on the number of sub-optimal selections have been exhibited \citep{Aueral02,Audibertal09UCBV}, some of which match the aforementioned lower bound on the number of sub-optimal selections \citep{KLUCBJournal}. In the context of MTD identification, we are also concerned about \emph{ minimizing the number of sub-optimal selections} but with a different notion of optimal arm: the MTD instead of the arm with largest mean. 

Algorithms for maximizing rewards in a bandit model mostly fall in two categories: frequentist algorithms, based on upper-confidence bounds (UCB) for the unknown means of the arms (popularized by \cite{KatRob:95Gauss,Aueral02}) and Bayesian algorithms, that exploit a posterior distribution on the means (see, e.g. \cite{Powell12Book,AISTATS12}). Among those, Thompson Sampling (TS)  is a popular approach, known for its practical successes beyond simple bandit problems \citep{AGContext13,Agrawal17TSRL}. In the context of clinical trials,  variants of Thompson Sampling have been notably studied for phase III clinical trials involving two treatments (see \cite{Thall07} and references therein), or for adaptive trials involving interim analyses \citep{Berrys16Alzheimer}. Strong theoretical properties have also been established for this algorithm in simple models. In particular, Thompson Sampling was proved to be asymptotically optimal for Bernoulli bandit models \citep{ALT12,AGAISTAT13}.

Thompson Sampling, also known as probability matching, implements the following simple Bayesian heuristic. Given a prior distribution over the arms, at each round an arm is selected at random according to its posterior probability of being optimal. In this paper, we advocate the use of Thompson Sampling for dose-finding, using the appropriate notion of optimality. In particular, Thompson Sampling for MTD identification consists of selecting a dose at random according to its posterior probability of being the MTD. Given a prior distribution $\Pi^0$ on the vector of toxicity probabilities, $\bm p = (p_1,\dots,p_K) \in [0,1]^K$, a posterior distribution $\Pi^t$ can be computed by taking into account the first $t$ observations. A possible implementation of Thompson Sampling consists of drawing a sample $\bm \theta(t) = (\theta_1(t),\dots,\theta_K(t))$ from the posterior distribution $\Pi^t$ and selecting at round $t+1$ the dose that is the MTD in the sampled model: $D_{t+1} = \text{argmin}_{k} \ |\theta_k(t) - \theta|$. There are several possible choices for the recommendation rule $\hat k_t$, which are discussed in the upcoming sections.


\subsection{Why Thompson Sampling?} 

Thompson Sampling is by far not the only existing bandit algorithm, yet other algorithms may not be as easily adaptable to the MTD identification problem, which justifies our focus on this algorithm.

Indeed, Thompson Sampling only requires defining some notion of \emph{optimal arm} (or arm to discover), which is naturally defined as the arm with mean closest to the threshold $\theta$ in the MTD identification problem. Many other popular bandit algorithms instead require a \emph{value} to be assigned to each sampled arm, and require the optimal arm to be the arm with largest expected value. This is the case for the frequentist \emph{optimistic} (UCB) algorithms (see, e.g., \cite{Aueral02,KLUCBJournal}), which construct confidence intervals on the expected value of each arm and select the arm which has the largest statistically plausible expected value (i.e. the largest Upper Confidence Bound). Adapting this optimism in face of uncertainty principle for MTD identification is not straightforward: one can certainly build confidence intervals on the toxicity probability of each dose (several of them may contain the MTD), but there is no natural way to define a ``best plausible value'' for each dose in that case. 

In the literature on Bayesian ranking and selection, value-based approaches have also been proposed.  Some algorithms are indeed based on defining some Expected Value of Information \citep{Chick06}. Among those, knowledge gradient methods \citep{Powell12Book} are particularly interesting since they permit handling correlations between arms. For example \cite{XieFrazier16} consider a prior distribution over the arms' means which is a multivariate Gaussian, and \cite{Wang16KGBinary} consider a Bayesian logistic model (where a Laplace approximation is used for Bayesian inference). However, the proposed algorithms are both tailored to finding an arm $a$ maximizing $\bE[V(a,D)]$ for some function $V$ that depends on a random variable $D$ under which the expectation is taken (like other algorithms from the Bayesian Optimization (BO) literature \citep{Brochu10Tuto}). The MTD identification problem cannot naturally be cast in this framework, and adapting, e.g., knowledge gradient methods would require defining an appropriate notion of value of information in this setting. This is why we focused on a Bayesian approach which is easier to adapt to MTD identification, Thompson Sampling.  

\section{Independent Thompson Sampling: an Asymptotically Optimal Algorithm} \label{sec:Analysis}

Inspired by the bandit literature, we introduce the simplest version of Thompson Sampling, that assumes independent uniform prior distributions  on the probability of toxicity of each dose. We refer to this algorithm as {Independent Thompson Sampling} and propose some theoretical guarantees for it.

\subsection{Algorithm Description}

The prior distribution on $\bm{p} = (p_1,\dots,p_K)$ is $\Pi^0 = \bigotimes_{i=1}^{K} \pi_k^0$, where $\pi_k^0 = \cU([0,1])$ is a uniform distribution. Letting $\pi_k^t$ be the posterior distribution of $p_k$ given the observations from the first $t$ patients, the posterior distribution also has a product form, $\Pi^t =\bigotimes_{i=1}^{K} \pi_k^t$. Moreover, each $\pi_k^t$ can be made explicit: $\pi_k^t$ is a $\text{Beta}(S_k(t) + 1, N_k(t) - S_k(t)+1)$ distribution where $S_k(t) = \sum_{s=1}^t X_s \ind_{(D_s = k)}$ is the sum of rewards obtained from arm $k$ and $D_s$ is the dose allocated at time $s$. 

The selection rule of Independent Thompson Sampling is simple: a sample from the posterior distribution on the toxicity probability of each dose is generated, and the dose for which the sample is closest to the threshold is selected: 
\[\left\{\begin{array}{cl}
& \forall k \in \{1,K\}, \  \theta_k(t) \sim \pi_k^t \\
& D_{t+1} = \text{argmin}_{k} \ |\theta_k(t) - \theta|.
\end{array}\right.\]
Several recommendation rules may be used for Independent Thompson Sampling. As the randomization induces some exploration, recommending $\hat{k}_t = D_{t+1}$ is not a good idea. Inspired by what is proposed by 
\cite{Bubeckal11} for assigning a recommendation rule to rewards maximizing algorithms, a first idea is to recommend $\hat{k}_t= \text{argmin}_{k} \ |\hat{\mu}_k(t) - \theta|$, where $\hat{\mu}_k(t)$ is the empirical mean of dose $k$ after the $t$-th patient of the study. Leveraging the fact that TS is supposed to allocate the MTD most of the time, we could also select $\hat{k}_t= \text{argmax}_{k} \ N_k(t)$ or pick $\hat{k}_t$ uniformly at random among the allocated doses.

\subsection{Upper Bound on the Number of Sub-Optimal Selections}

For the classical rewards maximization problem, the first finite-time analysis of Thompson Sampling for Bernoulli bandits dates back to \cite{AGCOLT12} and was further improved by \cite{ALT12,AGAISTAT13}. 
In Appendix~\ref{proof:TS}, building on the analysis of \cite{AGAISTAT13}, we prove the following for Thompson Sampling applied to MTD identification. 

\begin{theorem}\label{thm:TS}  Introducing for every $k\neq k^*$ the quantity  
\[d_k^* := \argmin{d \in \{p_{k^*},2\theta - p_{k^*}\}} \ |p_k - d |,\]
Independent Thompson Sampling satisfies the following. For all $\varepsilon >0$, there exists a constant $C_{\varepsilon,\theta,\bm p}$ (depending on $\varepsilon$, the threshold $\theta$ and the toxicity probabilities) such that for all $k : |p_k - \theta| \neq |\theta - p_{k^*}|$, 
\[
\bE[N_{k}(n)]  \leq \frac{1 + \varepsilon}{\mathrm{kl}(p_k,d_k^*)} \log(n) + C_{\varepsilon,\theta,\bm p},
\]
where $\mathrm{kl}(x,y) = x\log(x/y)+(1-x)\log((1-x)/(1-y))$ is the binary Kullback-Leibler divergence.
\end{theorem}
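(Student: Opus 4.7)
The plan is to adapt the Agrawal--Goyal (2013) analysis of Thompson Sampling for Bernoulli bandits to the MTD setting, replacing the ``largest mean'' notion of optimality by the ``closest to threshold'' notion. The central geometric observation is this: a suboptimal dose $k$ can only be chosen at round $t+1$ if $|\theta_k(t) - \theta| \le |\theta_{k^*}(t) - \theta|$, and since $\theta_{k^*}(t)$ concentrates around $p_{k^*}$, this essentially forces $\theta_k(t)$ to fall inside the symmetric interval
\[
I := \bigl[\min(p_{k^*}, 2\theta - p_{k^*}),\ \max(p_{k^*}, 2\theta - p_{k^*})\bigr]
\]
of half-width $|p_{k^*} - \theta|$ around $\theta$. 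The endpoint of $I$ closest to $p_k$ is exactly $d_k^*$, which is why this quantity governs the rate.

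First, I would fix $\varepsilon > 0$ and choose an intermediate value $d$ strictly between $p_k$ and $d_k^*$, close enough to $d_k^*$ that $\mathrm{kl}(p_k, d) \ge \mathrm{kl}(p_k, d_k^*)/(1+\varepsilon)$ (possible by continuity and by the assumption $|p_k - \theta| \neq |p_{k^*} - \theta|$, which ensures $p_k \neq d_k^*$). I would also pick a slightly shrunken interval $I'$ strictly inside $I$, containing $p_{k^*}$, whose endpoint nearest $p_k$ is on the $p_k$-side of $d$. Then I would decompose
\[
\E[N_k(n)] \le \sum_{t=0}^{n-1} \P\bigl(D_{t+1}=k,\, \theta_k(t)\text{ is on the $I$-side of } d\bigr) + \sum_{t=0}^{n-1} \P\bigl(D_{t+1}=k,\, \theta_k(t)\text{ is on the $p_k$-side of } d\bigr).
\]

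For the first sum, I would follow the ``concentration of the suboptimal arm'' argument of Agrawal--Goyal: conditionally on $N_k(t)=s$, the probability that a Beta posterior drawn around an empirical mean close to $p_k$ crosses the threshold $d$ toward $I$ is controlled by Chernoff-type bounds involving $\mathrm{kl}(p_k,d)$. Summing over $s$ and collecting constants yields the dominant term $\log(n)/\mathrm{kl}(p_k,d) \le (1+\varepsilon)\log(n)/\mathrm{kl}(p_k, d_k^*)$. For the second sum, I would invoke the Thompson Sampling ``probability matching'' trick: on rounds where $\theta_k(t)$ is on the $p_k$-side of $d$ (hence far from $I$), the conditional probability of selecting $k$ can be bounded by $(1/q_t - 1)$ times the conditional probability of selecting $k^*$, where $q_t$ is the posterior probability that $\theta_{k^*}(t) \in I'$. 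Because $p_{k^*}$ lies strictly inside $I'$, the arguments of Lemma~4 in Agrawal--Goyal show that $\sum_t \E[1/q_t - 1]$ is $O(1)$, absorbing into the constant $C_{\varepsilon,\theta,\bm p}$.

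The main obstacle will be the two-sided nature of ``closest to $\theta$'', which is the genuine departure from the largest-mean setup: the correct substitute for the event $\{\theta_k(t) > \theta_{k^*}(t)\}$ is the event that $\theta_k(t)$ falls within the symmetric interval $I$ around $\theta$, so one must carefully verify that the posterior-inversion step (the Agrawal--Goyal Lemma relating arm-$k$ selection probabilities to arm-$k^*$ selection probabilities) remains valid when the comparison is $|\theta_k(t) - \theta| \le |\theta_{k^*}(t) - \theta|$ rather than $\theta_k(t) \ge \theta_{k^*}(t)$. This requires redoing the conditional computation on the $\sigma$-field generated by $(\theta_j(t))_{j\neq k^*}$: given these samples, $k^*$ is preferred to $k$ iff $\theta_{k^*}(t)$ lies in an interval of the form $[\theta - |\theta_k(t)-\theta|,\, \theta + |\theta_k(t)-\theta|]$, which is not generally monotone in $\theta_k(t)$. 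However, restricting to $\theta_k(t)$ on the $p_k$-side of $d$ makes this interval contain $I'$ (by our choice of $d$ and $I'$), and the resulting lower bound on the conditional probability of selecting $k^*$ is precisely what drives the $1/q_t$-term machinery. Once this adaptation is in place, the rest of the proof is a routine replay of Agrawal--Goyal.
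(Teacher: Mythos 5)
Your plan follows the same route as the paper's proof: adapt Agrawal--Goyal by replacing the one-sided comparison with the requirement that the optimal arm's sample land in a symmetric interval around $\theta$, decompose $\E[N_k(n)]$ according to whether $\theta_k(t)$ has crossed an intermediate point $d$ (the paper's $y$), control the crossing rounds by Beta/Binomial concentration to get the $\log n/\kl$ term, and control the remaining rounds through an inversion lemma with a $(1/q_t-1)$ factor (the paper's Lemma~\ref{lem:CrucialAG}). One problem is your specification of $I'$, which is internally inconsistent and, taken literally, breaks the inversion step: an interval strictly inside $I$ cannot contain $p_{k^*}$ (an endpoint of $I$), and if the endpoint of $I'$ nearest $p_k$ were on the $p_k$-side of $d$, then for $\theta_k(t)$ just barely on the $p_k$-side of $d$ the comparison interval $[\theta-|\theta_k(t)-\theta|,\,\theta+|\theta_k(t)-\theta|]$ is only guaranteed to contain $[d,2\theta-d]$ and hence need not contain $I'$, so $\theta_{k^*}(t)\in I'$ would no longer force $k^*$ to beat $k$. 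What the argument needs is the opposite of what you wrote: $I'\subseteq[d,2\theta-d]$ with $p_{k^*}$ in its interior; the paper takes $I'=[y,2\theta-y]$ with $y=d<d_k^*$, an \emph{enlargement} of $I$, not a shrinkage. This is easily repaired, but as stated your three conditions on $I'$ cannot hold simultaneously.

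The substantive gap is your claim that, because $p_{k^*}$ is interior to $I'$, ``the arguments of Lemma~4 in Agrawal--Goyal'' give $\sum_t\E[1/q_t-1]=O(1)$. That lemma controls the expected reciprocal of a one-sided tail $\P(\theta_{k^*}(t)>y)$, which via the Beta--Binomial trick is a single Binomial CDF; here $q_t$ is the probability that a Beta sample falls in an \emph{interval}, i.e.\ a difference of two such CDFs, and lower-bounding it is exactly where the paper has to do genuinely new work. Its Lemma~\ref{lem:Technical}, proved by a four-case Sanov-type argument, shows that for all large enough posterior sample counts the interval probability is at least half the smaller of the two one-sided probabilities; only then can the one-sided machinery (the bound of Lemma~\ref{lem:UB}) be applied to each side, with the finitely many small counts absorbed into $C_{\varepsilon,\theta,\bm p}$. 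Without such a two-sided bound the ``routine replay'' does not go through: for atypical success counts the Beta mass sits almost entirely on one side of $I'$, and one must still show that the Binomial-weighted expectation of the reciprocal interval probability is summable. A minor further point: to end up with a kl close to $\kl(p_k,d_k^*)$ you also need the second threshold $x$ on the empirical mean, $p_k<x<d$, as in the paper's events $E_k^\mu$ and $E_k^\theta$; your single parameter $d$ hides this step.
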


Theorem~\ref{thm:TS} shows that the total number of allocations to a sub-optimal dose in a trial involving $n$ patients is logarithmic in $n$, which justifies that the MTD is given most of the time, at least in a regime of large values of $n$ (as the second order term can be large). Also, this bounds tells us that in this regime each sub-optimal dose is allocated in inverse proportion of $\kl(p_k,d_k^*)$, which can be seen as a distance between dose $k$ and an optimal dose with toxicity probability $d_k^*$ which is illustrated in Figure~\ref{fig:doses}.  

\begin{figure}\centering
 \includegraphics[height=6cm,angle=-90]{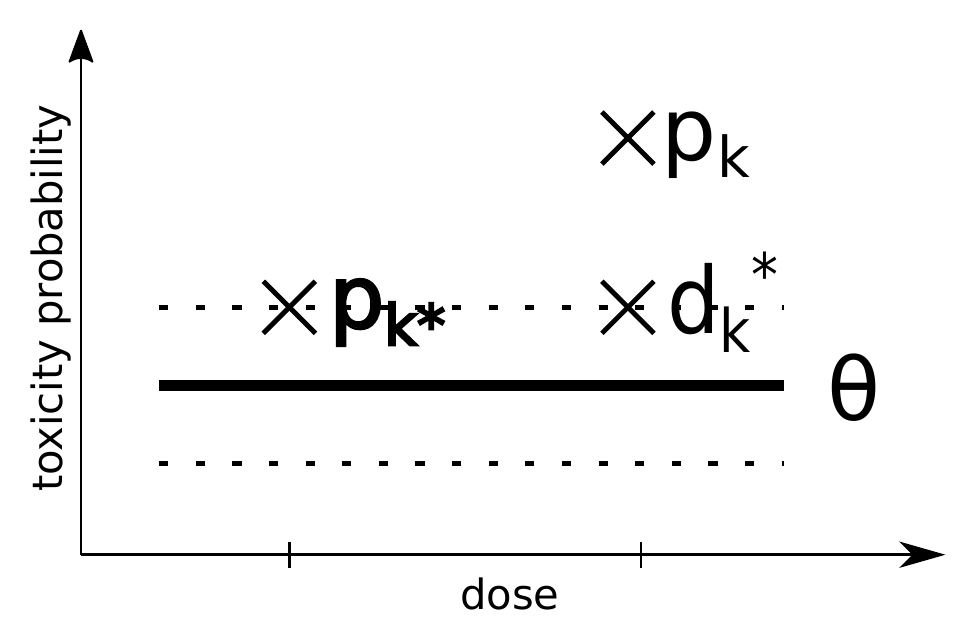}\hspace{1.2cm}
  \includegraphics[height=6cm,angle=-90]{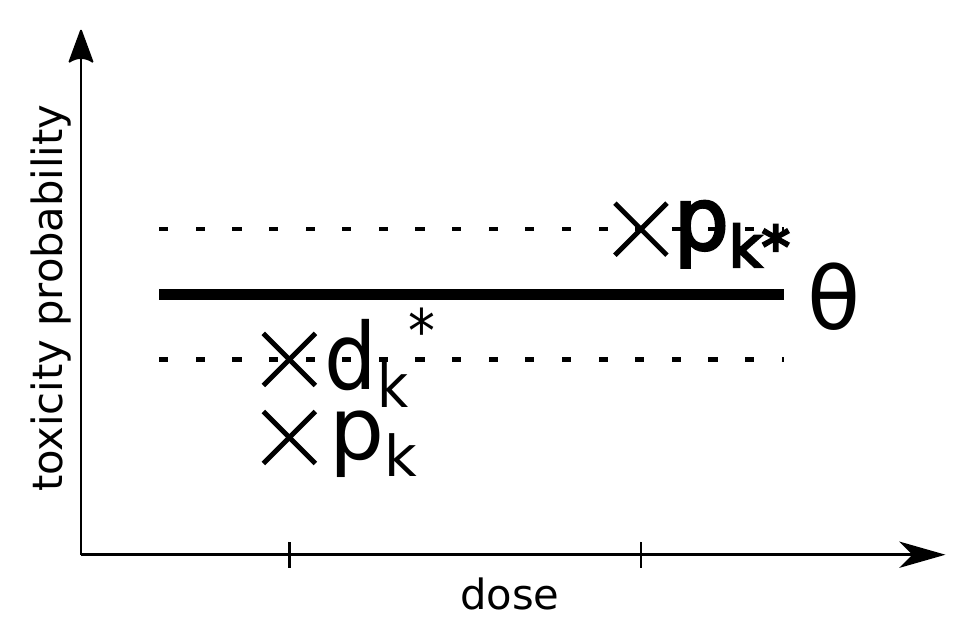}
  \caption{\label{fig:doses}
  Optimal dose $d_k^*$ associated with dose $k$.
  In some cases $d_k^*=p_{k^*}$ (left),
  in others $d_k^* = 2\theta - p_{k^*}$ (right),
  which is symmetric to the MTD with respect to threshold $\theta$.
  }
\end{figure}

The lower bound given in Theorem~\ref{thm:LB} below furthermore shows that Independent Thompson Sampling actually achieves the \emph{minimal number of sub-optimal allocations} when $n$ grows large.

\begin{theorem}\label{thm:LB} We define a uniformly efficient design as a design satisfying for all possible toxicity probabilities $\bm p$, for all $\alpha \in ]0,1[$, for all $k : |\theta - p_k| \neq |\theta - p_{k^*}|$, $\bE[N_k(n)] = o(n^\alpha)$ when $n$ goes to infinity. If $p_{k^*} \neq \theta$, any uniformly efficient design satisfies, for all $k$: $|\theta - p_k| \neq |\theta - p_{k^*}|$, 
\begin{align*}
\liminf_{n \rightarrow \infty} &\frac{\bE[N_k(n)]}{\log(n)}
	\geq \frac{1}{\mathrm{kl}(p_k,d_k^*)}.
\end{align*}
\end{theorem}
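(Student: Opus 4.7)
The plan is to adapt the classical Lai--Robbins change-of-measure argument, in the form packaged by Kaufmann, Cappé and Garivier, to the MTD notion of optimality. Fix a sub-optimal dose $k$, so $|p_k - \theta| > |p_{k^*} - \theta|$. The backbone is the information-theoretic inequality: for any alternative instance $\bm p' = (p_1', \ldots, p_K')$ and any event $A \in \cF_n$,
\[
\sum_{j=1}^{K} \bE_{\bm p}[N_j(n)] \, \mathrm{kl}(p_j, p_j') \;\geq\; \mathrm{kl}\!\bigl(\bP_{\bm p}(A), \bP_{\bm p'}(A)\bigr).
\]
I will choose $\bm p'$ agreeing with $\bm p$ except in the $k$-th coordinate, so the left-hand side collapses to the single term $\bE_{\bm p}[N_k(n)] \, \mathrm{kl}(p_k, p_k')$.

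The definition of $d_k^*$ is precisely engineered to select the cheapest such confusing alternative. The only two points on the real line at distance $|p_{k^*} - \theta|$ from $\theta$ are $p_{k^*}$ and $2\theta - p_{k^*}$, and $d_k^*$ is whichever of them is closer to $p_k$; moving $p_k$ to $d_k^*$ therefore ties arm $k$ with arm $k^*$ for the MTD by the shortest Euclidean jump. I would then take $p_k'$ strictly between $d_k^*$ and $\theta$, specifically so that $|p_k' - \theta| = |p_{k^*} - \theta| - \delta$ for an arbitrarily small $\delta > 0$. This makes $k$ the unique MTD under $\bm p'$ while every other arm remains strictly sub-optimal, and by continuity $\mathrm{kl}(p_k, p_k') \to \mathrm{kl}(p_k, d_k^*)$ as $\delta \to 0$.

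To exploit the inequality I take $A = \{N_k(n) \geq n/2\}$. Under $\bm p$ dose $k$ is sub-optimal, so Markov's inequality and uniform efficiency give $\bP_{\bm p}(A) \leq 2\bE_{\bm p}[N_k(n)]/n = o(n^{\alpha - 1})$ for every $\alpha \in (0,1)$; under $\bm p'$ every arm $j \neq k$ is sub-optimal, so symmetrically $\bP_{\bm p'}(A^{c}) = o(n^{\alpha - 1})$. Combining the elementary bound $\mathrm{kl}(x,y) \geq (1 - x)\log(1/(1-y)) - \log 2$ with these two estimates yields $\mathrm{kl}(\bP_{\bm p}(A), \bP_{\bm p'}(A)) \geq (1 - \alpha - o(1)) \log n$, hence
\[
\liminf_{n \to \infty} \frac{\bE_{\bm p}[N_k(n)]}{\log n} \;\geq\; \frac{1 - \alpha}{\mathrm{kl}(p_k, p_k')}.
\]
Letting $\alpha \to 0$ and then $\delta \to 0$ closes the proof.

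The main conceptual obstacle is identifying the right alternative model, i.e.\ recognizing that $d_k^*$ captures the information-theoretically cheapest confusing perturbation of $p_k$ that swaps the identity of the MTD between $\bm p$ and $\bm p'$. Once that observation is in hand, the remaining technical points --- checking that $p_k'$ can always be chosen inside $(0,1)$ (which may require a brief case split depending on whether $2\theta - p_{k^*} \in [0,1]$), and grinding through the Markov-plus-KL estimate --- are routine applications of the standard lower-bound machinery.
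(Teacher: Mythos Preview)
Your proposal is correct and follows essentially the same change-of-measure argument as the paper: the paper also perturbs only the $k$-th coordinate to $d_k^* \pm \varepsilon$ so that $k$ becomes the unique MTD in the alternative, applies the same information-theoretic inequality (citing \cite{GMS18}), uses the same lower bound $\kl(x,y)\ge(1-x)\log\frac{1}{1-y}-\log 2$, and lets $\alpha\to 0$ then $\varepsilon\to 0$. The only cosmetic difference is that the paper takes $Z_T=N_k(T)/T$ and works with $\kl(\bE_{\bm p}[Z_T],\bE_{\bm p'}[Z_T])$ directly, whereas you pass through the event $\{N_k(n)\ge n/2\}$ and Markov's inequality; these are interchangeable standard variants of the Lai--Robbins technique.
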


Theorem~\ref{thm:LB} can be viewed as a counterpart of the Lai and Robbins lower bound for classical bandits \citep{LaiRobbins85bandits} 
and can be easily derived using recent change-of-measure tools (see \cite{GMS18}). Its proof is given in Appendix~\ref{proof:LB} for the sake of completeness.  

\subsection{Upper Bound on the Error Probability}

If the recommendation rule $\hat{k}_n$ consists of selecting uniformly at random a dose among the doses that were allocated during the trial,  $\{D_1,\dots,D_n\}$, it follows from Theorem~\ref{thm:TS} that 
\begin{equation}\bP\left(\hat k_n \neq k^*\right) = \sum_{k \neq k^*} \frac{\bE[N_k(n)]}{n} \leq \frac{D\ln(n)}{n},\label{UpperBoundError}\end{equation}
where $D$ is a (possibly large) problem-dependent constant. Hence finite-time upper bounds on the number of sub-optimal selection lead to \emph{non-asymptotic upper bound on the error probability} of the design. Note that for the state-of-the-art dose-finding designs it is not known whether such results can be obtained; the only results available provide conditions for \emph{consistency}. For example \cite{ShenOQuigley96,CheungChappell02} exhibit some conditions on the toxicity probabilities under which a classical design called the CRM is such that $\hat{k}_n$ converges almost surely to $k^*$. 

This being said, the upper bound \eqref{UpperBoundError} is not very informative, as a very large number of patients is needed for the upper bound to be at least smaller than 1, and one could expect to have an upper bound that is exponentially decreasing with $n$. As we shall see in Section~\ref{sec:Discussion}, an adaptation of a best arm identification algorithm \citep{Karnin13} leads to such an upper bound, but may be less desirable for clinical trials from an ethical point of view. This is why we rather chose to investigate in what follows several variants of Thompson Sampling coupled with an appropriate recommendation rule.

By using uniform and independent priors on each toxicity probability, Independent Thompson Sampling is the simplest possible implementation of Thompson Sampling. We now explain that using a more sophisticated prior distribution allows the algorithm to leverage some particular constraints of the dose-finding problem, like increasing toxicities or a plateau of efficacy.

\section{Exploiting Monotonicity Constraints with Thompson Sampling}\label{sec:TS}

Independent Thompson Sampling is an adaptation of a state-of-the-art bandit algorithm for identifying the MTD that does not leverage any prior knowledge on (e.g.) the ordering of the arms' means. While it can be argued that when testing drug combinations no natural ordering between the doses exists (see, e.g., \cite{Mozgunov17CT}), in most cases monotonicity assumptions can speed up learning. 

\smallskip

A typical assumption in phase I studies is that both efficacy and toxicity are increasing with the dose. We show in Section~\ref{sec:TSIncreasing} that Thompson Sampling using an appropriate prior is competitive with state-of-the-art phase I approaches leveraging the monotonicity. In Section \ref{sec:TSEff}, we further show that Thompson Sampling is a flexible method that can be useful in phase I/II trials, under more complex monotonicity assumptions on both toxicity an efficacy. More specifically, we show it can handle an efficacy ``plateau,'' where efficacy may be non-increasing after a certain dose level.   

\subsection{Thompson Sampling for Increasing Toxicities: A Phase I Design}\label{sec:TSIncreasing}

In a phase I study in which both toxicity and efficacy are increasing with the dose, the MTD is the most relevant dose to allocate in further stages. We now focus on algorithms leveraging the extra information that $p_1 \leq \dots \leq p_k$. 
To exploit this structure, \emph{escalation procedures} have been developed in the literature, the most famous being the ``3+3'' design \citep{storer89}. In this design, adjusted for $\theta = 0.33$, the lowest dose is first given to 3 patients. If no patient experiences toxic effects, one escalates to the next dose and repeats the process. If one patient experiences toxicity, the dose is given to 3 more patients, and if less than two patients among the 6 experience toxicity, one escalates to the next dose. Otherwise the trial is stopped, which is also the case if from the beginning 2 out of the 3 patients experience a toxic effect. Upon stopping, the previous dose is recommended as the MTD, or all doses are deemed too toxic if one stops at the first dose level. Although it is clear that the guarantees in terms of error probability (or sub-optimal selections) are very weak, ``3+3'' is still often used in practice.

Alternative to this first design are variants of the Continuous Reassessment Method (CRM), proposed by \cite{OQuigley90CRM}. The CRM uses a Bayesian model that combines a parametric dose/toxicity relationship with a prior on the model parameters. Under this model, CRM appears as a greedy strategy that selects in each round the dose whose expected toxicity under the posterior distribution is closest to the threshold. We propose in this section several variants of Thompson Sampling based on the same Bayesian model, but that favor (slightly) more exploration.

\paragraph{A Bayesian model for increasing toxicities} In the CRM literature, several parametric models that yield an increasing toxicity have been considered. In this paper, we choose a two-parameter logistic model that is among the most popular. Under this model, each dose $k$ is assigned an \emph{effective dose} $u_k$ (that is usually not related to a true dose expressed in a mass or volume unit) and the toxicity probability of dose $k$ is given by 
\begin{align*}
p_k(\beta_0,\beta_1) &= \psi(k,\beta_0,\beta_1), \ \ \text{ where } \ \
\psi(k,\beta_0,\beta_1) = \frac{1}{1 + e^{-\beta_0-\beta_1u_k}}.
\end{align*}
A typical choice of prior is 
\[
\beta_0 \sim \mathcal{N}(0,100) \ \ \text{and} \ \ \beta_1 \sim \mathrm{Exp}(1). \]
It is worth noting that this model also heavily relies on the distinct effective dose levels $u_1,\dots,u_K$ that are usually chosen depending on some \emph{prior toxicities} set by physicians, $p^0_1 \leq p^0_2 \leq \dots \leq p^0_K$. Letting $\overline{\beta_0}$, $\overline{\beta_1}$ be the prior mean of each parameter, the effective doses are calibrated such that for all $k$, $\psi(k,\overline{\beta_0},\overline{\beta_1}) = p_k^0$. If there is no medical prior knowledge about the toxicity probabilities, some heuristics for choosing them in a robust way have been developed (see Chapter 9 of \cite{CRMBook}).
  
Under this model, given some observations from the different doses one can compute the posterior distribution over the parameters $\beta_0$ and $\beta_1$; that is, the conditional distribution of these parameters given the observations. Although there is no closed form for these posterior distributions, they can be easily sampled from using Hamiltonian Monte-Carlo Markov Chain algorithms (HMC) as the log-likelihood under these models is differentiable. In practice, we use the Stan implementation of these Monte-Carlo sampler \citep{StanManual}, and use (many) samples to approximate integrals under the posterior when needed.    
 
\subsubsection{Thompson Sampling}\label{sec:CRMvsTS}

Thompson Sampling selects a dose at random according to its posterior probability of being the MTD. Under the two-parameter Bayesian logistic model presented above, letting $\pi_t$ denote the posterior distribution on $(\beta_0,\beta_1)$ after the first $t$ observations, the posterior probability that dose $k$ is the MTD is
\begin{align*}
{q}_k(t) &: = \bP\left(\left. k = \argmin{\ell} |\theta - p_\ell(\beta_0,\beta_1)| \right| \cF_t\right)
\\
&= \int_{\R} \ind{\left(k = \argmin{\ell} |\theta - p_\ell(\beta_0,\beta_1)|\right)} d\pi_t(\beta_0,\beta_1).
\end{align*}

A first possible implementation of Thompson Sampling that we use in our experiments consists of computing approximations $\hat{q}_k(t)$ of the probabilities ${q}_k(t)$ (using posterior samples) and selecting at round $t+1$ a dose $D_{t+1} \sim \hat{\bm q}(t)$, i.e. such that $\bP\left(D_{t+1} = k | \cF_t\right) = \hat{q}_k(t)$.
A second implementation of Thompson Sampling (that may be computationally easier) consists of drawing one sample from the posterior distribution of $(\beta_0,\beta_1)$, and selecting the MTD in the sampled model:  
\begin{align}
         \left(\tilde \beta_0(t), \tilde \beta_1(t)\right) & \sim \pi_t, \nonumber\\
          D_{t+1}^{\text{TS}} & \in \argmin{k \in \{1,\dots,K\}} \ \left|\theta - p_k\left(\tilde \beta_0(t),\tilde \beta_1(t)\right)\right|.\label{eq:SampleTS}
\end{align}
It is easy to see that this algorithm coincides with Thompson Sampling in that $\bP\left(D_{t+1}^{\text{TS}} = k | \cF_t\right) = {q}_k(t)$. We will present below a variant of Thompson Sampling based on the first implementation (${\mathrm{TS}\_\mathrm{A}}$) and a variant based on the second implementation (${\mathrm{TS}(\varepsilon)}$).

\paragraph{Recommendation rule} Due to the randomization, Thompson Sampling performs more exploration than the ``greedy'' CRM \citep{OQuigley90CRM} method, which selects at time $t$ the MTD under the model parameterized by $(\hat\beta_0,\hat\beta_1)$, the posterior means of the two parameters, given by
\begin{equation}\label{eq:CRMPostMean}
\hat{\beta}_0(t) = \int_\R \beta_0 d\pi_{t}(\beta_0,\beta_1) \ \ \ \ \ \text{and} \ \ \ \ \ \hat{\beta}_1(t) = \int_\R \beta_1 d\pi_{t}(\beta_0,\beta_1).
\end{equation}
More precisely, the sampling rule of the CRM is
\begin{align*}
D_{t+1}^{\text{CRM}} \in \argmin{k \in \{1,\dots,K\}} \left|\theta - p_k(\hat\beta_0(t),\hat\beta_1(t))\right|.
\end{align*}
The recommendation rule for CRM after $t$ patients is identical to the next dose that would be sampled under this design, that is $\hat{k}_t^{\text{CRM}} = D_{t+1}^{\text{CRM}}$. For Thompson Sampling, due to the more exploratory nature of the algorithm, we do not want to recommend $\hat{k}_t^{\text{TS}} = D_{t+1}^{\text{TS}}$. Instead, we propose the use of recommendation rule $\hat{k}_t^{\text{TS}} = \underset{k \in \{1,\dots,K\}}{\text{argmin}} \ |\theta - p_k(\hat\beta_0(t),\hat\beta_1(t))|$, which coincides with that of the CRM.

\subsubsection{Two variants of Thompson Sampling}\label{subsec:ParameterTuning}

The randomized aspect of Thompson Sampling makes it likely to sample from large or small doses, without respecting some ethical constraints of phase I clinical trials. Indeed, patients should not be exposed to too-high dose levels; overdosing should be controlled. Hence, we also propose two ``regularized'' versions of TS. The first depends on a parameter $\varepsilon>0$ set by the user that ensures that the expected toxicity of the recommended dose remains within $\varepsilon$ of the toxicity of the empirical MTD. The second restricts the doses to be tested to a set of \emph{admissible doses}. These algorithms are formally defined below, and their performance is evaluated in Section~\ref{sec:Experiments}.

\paragraph{$\bm{\mathrm{TS}(\varepsilon)}$}  We first compute the posterior means $\hat\beta_0(t),\hat{\beta}_1(t)$ from \eqref{eq:CRMPostMean} and the toxicity of the dose closest to $\theta$ under the model parameterized by $(\hat\beta_0(t),\hat{\beta}_1(t))$ (i.e., the toxicity of the dose selected by the CRM): 
\[
\hat{p}(t) = p_{\hat k_{t}}(\hat\beta_0(t),\hat\beta_1(t)), \ \ \
\text{with} \ \ \ \hat k_{t} =\argmin{k \in \{1,\dots,K\}}\left|\theta - p_k(\hat\beta_0(t),\hat\beta_1(t))\right| 
\]
Next we sample $\tilde \beta_0(t), \tilde{\beta}_1(t)$ from the posterior distribution $\pi_t$
and select a candidate dose level $D_{t+1}$ using \eqref{eq:SampleTS}.
If the predicted toxicity level $p_{D_{t+1}}(\hat\beta_0(t),\hat{\beta}_1(t))$ is not in the
interval $(\hat{p}(t)-\varepsilon, \hat{p}(t)+\varepsilon)$, then we reject
our values of $\tilde\beta_0(t),\tilde{\beta}_1(t)$, draw a new sample from $\pi_t$ and repeat the process. 
In order to guarantee that the algorithm terminates, we only reject up to
50 samples, after which we use the sample that gives the dose with minimum toxicity among all $50$ samples. We choose 50 to limit the computational complexity of the algorithm, but it can also be replaced by a larger value if more computational power is available.

$\bm{\mathrm{TS}(\varepsilon)}$ can be seen as a smooth interpolation between the CRM (which correspond to $\varepsilon = 0$) and vanilla Thompson Sampling (which corresponds to $\varepsilon = 1$). Regarding the tuning of the parameter $\varepsilon$, large values do not reduce much the amount of exploration while too small values lead to a behavior which is indistinguishable from that of the CRM. We did (large scale) experiments with $\varepsilon \in \{0.02, 0.05, 0.1\}$ and we found that the three values lead to comparable performance across the different scenarios we tried. To ease the presentation, we report results for TS($0.05$) only in Section~\ref{sec:Experiments}.

\paragraph{$\bm{\mathrm{TS}\_\mathrm{A}}$} The $\mathrm{TS}\_\mathrm{A}$ algorithm  limits exploration by enforcing the selected dose to be in some admissible set $\cA_t$, by sampling from the modified distribution 
\[\bP\left(D_{t+1} = k | \cF_{t}\right) = \frac{\hat{q}_k(t) \ind_{\left(k \in \cA_t\right)}}{\sum_{\ell \in \cA_t} \hat{q}_{\ell}(t)},\] 
instead of sampling directly from $\bm{\hat{q}}(t)$ as vanilla Thompson Sampling does. 
The admissible set  $\cA_t$ is defined as set of  doses that meet the following two criteria:
\begin{enumerate}
    \item dose $k$ has either already been tested, or is the next-smallest dose which has not yet been tested    
    \item the posterior probability that the toxicity of dose $k$ exceeds the toxicity of the dose closest to $\theta$ is smaller than some threshold:
\[
   \bP\Bigg(
    \psi(k,\beta_0,\beta_1) > \psi(k',\beta_0,\beta_1), 
    \text{ where } k' = \argmin{k' \in \{1,\dots,K\}}\left|\theta - \psi(k', \beta_0,\beta_1)\right|
    \Bigg| \cF_{t} \Bigg) \leq c_1.
\]
%
\end{enumerate}
$\cA_t$ is inspired by the admissible set of \cite{MKR17} described in detail in the next section.

In our experiments, we tried different values of the parameter $c_1$ and we found that the performance of TS$\_$A is better with values of $c_1$ that are not too small. In Section~\ref{sec:Experiments}, we report experiments with $c_1 = 0.8$, but the performance was comparable for the choices $c_1 = 0.6$ or $0.9$.

\subsection{Thompson Sampling for Efficacy Plateau Models: A Phase I/II Design}\label{sec:TSEff}


In some particular trials, it has been established that efficacy is not always increasing with the dose. Motivated by some concrete examples discussed in their paper, \cite{MKR17} consider a model in which the dose effectiveness can plateau after some unknown level, while toxicity still increases with dose level. In these models, MTD identification is no longer relevant and the objective is rather to identify the smallest dose with maximal efficacy and with toxicity no more than $\theta$. More formally, introducing $\text{eff}_k$ the efficacy probability of dose $k$, the Minimal Effective Dose (MED) is 
\[k^* = \min\left\{ k : \eff_k = \max_{\ell : p_\ell \leq \theta} \ \text{eff}_\ell\right\}  \]

In a dose-finding study involving efficacy, at each time step $t$ a dose $D_t$ is allocated to the $t$-th patient, and the toxicity $X_t$ is observed, as well as the efficacy $Y_t$. With this two-dimensional observation, assigning a value (or reward) to each sampled arm is even less natural than before. However as one can still define a notion of optimal dose (the MED instead of the MTD), Thompson Sampling  can still be applied in this setting. As we shall see, it bears some similarities to the state-of-the-art method developed by \cite{MKR17}.

\paragraph{A Bayesian model for toxicity and efficacy} Thompson Sampling requires a Bayesian model for both the dose/toxicity and the dose/efficacy relationship that enforces an increasing toxicity and a increasing then plateau efficacy. We use the model proposed by \cite{MKR17}, that we now describe.

Under this model, toxicity and efficacy are assumed to be independent. The (increasing) toxicity follows the two-dimensional Bayesian logistic model with effective doses $u_k$:  
\begin{align*}
p_k &= p_k(\beta_0,\beta_1) = \psi(k,\beta_0,\beta_1) 
\\
\text{and} \ \ \ \beta_0 &\sim \mathcal{N}(0,100), \ \ \ \beta_1 \sim \text{Exp}(1).
\end{align*}
Efficacy also follows a logistic model, with an additional parameter $\tau$ that indicates the beginning of the plateau of efficacy. The efficacy probability of dose level $k$ is
\[
\eff_k = \eff_k(\gamma_0,\gamma_1,\tau) = \phi(k,\gamma_0,\gamma_1,\tau),  \
\text{ where }
    \ \phi(k,\gamma_0,\gamma_1,\tau) := \frac{1}{1 + e^{-\left[\gamma_0 + \gamma_1(
    v_k \mathds{1}(k<\tau) + v_{\tau} \mathds{1}(k\ge\tau) )\right]}},
\]
with $v_k$ the \emph{effective efficacy} of dose $k$. Given $(t_1,\dots,t_K)$ such that $\sum_{i=1}^K t_i = 1$, a probability distribution on $\{1,\dots,K\}$, the three parameters $(\gamma_0,\gamma_1,\tau)$ are independent and drawn from the following prior distribution:     
\[\gamma_0 \sim \mathcal{N}(0,100), \ \ \ \gamma_1 \sim \text{Exp}(1), \ \ \  \tau  \sim (t_1,\dots,t_K).\]
The prior on $\tau$ may be provided by a physician or set to $(1/K,\dots,1/K)$ in case one has no prior information. Just like the effective doses $u_k$ (that we may now call effective toxicities), the effective efficacies $v_k$ are calculated using prior efficacies $\eff^0_1 \leq \dots \leq \eff^0_K$:
\begin{align*}
v_k &= \left( \log\left( \frac{ \eff^0_k }{1 - \eff^0_k} \right)
    - \overline{\gamma}_0 \right) \bigg/ \overline{\gamma}_1,
\end{align*}
where $\overline{\gamma}_0 = 0$ and $\overline{\gamma}_1 = 1$ are the prior means of the parameters $\gamma_0$ and $\gamma_1$. 


\paragraph{Posterior sampling} Let
$\mathcal{D}^{\eff}_t = \{(D_1,Y_1),\dots,(D_t,Y_t)\}$
be the efficacy data gathered in the first $t$ rounds.
Generating samples from the posterior distribution of $(\gamma_0,\gamma_1,\tau)$ given $\mathcal{D}^{\eff}_t$ is a bit more involved than generating posterior samples from $(\beta_0,\beta_1)$. Indeed, it cannot be handled directly with HMC given that $(\gamma_0,\gamma_1)$ are continuous and $\tau$ is discrete. Thus, we proceed in the following way: we first draw samples from $p(\gamma_0,\gamma_1 |\mathcal{D}^{\eff}_t)$, which can be performed with HMC (and requires marginalizing out the discrete parameter $\tau$, following the example of change point models given in the Stan manual \citep{StanManual}). Then we sample $\tau$ conditionally to $\gamma_0,\gamma_1,\mathcal{D}^{\eff}_t$.

\subsubsection{Thompson Sampling}

Recall that the principle of Thompson Sampling is to randomly select doses according to their posterior probability of being optimal. This idea can also be applied in this more complex model, using the corresponding definition of optimality. Given a vector $\bm\psi = (\psi_1,\dots,\psi_K)$ of increasing toxicity probabilities and a vector $\bm\phi = (\phi_1,\dots,\phi_K)$ of increasing then plateau efficacy probabilities, the optimal dose is 
\[
{\Opt}(\bm{\psi},\bm{\phi}) : = \min\left\{
    k : \phi_k = \max_{\ell : \psi_\ell \leq \theta} \phi_\ell \right\}.
\]

The posterior probability of dose $k$ to be optimal in that case is 
\[{q}_k(t) := \bP\left( \left.k = \Opt\left( \psi(\bm\cdot, \beta_0,\beta_1),\phi(\bm\cdot, \gamma_0,\gamma_1,\tau)\right) \right|\cF_{t}\right)\]
and in our experiments, we implement Thompson Sampling by computing approximations $\hat{q}_k(t)$ from the quantities $q_k(t)$ (based on posterior samples) and then selecting a dose $D_{t+1}\sim \bm{\hat{q}}(t)$ where  $\bm{\hat{q}}(t) = (\hat{q}_1(t),\dots,\hat{q}_K(t))$. Just like in the previous model, an alternative implementation of Thompson Sampling would sample parameters from their posterior distributions and select the optimal dose in this sampled model. Letting 
\[\tilde{\beta}_0(t), \tilde{\beta}_1(t) \ \ \ \text{and} \ \ \ \tilde{\gamma}_0(t), \tilde{\gamma}_1(t), \tilde{\tau}(t),\]
be samples from the posterior distributions after $t$ observations of the toxicity and efficacy parameters respectively, one can compute $\tilde{\psi}_k(t) = \psi(k,\tilde{\beta}_0(t),\tilde{\beta}_1(t))$ and $\tilde{\phi}_k(t) = \phi(k,\tilde{\gamma}_0(t),\tilde{\gamma}_1(t),\tilde{\tau}(t))$ for every dose $k$. Given the toxicity and efficacy vectors  
\begin{align*}
\bm{\tilde{\psi}}(t) &= \left(\tilde{\psi}_1(t),\dots,\tilde{\psi}_K(t)\right) \\ \text{ and} \ \ \bm {\tilde{\phi}}(t) &= \left(\tilde{\phi}_1(t),\dots,\tilde{\phi}_K(t)\right),
\end{align*}
this implementation of Thompson Sampling selects at round $t+1$ $D_{t+1}^{\text{TS}} = \Opt\left(\bm{\tilde{\psi}}(t), \bm{\tilde{\phi}}(t)\right)$.

\paragraph{Recommendation rule} Here also we expect Thompson Sampling to be too exploratory for dose recommendation. Hence, we base our recommendation on estimated values. Given the posterior means 
$\hat{\beta}_0(t), \hat{\beta}_1(t),\hat{\gamma}_0(t),\hat{\gamma}_1(t)$ (estimated from posterior samples) and $\hat{\tau}(t)$ the mode of the posterior distribution of the breakpoint (see the next section for its computation), we compute  
$\hat{\psi}_k(t) = \psi(k,\hat{\beta}_0(t),\hat{\beta}_1(t))$ and $\hat{\phi}_k(t) = \phi(k,\hat{\gamma}_0(t),\hat{\gamma}_1(t),\hat{\tau}(t))$ and recommend $\hat{k}_t = \Opt\left(\bm{\hat{\psi}}(t), \bm{\hat{\phi}}(t)\right)$.

\subsubsection{A Variant of Thompson Sampling using Adaptive Randomization}

Interestingly, the need for randomization in the context of plateau efficacy has already been observed by \cite{MKR17}. More precisely, as we explain below, the algorithm $\mathrm{MTA}$-$\mathrm{RA}$ described in that work can be viewed as an hybrid approach between Thompson Sampling and a CRM approach. 

Additionally to the use of \emph{adaptive randomization}, the $\mathrm{MTA}$-$\mathrm{RA}$ algorithm also introduces a notion of  \emph{admissible set}. The set of admissible doses after $t$ patients, denoted by $\cA_t$, is the set of dose levels $k$ meeting all of the following criteria:
\begin{enumerate}
    \item dose $k$ has either already been tested, or is the next-smallest dose which has not yet been tested
    \item the posterior probability that the toxicity of dose $k$ exceeds $\theta$ is smaller than some threshold: 
    \begin{equation}\bP\left(\psi(k,\beta_0,\beta_1) > \theta | \cF_{t}\right) \leq c_1\label{eq:CritTox}\end{equation}
    \item if the dose has been tested more than $3$ times, the posterior probability that the efficacy is larger than $\xi$ is larger than some threshold: 
    \begin{equation}\bP\left(\phi(k,\gamma_0,\gamma_1,\tau) > \xi | \cF_{t}\right) \geq c_2\label{eq:CritEff}\end{equation}
\end{enumerate}
Practical computation of the admissible set can be performed using posterior samples from $(\beta_0,\beta_1)$ to check the criterion \eqref{eq:CritTox} and posterior samples from $(\gamma_0,\gamma_1,\tau)$ to check the criterion \eqref{eq:CritEff}. 

The $\mathrm{MTA}$-$\mathrm{RA}$ algorithm works in two steps. The first step exploits the \emph{posterior distribution of the breakpoint}, $t_k(t) :=\bP\left(\tau=k | \cD_{t}^{\eff}\right)$, and uses randomization to pick a value $\hat{\tau}(t)$ close to the mode of this distribution. More precisely, given $(\hat{t}_k(t))_{k=1,\dots,K}$ an estimate of the posterior distribution of $\tau$, let
\[
\mathcal{R}_t := \left\{
    k : \left| \max_{1 \le \ell \le K}(\hat{t}_\ell(t)) - \hat{t}_k(t) \right| \le s_1,
    1 \le k \le K
\right\}
\]
be a set of candidate values for the position of the breakpoint. Then under $\mathrm{MTA}$-$\mathrm{RA}$, 
\[\bP\left(\hat{\tau}(t) = k |\cF_t\right) = \frac{\hat{t}_k(t)\ind_{\left(k \in \cR_t\right)}}{\sum_{\ell \in \cR_t} \hat{t}_\ell(t)}. \]
The threshold $s_1$ is often adapted such that it is larger in the beginning of the trial when we have high uncertainty about the estimates, but it grows smaller as the trial continues. The second step of $\mathrm{MTA}$-$\mathrm{RA}$ doesn't employ randomization. Based on posterior samples from $(\gamma_0,\gamma_1)$ conditionally to $\tau$ being equal to $\hat{\tau}(t)$, efficacy estimates $\hat{\phi}_k$ are produced (taking the mean of the values of $\phi(k,\tilde{\gamma_0},\tilde{\gamma_1},\hat{\tau}(t))$ for many samples $\tilde{\gamma_0},\tilde{\gamma_1}$) and finally the selected dose is 
\[ D_{t+1}^{\text{MTA-RA}} = \inf \left\{ k \in \mathcal{A}_t : \hat{\phi}_k = \max_{j \in \mathcal{A}_t} \hat{\phi}_j\right\}.\]

If $\hat{\tau}(t)$ were replaced by a point estimate (e.g. the mode of the breakpoint posterior distribution $\bm{\hat t}(t)$), MTA-RA would be close to a CRM approach that computes estimates of all the parameters and acts greedily with respect to those estimated parameters (with the additional constraint that the chosen dose has to remain in the admissible set). However, the first step of MTA-RA bears similarities with the first step of a Thompson Sampling implementation that would sample a parameter $\tau$ from the $\bm{\hat t}(t)$ (and later sample the other parameters conditionally to that value and act greedily in the sampled model). The difference is the use of \emph{adaptive} randomization, in which the sample is not exactly drawn from $\bm{\hat t}(t)$, but is constrained to fall in some set (here $\cR_t$) that depends on previous observations. 

\paragraph{The $\bm{\mathrm{TS}\_\mathrm{A}}$ algorithm} We believe that using adaptive randomization is a good idea to control the amount of exploration performed by Thompson Sampling, which leads us to propose the $\mathrm{TS}\_\mathrm{A}$ algorithm, that incorporates the constraint to select a dose that belongs to the admissible set $\cA_t$. More formally, $\mathrm{TS}\_\mathrm{A}$ selects a dose at random according to   
\[\bP\left(D_{t+1} = k | \cF_{t}\right) = \frac{\hat{q}_k(t) \ind_{\left(k \in \cA_t\right)}}{\sum_{\ell \in \cA_t} \hat{q}_{\ell}(t)},\]
where we recall that $\hat{q}_k(t)$ is an estimate of the posterior probability that dose $k$ is optimal. Compared to the variant of $\mathrm{TS}\_\mathrm{A}$  for increasing toxicities that is proposed in Section~\ref{sec:TSIncreasing}, the difference here is the appropriate definition of the admissible set, that involves both toxicity and efficacy probabilities.

\paragraph{Practical remark} Approximations $\hat{t}_k(t)$ of the breakpoint distribution can be computed using that
\[
t_k(t) = {t_k \int \frac{L(\mathcal{D}^{eff}_t | \gamma_0,\gamma_1,k)}{\sum_{s=1}^K t_s L(\mathcal{D}^{eff}_t | \gamma_0,\gamma_1,s)} p(\gamma_0,\gamma_1 | \mathcal{D}^{eff}_t) d\gamma_0 d\gamma_1},
\]
where $L(\mathcal{D}^{\text{eff}}_t | \gamma_0,\gamma_1,s)$ is the likelihood of the efficacy observations when the efficacy model parameters are $(\gamma_0,\gamma_1,s)$ and $p(\gamma_0,\gamma_1 | \mathcal{D}^{\text{eff}}_t)$ is the density of the distribution of $(\gamma_0,\gamma_1)$ given the observations. $\hat{t}_k(t)$ can be thus be obtained by Monte-Carlo estimation based on samples from $p(\gamma_0,\gamma_1 | \mathcal{D}^{\text{eff}}_t)$.

\section{Experimental Evaluation}\label{sec:Experiments}

We now present an empirical evaluation of the variants of Thompson Sampling introduced in the paper first in the context of increasing efficacy and then with the presence of a plateau of efficacy. In both groups of experiments, we adjusted our designs to some common practices in dose-finding trials. We used a start-up phase for all designs (starting from the smallest dose and escalating until the first toxicity is observed) and we also used cohorts of patients of size 3. This means that the same dose is allocated to 3 patients at a time and the model is updated after seeing the outcome for these 3 patients. 

\subsection{Phase I: MTD Identification}

In this set of experiments, we evaluate the performance of the three algorithms introduced in Section~\ref{sec:TSIncreasing}, $\mathrm{TS}$, $\mathrm{TS}(\varepsilon)$ and $\mathrm{TS}\_\mathrm{A}$, and compare them to the 3+3 and $\mathrm{CRM}$ baselines. We report experiments with the value $\varepsilon = 0.05$ for $\mathrm{TS}(\varepsilon)$ and $c_1 = 0.8$ for $\mathrm{TS}\_\mathrm{A}$. We refer the reader to Section~\ref{subsec:ParameterTuning} for discussions on the choice of these parameters. We also include $\mathrm{Independent \ TS}$ as proposed in Section~\ref{sec:Bandits}, which is agnostic to the increasing structure. 
 
In Tables~\ref{tbl-tox} to~\ref{tbl-tox-c} we provide results for nine different scenarios in which there are $K=6$ doses with a target toxicity $\theta = 0.30$, budget $n=36$ and prior toxicities
\[\bm{p^{0}} = [0.06 \ \ 0.12 \ \ 0.20 \ \ 0.30 \ \ 0.40 \ \ 0.50].\]
We choose the same prior toxicities for all scenario, that are sometimes close to actual toxicities (e.g. in Scenario 2) and sometimes quite far, in order to showcase the robustness of Bayesian algorithms.    

For each scenario and algorithm, we report in the first column of these tables the percentage of allocation to each dose, that is, an estimate of $\bP(\hat{k}_n = k)$ for each dose $k$, based on $N=2000$ repetitions. In the second column, we report an estimate of the percentage of allocation to each dose during the trial, computed for each dose $k$ as the average value of $100*N_k(n)/n$ over $N=2000$ repetitions. We add in parenthesis the empirical standard deviation of these allocation percentages, as allocations under bandit algorithms are known to have a large variance. For the 3+3 design, only the recommendation percentages are displayed, as the percentage of allocations would be computed based on a number of patients smaller than 36 (as a 3+3 based trial involves some random stopping). This design is also the only one that would stop and recommend none of the doses if they are all judged too toxic: we add this fraction of no recommendation between brackets in the tables. 

\medskip

For each scenario, corresponding to different increasing toxicity probabilities, the MTD is underlined
and we mark in bold the fraction of recommendation or allocation of the MTD that are superior to what is achieved by the CRM. We now comment on the performance of the algorithms on those scenarios.

\paragraph{Dose recommendation}  $\mathrm{TS}$ outperforms CRM 3 out of 9 times, $\mathrm{TS}(\varepsilon)$ does so 5 out of 9 times, and $\mathrm{TS}\_\mathrm{A}$ does so 5 out of 9 times. As expected, $\mathrm{Independent \ TS}$, which does not leverage the increasing structure, does not have a remarkable performance. This algorithm would need a larger budget to have a good empirical performance. With $n=36$ in most cases this strategy is not doing much better than selecting the doses uniformly at random. One can also observe that the 3+3 design (that may however require less than 36 patients in the trial) performs very badly in terms of dose recommendation.

\paragraph{Dose allocation} While $\mathrm{TS}\_\mathrm{A}$ and $\mathrm{TS}(\varepsilon)$ do not always have higher allocation percentage at the optimal (underlined) dose compared to CRM, a scan of the dose allocation results in Tables~\ref{tbl-tox}~to~\ref{tbl-tox-c} shows that the addition of the admissible set $\cA$ and $\varepsilon$ regularity to the Thompson Sampling method consistently reduces the allocation percentage of higher toxicity doses. $\mathrm{TS}\_\mathrm{A}$ performs best in this regard (it is more cautious with allocating higher doses) across all algorithms (e.g. it consistently has superior performance compared to CRM), while $\mathrm{TS}(\varepsilon)$ has performance better than or comparable to CRM. We believe this result is of interest in trials where toxicity is an ethical concern.

\subsection{Phase I/II: MED Identification when Efficacy Plateaus} \label{subsec:toxOnlyExp}

In this set of experiments, we evaluate the performance of the two algorithms introduced in Section~\ref{sec:TSEff}, $\mathrm{TS}$ and $\mathrm{TS}\_\mathrm{A}$, and compare them to the $\mathrm{MTA}$-$\mathrm{RA}$ algorithm. We use the experimental setup of \cite{MKR17}: several scenarios with $K=6$ doses, budget $n=60$, $\theta = 0.35$, toxicity and efficacy priors
\begin{align*}
\bm{p^0} &= [0.02, 0.06, 0.12, 0.20, 0.30, 0.40] \ \ \text{ and } \ \ \bm{\mathrm{eff}^0} = [0.12, 0.20, 0.30, 0.40, 0.50, 0.59].
\end{align*}
Furthermore, we use the same parameters for the admissible set and the implementation of $\mathrm{MTA}$-$\mathrm{RA}$ as those chosen by \cite{MKR17}: $\xi=0.2$, $c_1=0.9$, $c_2=0.4$, and $s_1=.2\left(1-\frac{I}{n}\right)$, where $I$ is the number of samples used so far. These parameters are defined above in the main text.


In Tables~\ref{tbl-eff}~to~\ref{tbl-eff-b} we provide results for several scenarios with increasing toxicities and efficacy, with efficacy which (quasi) plateaus. We report the percentage of allocation to each dose, the percentage of recommendation of each dose when $n=60$, and the percentage of time the trials stopped early (E-Stop), estimated over $N=2000$ repetitions. As before, we also report standard deviations for the percentage of allocations to each dose. 

Optimal doses are underlined by a plain line while a dashed line identifies doses whose toxicity is larger than $\theta$. We mark in bold cases where
our algorithms makes the optimal decision (in terms of the percentage of recommendations) more often than the
$\mathrm{MTA}$-$\mathrm{RA}$ baseline.

\paragraph{Dose recommendation}  
Recall that the modeling assumption here is that efficacy increases monotonically in toxicity up to a point and then it plateaus. We present experimental results on several scenarios, some of which are borrowed from \cite{MKR17}, on which this plateau assumption is not always exactly met. In most of these scenarios, $\mathrm{TS}\_\mathrm{A}$ outperforms the $\mathrm{MTA}$-$\mathrm{RA}$ algorithm.

In scenarios 1 through 4 and in scenarios 12 and 13, there is a plateau of efficacy starting at a reasonable toxicity: in this case the optimal dose corresponds to the plateau breakpoint. Our algorithms make the optimal decision compared to $\mathrm{MTA}$-$\mathrm{RA}$ consistently: $\mathrm{TS}$ 4 out of 6 times and $\mathrm{TS}\_\mathrm{A}$ 5 out of 6 times. In scenarios 5 and 6 the plateau of efficacy starts when the toxicity is already too high, hence the optimal dose is before than the plateau. In scenario 5,  $\mathrm{TS}\_\mathrm{A}$ and $\mathrm{TS}$ both outperform $\mathrm{MTA}$-$\mathrm{RA}$, while on scenario 6 $\mathrm{MTA}$-$\mathrm{RA}$ has a slight advantage over $\mathrm{TS}$. 


In scenario 7 and 8 there is no true plateau of efficacy, however in both cases there exists a ``breakpoint'' (underlined) after which the efficacy is increasing very slowly while the toxicity is increasing significantly. This breakpoint can thus be argued to be a good trade-off between efficacy and toxicity and should be investigated in further phases. In these two scenarios $\mathrm{TS}\_\mathrm{A}$ identifies this pseudo-optimal dose more often than $\mathrm{MTA}$-$\mathrm{RA}$, while  $\mathrm{TS}$ has a slightly worse performance. 

Lastly, we study the case when there is no clear optimal or near-optimal dose, i.e. scenarios 9-11. In scenario 9 wherein most doses, including the entire quasi-plateau, are too toxic, we would like to stop early or at most recommend dose 1 (the only dose meeting the toxicity constraint but whose efficacy is not very high). Under this interpretation, $\mathrm{TS}$ and $\mathrm{TS}\_\mathrm{A}$ outperform $\mathrm{MTA}$-$\mathrm{RA}$. Note that our algorithms most often either stop early or recommend dose 1, while in comparison $\mathrm{MTA}$-$\mathrm{RA}$ recommends the toxic dose 2 a large fraction of the time (33.1 \%). In scenarios 10 and 11 in which all doses are either too toxic or ineffective a good algorithm would stop early with no recommendation. 
$\mathrm{TS}\_\mathrm{A}$ makes this optimal decision more often than $\mathrm{MTA}$-$\mathrm{RA}$ in both scenarios and $\mathrm{TS}$ in one of the two scenarios.

\paragraph{Dose allocation}
While $\mathrm{TS}$ and $\mathrm{TS}\_\mathrm{A}$ have lower allocation percentage at the optimal (underlined) dose compared to $\mathrm{MTA}$-$\mathrm{RA}$, the addition of the admissible set $\cA$ to the Thompson Sampling method consistently reduces the percentage of dose allocation at doses that are too toxic. Furthermore, $\mathrm{TS}\_\mathrm{A}$ is more cautious in allocating higher doses compared to $\mathrm{MTA}$-$\mathrm{RA}$. Our experiments notably reveal that the fraction of allocation to doses whose toxicity is larger than $\theta$ (that are underlined with a dashed line) is always smaller for $\mathrm{TS}\_\mathrm{A}$ than for $\mathrm{MTA}$-$\mathrm{RA}$. Hence, not only is $\mathrm{TS}\_\mathrm{A}$ very good in terms of recommending the right dose, it also manages to avoid too-toxic doses more consistently.

\begin{table*}[p]
\caption{Results for MTD identification}\label{tbl-tox}
\renewcommand{\tabcolsep}{0.1cm}
\begin{center}
\begin{tabular}{lllllll|llllll}
%
\toprule
    Algorithm 
    &\multicolumn{6}{c}{ Recommended} & \multicolumn{6}{c}{Allocated} \\
\hline
    &   \multicolumn{1}{c}{1}
    &   \multicolumn{1}{c}{2}
    &   \multicolumn{1}{c}{3}
    &   \multicolumn{1}{c}{4}
    &   \multicolumn{1}{c}{5}
    &   \multicolumn{1}{c}{6}
    &   \multicolumn{1}{c}{1}
    &   \multicolumn{1}{c}{2}
    &   \multicolumn{1}{c}{3}
    &   \multicolumn{1}{c}{4}
    &   \multicolumn{1}{c}{5}
    &   \multicolumn{1}{c}{6} \\
\hline
%
%
\textbf{Sc. 1:} Tox prob \ & $\underline{0.30}$ & $0.45$ & $0.55$ & $0.60$ & $0.75$ & $0.80$ & $\underline{0.30}$ & $0.45$ & $0.55$ & $0.60$ & $0.75$ & $0.80$ \\
\hline
3 + 3 \hfill  [30.4] & \tblopt{35.2} & 21.6 & 7.7 & 4.0 & 1.0 & 0.1 & \hspace{0.15cm}- & \hspace{0.15cm}- &\hspace{0.15cm}- & \hspace{0.15cm}- & \hspace{0.15cm}- & \hspace{0.15cm}- \\
\hdashline
CRM &             \tblopt{77.2}  & 20.8 & 1.9  &  0.1  & 0.0 &  0.0  
                       &              \makecell{\tblopt{70.1} \\ (32.1)} &  \makecell{21.7 \\ (24.1)} &  \makecell{6.2 \\ (12.1)} &  \makecell{1.5 \\ (5.4)} &  \makecell{0.3 \\ (1.9)} &  \makecell{0.3 \\ (1.7)} \\
\hdashline
 $\mathrm{TS}$ &  \tblwinrec{\tblopt{78.9}} &  18.9 & 2.2 & 0.0 & 0.0 & 0.0 & \makecell{\tblopt{67.0} \\ (24.4)} &  \makecell{18.8 \\ (16.2)} &   \makecell{6.3 \\ (9.5)} &  \makecell{2.3 \\ (5.2)} &  \makecell{1.0 \\ (3.1)} &  \makecell{4.6 \\ (5.6)} \\
\hdashline
 $\mathrm{TS}(\epsilon)$ &  \tblwinrec{\tblopt{78.6}} &  19.5 & 1.8 & 0.1 & 0.0 & 0.0 &  \makecell{\tblwinrec{\tblopt{73.0}} \\ (31.0)} &  \makecell{20.7 \\ (24.3)} &  \makecell{5.2 \\ (10.8)} &  \makecell{0.7 \\ (3.6)} &  \makecell{0.1 \\ (1.2)} &  \makecell{0.3 \\ (1.6)} \\
\hdashline
 $\mathrm{TS}\_\mathrm{A}$ & \tblwinrec{\tblopt{79.8}}  & 18.2  & 1.7 &  0.2  & 0.1 &  0.0 
 &  \makecell{\tblwinrec{\tblopt{76.3}} \\ (24.1)} &  \makecell{19.7 \\ (18.7)} &   \makecell{3.5 \\ (8.7)} &  \makecell{0.5 \\ (3.2)} &  \makecell{0.1 \\ (1.0)} &  \makecell{0.0 \\ (0.3)} \\
\hdashline
$\mathrm{Independent \ TS}$ & \tblopt{37.6} & 27.3 & 16.9 & 13.2 & 3.2 & 1.9 &  \makecell{\tblopt{23.4}\\ (12.9)} &   \makecell{21.0 \\ (11.5)} &  \makecell{17.4 \\ (10.2)} &  \makecell{15.9 \\ (9.5)} &  \makecell{11.7 \\ (6.2)} &  \makecell{10.6 \\ (5.5)}\\
\midrule
\textbf{Sc. 2:} Tox prob \ & $0.05$ & $0.12$ & $0.15$ & $\underline{0.30}$ & $0.45$ & $0.50$ &  $0.05$ & $0.12$ & $0.15$ & $\underline{0.30}$ & $0.45$ & $0.50$ \\
\hline
3 + 3 \hfill [0.5] \ & 4.9 & 6.2 & 23.0 & \underline{28.6} & 18.8 & 17.9 & \hspace{0.15cm} - & \hspace{0.15cm}- &\hspace{0.15cm}- & \hspace{0.15cm}- & \hspace{0.15cm}- & \hspace{0.15cm}- \\
\hdashline
CRM &  0.2  &  1.2 & 17.1 &  \tblopt{{53.9}} & 21.7 & 5.9 
&   \makecell{10.3 \\ (6.4)} &  \makecell{10.7 \\ (11.1)} &  \makecell{20.6 \\ (19.6)} &  \makecell{\tblopt{29.9} \\ (21.0)} &  \makecell{15.9 \\ (16.3)} &  \makecell{12.7 \\ (16.6)} \\
\hdashline
$\mathrm{TS}$ & 0.0 & 1.2 & 17.8 & \tblopt{47.2} & 25.9 & 8.0 &  \makecell{13.6 \\ (8.8)} &  \makecell{14.6 \\ (10.6)} &  \makecell{18.2 \\ (13.0)} &  \makecell{\tblopt{20.3} \\ (13.8)} &  \makecell{12.5 \\ (11.2)} &  \makecell{20.7 \\ (15.2)} \\
 \hdashline
 $\mathrm{TS}(\epsilon)$ & 0.2 & 1.5 & 14.9 & \tblopt{51.5} & 24.2 & 7.6 &  \makecell{10.4 \\ (6.8)} &  \makecell{11.1 \\ (11.3)} &  \makecell{22.3 \\ (19.6)} &  \makecell{\tblwinrec{\tblopt{30.2}} \\ (21.2)} &  \makecell{13.0 \\ (15.3)} &  \makecell{13.0 \\ (16.0)} \\
  \hdashline
 $\mathrm{TS}\_\mathrm{A}$ &  0.0 & 1.8&  14.9 &  \tblopt{44.3} & 24.9  & 14.1
  &  \makecell{15.3 \\ (11.6)} &  \makecell{19.5 \\ (14.6)} &  \makecell{25.7 \\ (15.5)} &  \makecell{\tblopt{23.9} \\ (17.0)} &  \makecell{10.2 \\ (12.3)} &   \makecell{5.5 \\ (11.1)} \\
\hdashline
 $\mathrm{Independent \ TS}$ & 17.7 &  17.2 & 19.2 & \tblopt{20.2} & 14.7 & 11.0 
& \makecell{16.0 \\ (8.8)} &  \makecell{18.6 \\ (7.5)} &  \makecell{18.7 \\ (8.0)} &  \makecell{\tblopt{17.6} \\ (8.9)} &  \makecell{15.0 \\ (8.3)} &   \makecell{14.2 \\ (8.0)} \\
\hline
\textbf{Sc. 3:} Tox prob \ & $0.01$ & $0.03$ & $0.07$ & $0.11$ & $0.15$ & $\underline{0.30}$ & $0.01$ & $0.03$ & $0.07$ & $0.11$ & $0.15$ & $\underline{0.30}$ \\
\hline
3 + 3 \hfill [0.0] \ & 0.3 & 1.8 & 3.6 & 6.2 & 20.8 & \tblopt{67.2}& 
\hspace{0.15cm} - & \hspace{0.15cm}- &\hspace{0.15cm}- & \hspace{0.15cm}- & \hspace{0.15cm}- & \hspace{0.15cm}- \\
\hdashline
                        CRM &  9.6 &  0.0 & 0.1  &  1.4 & 14.8  &  \tblopt{74.1} 
                       &  \makecell{14.0 \\ (15.1)} &   \makecell{8.2 \\ (1.8)} &   \makecell{8.9 \\ (4.5)} &    \makecell{8.7 \\ (8.2)} &  \makecell{14.8 \\ (14.5)} &              \makecell{\tblopt{45.4} \\ (21.4)} \\
\hdashline
 $\mathrm{TS}$ & 2.9 & 0.0 & 0.1 & 1.8 & 14.8 & \tblwinrec{\tblopt{80.2}} &  \makecell{11.8 \\ (9.8)} &  \makecell{9.2 \\ (3.6)} &  \makecell{10.1 \\ (6.2)} &  \makecell{11.7 \\ (8.8)} &  \makecell{14.1 \\ (10.5)} &  \makecell{\tblopt{43.2} \\ (16.3)} \\
 \hdashline
 $\mathrm{TS}(\epsilon)$ & 2.9 & 0.1 & 0.1 & 1.5 & 15.8 & \tblwinrec{\tblopt{79.8}} &  \makecell{11.0 \\ (8.9)} &  \makecell{8.4 \\ (2.5)} &   \makecell{9.0 \\ (4.7)} &  \makecell{10.5 \\ (9.2)} &  \makecell{15.3 \\ (13.9)} &  \makecell{\tblwinrec{\tblopt{45.8}} \\ (19.1)} \\
\hdashline
 $\mathrm{TS}\_\mathrm{A}$ &  2.5 & 0.0& 0.1  & 1.7 & 14.3& \tblwinrec{\tblopt{81.5}}
 &   \makecell{11.7 \\ (9.4)} &  \makecell{10.6 \\ (6.2)} &  \makecell{13.6 \\ (9.7)} &  \makecell{15.9 \\ (10.9)} &  \makecell{16.0 \\ (10.6)} &              \makecell{\tblopt{32.1} \\ (19.0)} \\
\hdashline
 $\mathrm{Independent \ TS}$ & 18.8 & 10.0 & 14.4 & 19.4 & 18.6 & \tblopt{19.0}
 & \makecell{15.3 \\ (8.2)} &  \makecell{16.3 \\ (5.4)} &  \makecell{16.8 \\ (6.3)} &  \makecell{17.6 \\ (7.1)} &  \makecell{17.6 \\ (7.5)} &              \makecell{\tblopt{16.4} \\ (8.2)} \\
\bottomrule
\end{tabular}
\end{center}
\end{table*}

\begin{table*}[p]
\caption{Results for MTD identification (part 2/3)}\label{tbl-tox-b}
\renewcommand{\tabcolsep}{0.1cm}
\begin{center}
\begin{tabular}{lllllll|llllll}
%
\toprule
    Algorithm 
    &\multicolumn{6}{c}{ Recommended} & \multicolumn{6}{c}{Allocated} \\
\hline
    &   \multicolumn{1}{c}{1}
    &   \multicolumn{1}{c}{2}
    &   \multicolumn{1}{c}{3}
    &   \multicolumn{1}{c}{4}
    &   \multicolumn{1}{c}{5}
    &   \multicolumn{1}{c}{6}
    &   \multicolumn{1}{c}{1}
    &   \multicolumn{1}{c}{2}
    &   \multicolumn{1}{c}{3}
    &   \multicolumn{1}{c}{4}
    &   \multicolumn{1}{c}{5}
    &   \multicolumn{1}{c}{6} \\
\hline
\textbf{Sc. 4:} Tox prob \ &   $0.10$ & $0.20$ & $\underline{0.30}$ & $0.40$ & $0.47$ & $0.53$ &  $0.10$ & $0.20$ & $\underline{0.30}$ & $0.40$ & $0.47$ & $0.53$ \\
\hline
3 + 3 \hfill [4.7] \ & 12.5 & 20.5 & \tblopt{23.0} & 18.8 & 11.6 & 9.0 
&\hspace{0.15cm} - & \hspace{0.15cm}- &\hspace{0.15cm}- & \hspace{0.15cm}- & \hspace{0.15cm}- & \hspace{0.15cm}- \\
\hdashline
CRM & 1.2  &  22.0  &  \tblopt{42.2}& 25.7 &  6.9 & 2.1 
&  \makecell{14.6 \\ (13.7)} &  \makecell{23.1 \\ (23.7)} &              \makecell{\tblopt{30.6} \\ (24.0)} &  \makecell{18.0 \\ (19.4)} &  \makecell{7.4 \\ (12.3)} &   \makecell{6.3 \\ (12.2)} \\
\hdashline
$\mathrm{TS}$ & 1.2 & 19.6 & \tblopt{40.1} & 28.2 & 8.1 & 2.8 &  \makecell{21.4 \\ (15.1)} &  \makecell{21.6 \\ (15.4)} &  \makecell{\tblopt{20.8} \\ (15.0)} &  \makecell{13.7 \\ (12.3)} &  \makecell{6.8 \\ (8.7)} &  \makecell{15.7 \\ (13.6)} \\
\hdashline
 $\mathrm{TS}(\epsilon)$ & 2.1 & 19.9 & \tblwinrec{\tblopt{44.1}} & 24.9 & 7.0 & 1.8 &  \makecell{15.5 \\ (15.8)} &  \makecell{25.3 \\ (24.5)} &  \makecell{\tblopt{31.8} \\ (24.5)} &  \makecell{16.2 \\ (19.2)} &  \makecell{5.1 \\ (9.8)} &   \makecell{6.1 \\ (10.8)} \\
\hdashline
 $\mathrm{TS}\_\mathrm{A}$ &  1.4 &  20.6  & \tblwinrec{\tblopt{42.3}} & 22.2 &  9.0  &  4.5 
 &  \makecell{25.1 \\ (19.3)} &  \makecell{31.0 \\ (18.4)} &              \makecell{\tblopt{27.4} \\ (18.8)} &  \makecell{11.9 \\ (14.9)} &   \makecell{3.2 \\ (7.9)} &    \makecell{1.3 \\ (5.6)} \\
\hdashline
 $\mathrm{Independent \ TS}$ & 17.8 & 22.2 & \tblopt{22.6} & 15.9 & 12.6 & 9.0 
 & \makecell{16.6 \\ (9.3)} &  \makecell{19.4 \\ (8.7)} & \makecell{\tblopt{18.7} \\ (9.3)} &  \makecell{16.5 \\ (8.8)} &   \makecell{15.3 \\ (8.6)} &    \makecell{13.5 \\ (7.7)}\\ 
\midrule
\textbf{Sc. 5:} Tox prob \ &   $0.10$ & $\underline{0.25}$ & $0.40$ & $0.50$ & $0.65$ & $0.75$  &  $0.10$ & $\underline{0.25}$ & $0.40$ & $0.50$ & $0.65$ & $0.75$  \\
\hline
3 + 3 \hfill [3.1] \ & 20.6 & \tblopt{30.8} & 24.2 & 15.3 & 5.1 & 0.8
& \hspace{0.15cm} - & \hspace{0.15cm}- &\hspace{0.15cm}- & \hspace{0.15cm}- & \hspace{0.15cm}- & \hspace{0.15cm}- \\
\hdashline
CRM &  4.8 &  \tblopt{49.7}&  39.0 &  6.5  & 0.1 &  0.0
&  \makecell{17.8 \\ (18.2)} &              \makecell{\tblopt{38.3} \\ (27.4)} &  \makecell{30.9 \\ (23.9)} &  \makecell{9.0 \\ (14.8)} &  \makecell{2.4 \\ (5.5)} &  \makecell{1.7 \\ (4.0)} \\
\hdashline
 $\mathrm{TS}$ & 4.3 & \tblwinrec{\tblopt{50.7}} & 39.4 & 5.4 & 0.1 & 0.1 &  \makecell{26.3 \\ (17.6)} &  \makecell{\tblopt{31.2} \\ (17.5)} &  \makecell{22.3 \\ (16.0)} &  \makecell{8.8 \\ (11.4)} &  \makecell{3.2 \\ (5.4)} &  \makecell{8.2 \\ (7.2)} \\
 \hdashline
 $\mathrm{TS}(\epsilon)$ & 4.8 & \tblwinrec{\tblopt{52.2}} & 36.5 & 6.2 & 0.2 & 0.0 &  \makecell{18.8 \\ (19.3)} &  \makecell{\tblwinrec{\tblopt{41.2}} \\ (27.1)} &  \makecell{29.7 \\ (24.4)} &  \makecell{7.3 \\ (13.7)} &  \makecell{1.4 \\ (4.2)} &  \makecell{1.6 \\ (3.9)} \\
 \hdashline
  $\mathrm{TS}\_\mathrm{A}$ &  3.0  &  \tblwinrec{\tblopt{50.8}}& 36.4  & 7.0  &  1.6 &  1.1 
 &  \makecell{29.6 \\ (20.0)} &  \makecell{\tblwinrec{\tblopt{40.1}} \\ (18.8)} &  \makecell{23.4 \\ (18.5)} &  \makecell{6.1 \\ (11.0)} &  \makecell{0.8 \\ (3.2)} &  \makecell{0.1 \\ (1.1)} \\
\hdashline
 $\mathrm{Independent \ TS}$ & 24.3 & \tblopt{32.6} &  21.4 & 14.6 & 5.4 & 1.6
 & \makecell{19.4 \\ (10.5)} &  \makecell{\tblopt{22.6} \\ (10.8)} &  \makecell{19.1 \\ (10.0)} &  \makecell{16.0 \\ (9.1)} &  \makecell{12.5 \\ (7.0)} &  \makecell{10.4 \\ (5.5)}\\
\hline
\textbf{Sc. 6:} Tox prob \ &    $0.08$ & $0.12$ & $0.18$ & $\underline{0.25}$ & $\underline{0.33}$ & $0.39$ &  $0.08$ & $0.12$ & $0.18$ & $\underline{0.25}$ & $\underline{0.33}$ & $0.39$ \\
\hline
3 + 3 \hfill [2.1] \ & 5.1 & 9.6 & 15.3 & \tblopt{19.3} & \tblopt{18.5} &  30.2
& \hspace{0.15cm} - & \hspace{0.15cm}- &\hspace{0.15cm}- & \hspace{0.15cm}- & \hspace{0.15cm}- & \hspace{0.15cm}- \\
\hdashline
CRM &  0.3  & 1.2  &  10.6  &  \tblopt{29.1} & \tblopt{31.2}  & 27.5  
&   \makecell{11.7 \\ (8.4)} &  \makecell{10.7 \\ (11.4)} &  \makecell{16.2 \\ (17.1)} &  \makecell{\tblopt{19.5} \\ (19.3)} &  \makecell{\tblopt{18.2} \\ (18.0)} &  \makecell{23.7 \\ (23.9)} \\
\hdashline
 $\mathrm{TS}$ & 0.3 & 1.4 & 10.6 & \tblopt{27.0} & \tblopt{29.9} & 30.9 &  \makecell{14.9 \\ (10.7)} &   \makecell{13.2 \\ (9.9)} &  \makecell{15.3 \\ (12.0)} &  \makecell{\tblopt{15.1} \\ (12.3)} &  \makecell{\tblopt{12.2} \\ (11.3)} &  \makecell{29.2 \\ (18.7)} \\
 \hdashline
 $\mathrm{TS}(\epsilon)$ & 0.1 & 1.7 & 11.5 & \tblopt{28.3} & \tblopt{30.4} & 28.0 &   \makecell{12.0 \\ (8.9)} &  \makecell{11.9 \\ (12.5)} &  \makecell{19.2 \\ (18.6)} &  \makecell{\tblwinrec{\tblopt{20.3}} \\ (19.2)} &  \makecell{\tblopt{13.5} \\ (15.4)} &  \makecell{23.0 \\ (22.7)} \\
\hdashline
 $\mathrm{TS}\_\mathrm{A}$ &  0.1 &  1.9  &  12.0 &  \tblopt{28.5}  & \tblopt{26.5}&  31.0 
 &  \makecell{17.5 \\ (14.4)} &  \makecell{21.1 \\ (15.0)} &  \makecell{24.7 \\ (15.9)} &  \makecell{\tblopt{19.3} \\ (15.9)} &   \makecell{\tblopt{8.9} \\ (11.5)} &   \makecell{8.5 \\ (15.0)} \\
\hdashline
 $\mathrm{Independent \ TS}$ & 13.6 & 15.6 & 19.1 & \tblopt{19.4} & \tblopt{16.8} & 15.4
& \makecell{14.7 \\ (8.4)} &  \makecell{17.7 \\ (7.4)} &  \makecell{18.0 \\ (8.0)} &  \makecell{\tblopt{17.5} \\ (8.5)} &   \makecell{\tblopt{16.4} \\ (8.5)} &   \makecell{15.7 \\ (8.4)}\\
 \bottomrule
\end{tabular}
\end{center}
\end{table*}
\begin{table*}[p]
\caption{Results for MTD identification (part 3/3)}\label{tbl-tox-c}
\renewcommand{\tabcolsep}{0.1cm}
\begin{center}
\begin{tabular}{lllllll|llllll}
%
\toprule
    Algorithm 
    &\multicolumn{6}{c}{ Recommended} & \multicolumn{6}{c}{Allocated} \\
\hline
    &   \multicolumn{1}{c}{1}
    &   \multicolumn{1}{c}{2}
    &   \multicolumn{1}{c}{3}
    &   \multicolumn{1}{c}{4}
    &   \multicolumn{1}{c}{5}
    &   \multicolumn{1}{c}{6}
    &   \multicolumn{1}{c}{1}
    &   \multicolumn{1}{c}{2}
    &   \multicolumn{1}{c}{3}
    &   \multicolumn{1}{c}{4}
    &   \multicolumn{1}{c}{5}
    &   \multicolumn{1}{c}{6} \\
\hline
 \textbf{Sc. 7:} Tox prob \ & $0.15$ & $\underline{0.30}$ & $0.45$ & $0.50$ & $0.60$ & $0.70$ & $0.15$ & $\underline{0.30}$ & $0.45$ & $0.50$ & $0.60$ & $0.70$ \\
\hline
3 + 3 \hfill [7.7] \ & 24.7 & \tblopt{32.8} & 18.0 & 10.2 & 4.9 & 1.8 
& \hspace{0.15cm} - & \hspace{0.15cm}- &\hspace{0.15cm}- & \hspace{0.15cm}- & \hspace{0.15cm}- & \hspace{0.15cm}- \\
\hdashline
CRM &  16.9  &  \tblopt{59.4} &  20.4  & 3.0  & 0.2  &  0.2 
&  \makecell{27.7 \\ (27.2)} &              \makecell{\tblopt{40.8} \\ (27.1)} &  \makecell{22.4 \\ (22.1)} &  \makecell{6.0 \\ (11.6)} &  \makecell{1.8 \\ (5.5)} &  \makecell{1.4 \\ (4.2)} \\
\hdashline
$\mathrm{TS}$ &  14.5 &  \tblopt{55.7} &  25.6 &  3.9 &  0.1 &  0.1 &  \makecell{34.9 \\ (21.8)} &  \makecell{\tblopt{29.5} \\ (17.0)} &  \makecell{17.5 \\ (14.8)} &   \makecell{7.0 \\ (9.8)} &  \makecell{2.9 \\ (5.4)} &  \makecell{8.2 \\ (8.0)} \\
\hdashline
 $\mathrm{TS}(\epsilon)$ &  15.0 &  \tblopt{58.0} &  23.2 &  3.5 &  0.2 &  0.1 &  \makecell{28.8 \\ (27.5)} &  \makecell{\tblwinrec{\tblopt{43.3}} \\ (27.0)} &  \makecell{20.8 \\ (21.9)} &  \makecell{4.7 \\ (11.0)} &  \makecell{1.0 \\ (4.0)} &  \makecell{1.5 \\ (4.0)} \\
\hdashline
 $\mathrm{TS}\_\mathrm{A}$ &  13.7 &  \tblwinrec{\tblopt{59.5}} & 21.5  & 3.7 & 0.9  &  0.8
&  \makecell{41.7 \\ (24.8)} &              \makecell{\tblopt{39.3} \\ (18.9)} &  \makecell{15.5 \\ (16.8)} &   \makecell{3.1 \\ (7.9)} &  \makecell{0.4 \\ (2.7)} &  \makecell{0.1 \\ (1.2)} \\
\hdashline
 $\mathrm{Independent \ TS}$ & 25.4 & \tblopt{33.1} & 16.8 & 13.7 & 7.6 & 3.4 
 & \makecell{19.2 \\ (11.0)} & \makecell{\tblopt{22.5} \\ (11.1)} &  \makecell{17.5 \\ (9.9)} &   \makecell{16.3 \\ (9.4)} &  \makecell{13.3 \\ (7.7)} &  \makecell{11.2 \\ (6.3)}\\
\midrule
\textbf{Sc. 8:} Tox prob \ & $0.10$ & $0.15$ & $\underline{0.30}$ & $0.45$ & $0.60$ & $0.75$ & $0.10$ & $0.15$ & $\underline{0.30}$ & $0.45$ & $0.60$ & $0.75$ \\
\hline
3 + 3 \hfill [3.1] \ & 6.8 & 24.1 & \tblopt{30.8} & 22.4 & 11.0 & 1.8
& \hspace{0.15cm} - & \hspace{0.15cm}- &\hspace{0.15cm}- & \hspace{0.15cm}- & \hspace{0.15cm}- & \hspace{0.15cm}- \\
\hdashline
CRM &  1.1  & 15.1  & \tblopt{60.6}  & 21.6 &  1.6 &  0.1  
&  \makecell{13.5 \\ (12.4)} &  \makecell{20.4 \\ (20.8)} &  \makecell{\tblopt{39.6} \\ (24.5)} &  \makecell{18.4 \\ (20.2)} &  \makecell{4.9 \\ (9.3)} &  \makecell{3.1 \\ (5.5)} \\
\hdashline
$\mathrm{TS}$ &  0.9 &  21.0 &  \tblopt{58.5} &  18.5 &  1.0 &  0.1 &  \makecell{20.4 \\ (15.4)} &  \makecell{23.8 \\ (15.0)} &  \makecell{\tblopt{27.4} \\ (16.3)} &  \makecell{13.8 \\ (13.5)} &  \makecell{4.9 \\ (7.3)} &  \makecell{9.8 \\ (7.6)} \\
\hdashline
 $\mathrm{TS}(\epsilon)$ &  0.8 &  17.0 &  \tblopt{59.4} &  20.2 &  2.0 &  0.7 &  \makecell{14.4 \\ (14.0)} &  \makecell{23.4 \\ (21.8)} &  \makecell{\tblwinrec{\tblopt{39.9}} \\ (24.3)} &  \makecell{16.3 \\ (19.8)} &  \makecell{2.6 \\ (6.1)} &  \makecell{3.4 \\ (5.4)} \\
\hdashline
 $\mathrm{TS}\_\mathrm{A}$ &  0.3  &  14.5 & \tblopt{51.9}  & 24.0 & 5.4&  3.9 
 &  \makecell{22.4 \\ (17.5)} &  \makecell{30.1 \\ (17.6)} &  \makecell{\tblopt{31.7} \\ (18.3)} &  \makecell{13.0 \\ (14.7)} &  \makecell{2.3 \\ (6.0)} &  \makecell{0.5 \\ (2.3)} \\
\hdashline
 $\mathrm{Independent \ TS}$ & 22.4 & 24.4 & \tblopt{26.8} & 17.1 & 7.4 & 2.0 
 &  \makecell{18.3 \\ (9.9)} &  \makecell{21.5 \\ (9.1)} &  \makecell{\tblopt{20.5} \\ (9.8)} &  \makecell{16.9 \\ (9.4)} &  \makecell{13.0 \\ (7.4)} &  \makecell{9.9 \\ (5.3)} \\
\hline
\textbf{Sc. 9:} Tox prob \ & $0.01$ & $0.05$ & $0.08$ & $0.15$ & $\underline{0.30}$ & $0.45$ & $0.01$ & $0.05$ & $0.08$ & $0.15$ & $\underline{0.30}$ & $0.45$\\
\hline
3 + 3 \hfill [0.1] \ &  0.8 & 2.1 & 8.0 & 23.8 & \tblopt{30.0} & 35.2
& \hspace{0.15cm} - & \hspace{0.15cm}- &\hspace{0.15cm}- & \hspace{0.15cm}- & \hspace{0.15cm}- & \hspace{0.15cm}- \\
\hdashline
CRM &  1.9  &  0.1  &  0.4 &  16.1 & \tblopt{54.1}& 27.4
&   \makecell{9.8 \\ (7.3)} &   \makecell{8.5 \\ (3.5)} &   \makecell{10.0 \\ (7.6)} &  \makecell{17.0 \\ (16.4)} &  \makecell{\tblopt{28.9} \\ (18.9)} &  \makecell{25.8 \\ (20.8)} \\
\hdashline
$\mathrm{TS}$ &  0.5 &  0.0 &  0.5 &  17.1 &  \tblopt{50.8} &  31.1 &  \makecell{10.3 \\ (6.0)} &  \makecell{10.0 \\ (4.8)} &  \makecell{12.0 \\ (7.9)} &  \makecell{18.3 \\ (12.1)} &  \makecell{\tblopt{20.0} \\ (12.7)} &  \makecell{29.4 \\ (16.0)} \\
\hdashline
 $\mathrm{TS}(\epsilon)$ &  0.7 &  0.1 &  0.4 &  15.2 &  \tblwinrec{\tblopt{55.9}} &  27.9 &   \makecell{9.3 \\ (5.3)} &   \makecell{8.4 \\ (2.5)} &  \makecell{10.6 \\ (7.3)} &  \makecell{20.2 \\ (17.2)} &  \makecell{\tblopt{26.3} \\ (18.3)} &  \makecell{25.2 \\ (19.9)} \\
\hdashline
 $\mathrm{TS}\_\mathrm{A}$ &   0.3  &  0.0  & 0.5 & 13.2 &  \tblopt{46.7} &  39.2 
 &  \makecell{10.4 \\ (5.8)} &  \makecell{12.3 \\ (8.4)} &  \makecell{16.5 \\ (11.5)} &  \makecell{22.9 \\ (14.2)} &  \makecell{\tblopt{19.9} \\ (13.3)} &  \makecell{18.1 \\ (17.1)} \\
\hdashline
 $\mathrm{Independent \ TS}$ & 18.8 & 11.6 & 14.8 & 19.0 & \tblopt{21.0} & 14.8
 & \makecell{15.4 \\ (8.4)} &  \makecell{17.1 \\ (6.2)} &  \makecell{17.5 \\ (6.8)} &  \makecell{18.1 \\ (7.6)} &  \makecell{\tblopt{17.1} \\ (8.4)} &  \makecell{14.9 \\ (7.9)} \\ 
\bottomrule
\end{tabular}
\end{center}
\end{table*}

\begin{table}[p]
\caption{Results for MED identification (part 1/3).}
\label{tbl-eff}
\renewcommand{\tabcolsep}{0.3em}
\centering
\begin{tabular}{cccccccc|cccccc}
\hline
    Algorithm &  E-Stop 
    &\multicolumn{6}{c}{ Recommended} & \multicolumn{6}{c}{Allocated} \\
\hline
    &
    &   \multicolumn{1}{c}{1}
    &   \multicolumn{1}{c}{2}
    &   \multicolumn{1}{c}{3}
    &   \multicolumn{1}{c}{4}
    &   \multicolumn{1}{c}{5}
    &   \multicolumn{1}{c}{6}
    &   \multicolumn{1}{c}{1}
    &   \multicolumn{1}{c}{2}
    &   \multicolumn{1}{c}{3}
    &   \multicolumn{1}{c}{4}
    &   \multicolumn{1}{c}{5}
    &   \multicolumn{1}{c}{6} \\
\hline
\multicolumn{2}{l}{\textbf{Sc. 1:} Tox prob} & 0.01  & 0.05 & \underline{0.15} & 0.2 & 0.45 & 0.6  &  0.01  & 0.05 & \underline{0.15} & 0.2 & \dash{0.45} & \dash{0.6} \\ 
\multicolumn{2}{l}{\textbf{Sc. 1:} Eff prob}  & 0.1 & 0.35 & \underline{0.6} & 0.6 & 0.6 & 0.6 &  0.1 & 0.35 & \underline{0.6} & 0.6 & 0.6 & 0.6 \\ 
\hline
 $\mathrm{MTA}$-$\mathrm{RA}$ & 0.4 &  0.4 &  7.0 &              \tblopt{54.9} &  29.1 &  7.4 &  0.8 &   \makecell{7.1 \\ (3.8)} &  \makecell{14.2 \\ (13.9)} &  \makecell{\tblopt{37.9} \\ (24.4)} &  \makecell{24.9 \\ (18.8)} &  \makecell{\dash{12.9} \\ (13.6)} &  \makecell{\dash{2.5} \\ (4.9)} \\
 \hdashline
               $\mathrm{TS}$ & 0.9 &  0.1 &  9.7 &  \tblwinrec{\tblopt{57.6}} &  27.0 &  4.2 &  0.4 &  \makecell{10.6 \\ (5.7)} &  \makecell{18.4 \\ (11.0)} &  \makecell{\tblopt{31.9} \\ (14.4)} &  \makecell{23.8 \\ (13.2)} &          \makecell{10.0 \\ (8.0)} &         \makecell{4.4 \\ (4.5)} \\
\hdashline
    $\mathrm{TS}\_\mathrm{A}$ & 0.9 &  0.3 &  9.6 &  \tblwinrec{\tblopt{59.4}} &  26.1 &  3.5 &  0.2 &  \makecell{10.7 \\ (5.4)} &  \makecell{20.7 \\ (12.9)} &  \makecell{\tblopt{35.7} \\ (14.9)} &  \makecell{23.9 \\ (14.1)} &    \makecell{\dash{7.3} \\ (8.1)} &  \makecell{\dash{0.9} \\ (2.7)} \\
\hline
\multicolumn{2}{l}{\textbf{Sc. 2:} Tox prob}& 0.005  & 0.01 & 0.02 & 0.05 & \underline{0.1} & 0.15 & 0.005  & 0.01 & 0.02 & 0.05 & \underline{0.1} & 0.15\\  
\multicolumn{2}{l}{\textbf{Sc. 2:} Eff prob} &  0.001  & 0.1 & 0.3 & 0.5 & \underline{0.8} & 0.8 &  0.001  & 0.1 & 0.3 & 0.5 & \underline{0.8} & 0.8\\
 \hline
 $\mathrm{MTA}$-$\mathrm{RA}$ & 1.9 &  0.0 &  0.1 &  1.6 &  5.1 &              \tblopt{55.0} &  36.2 &  \makecell{5.2 \\ (1.7)} &  \makecell{5.6 \\ (3.1)} &   \makecell{7.5 \\ (8.5)} &  \makecell{11.4 \\ (13.6)} &  \makecell{\tblopt{36.7} \\ (25.8)} &  \makecell{31.7 \\ (26.9)} \\
 \hdashline
               $\mathrm{TS}$ &  0.8 &  0.0 &  0.0 &   0.5 &  4.7 &  \tblwinrec{\tblopt{56.6}} &  37.5 &  \makecell{5.9 \\ (2.4)} &  \makecell{6.6 \\ (3.4)} &   \makecell{9.3 \\ (6.0)} &   \makecell{16.9 \\ (9.7)} &  \makecell{\tblopt{32.5} \\ (13.3)} &  \makecell{28.1 \\ (14.4)} \\
\hdashline
    $\mathrm{TS}\_\mathrm{A}$ & 2.2 &  0.0 &  0.1 &  1.6 &  5.0 &  \tblwinrec{\tblopt{55.9}} &  35.2 &  \makecell{5.9 \\ (2.3)} &  \makecell{6.8 \\ (3.8)} &  \makecell{10.9 \\ (8.7)} &  \makecell{17.9 \\ (10.8)} &  \makecell{\tblopt{31.8} \\ (14.3)} &  \makecell{24.5 \\ (15.5)} \\
\hline
\multicolumn{2}{l}{\textbf{Sc. 3:} Tox prob}  & \underline{0.01}  & 0.05 & 0.1 & 0.25 & 0.5 & 0.7 & \underline{0.01}  & 0.05 & 0.1 & 0.25 & \dash{0.5} & \dash{0.7}\\ 
\multicolumn{2}{l}{\textbf{Sc. 3:} Eff prob}  & \underline{0.4}  & 0.4 & 0.4 & 0.4 & 0.4 & 0.4 & \underline{0.4}  & 0.4 & 0.4 & 0.4 & 0.4 & 0.4 \\ 
\hline
 $\mathrm{MTA}$-$\mathrm{RA}$ & 0.4 &              \tblopt{51.5} &  26.4 &  12.5 &  6.8 &  2.2 &  0.2 &  \makecell{\tblopt{38.2} \\ (25.2)} &  \makecell{24.8 \\ (17.9)} &  \makecell{16.6 \\ (13.9)} &  \makecell{12.9 \\ (12.3)} &  \makecell{\dash{6.1} \\ (8.4)} &  \makecell{\dash{0.9} \\ (2.7)} \\
 \hdashline
               $\mathrm{TS}$ & 0.1 &  \tblwinrec{\tblopt{53.9}} &  24.8 &  12.2 &  7.8 &  1.1 &  0.1 &  \makecell{\tblopt{24.1} \\ (11.4)} &   \makecell{22.7 \\ (9.8)} &  \makecell{23.8 \\ (10.9)} &  \makecell{19.0 \\ (10.6)} &         \makecell{7.2 \\ (6.1)} &         \makecell{3.1 \\ (3.6)} \\
\hdashline
    $\mathrm{TS}\_\mathrm{A}$ & 0.5 &  \tblwinrec{\tblopt{53.8}} &  26.4 &  10.4 &  8.2 &   0.7 &  0.1 &  \makecell{\tblopt{26.6} \\ (13.3)} &  \makecell{25.1 \\ (11.4)} &  \makecell{24.8 \\ (11.4)} &  \makecell{17.7 \\ (11.9)} &  \makecell{\dash{4.8} \\ (6.6)} &  \makecell{\dash{0.5} \\ (2.0)} \\
\hline
\multicolumn{2}{l}{\textbf{Sc. 4:} Tox prob} & 0.01  & 0.02 & \underline{0.05} & 0.1 & 0.2 & 0.3 & 0.01  & 0.02 & \underline{0.05} & 0.1 & 0.2 & 0.3 \\ 
\multicolumn{2}{l}{\textbf{Sc. 4:} Eff prob}  & 0.25  & 0.45 & \underline{0.65} & 0.65 & 0.65 & 0.65 & 0.25  & 0.45 & \underline{0.65} & 0.65 & 0.65 & 0.65 \\
\hline
 $\mathrm{MTA}$-$\mathrm{RA}$ & 0.1 &  1.8 &  13.2 &              \tblopt{49.0} &  21.7 &   8.5 &  5.7 &   \makecell{9.5 \\ (7.9)} &  \makecell{17.7 \\ (15.9)} &  \makecell{\tblopt{31.6} \\ (21.5)} &  \makecell{20.6 \\ (15.6)} &  \makecell{13.9 \\ (12.6)} &  \makecell{6.6 \\ (10.2)} \\
 \hdashline
               $\mathrm{TS}$ & 0.1 &  1.8 &  15.7 &              \tblopt{45.8} &  18.1 &  10.8 &  7.8 &  \makecell{12.1 \\ (6.8)} &   \makecell{16.8 \\ (8.9)} &  \makecell{\tblopt{23.1} \\ (11.0)} &  \makecell{21.6 \\ (10.2)} &   \makecell{16.5 \\ (9.3)} &   \makecell{9.8 \\ (7.6)} \\
\hdashline
    $\mathrm{TS}\_\mathrm{A}$ & 0.2 &  2.4 &  15.0 &  \tblwinrec{\tblopt{49.1}} &  20.2 &   9.8 &  3.2 &  \makecell{13.2 \\ (8.0)} &  \makecell{19.3 \\ (10.8)} &  \makecell{\tblopt{25.5} \\ (12.3)} &  \makecell{21.9 \\ (10.8)} &  \makecell{14.1 \\ (10.7)} &   \makecell{5.8 \\ (7.9)} \\
\hline
\multicolumn{2}{l}{\textbf{Sc. 5:} Tox prob} & 0.1  & 0.2 & \underline{0.25} & 0.4 & 0.5 & 0.6 & 0.1  & 0.2 & \underline{0.25} & \dash{0.4} & \dash{0.5} & \dash{0.6} \\ 
\multicolumn{2}{l}{\textbf{Sc. 5:} Eff prob} &   0.3  & 0.4 & \underline{0.5} & 0.7 & 0.7 & 0.7 & 0.3  & 0.4 & \underline{0.5} & 0.7 & 0.7 & 0.7 \\ 
\hline
 $\mathrm{MTA}$-$\mathrm{RA}$ & 1.4 &   9.0 &  13.2 &              \tblopt{25.9} &  40.6 &  8.3 &  1.5 &  \makecell{15.5 \\ (16.7)} &  \makecell{19.1 \\ (17.0)} &  \makecell{\tblopt{24.9} \\ (17.7)} &  \makecell{\dash{26.7} \\ (19.5)} &  \makecell{\dash{9.9} \\ (11.0)} &  \makecell{\dash{2.4} \\ (5.0)} \\
 \hdashline
               $\mathrm{TS}$ & 5.8 &   8.3 &  24.4 &  \tblwinrec{\tblopt{40.0}} &  18.9 &  2.4 &   0.3 &  \makecell{20.8 \\ (15.8)} &  \makecell{27.3 \\ (15.8)} &  \makecell{\tblopt{24.4} \\ (14.8)} &         \makecell{13.0 \\ (11.2)} &          \makecell{5.5 \\ (6.3)} &         \makecell{3.3 \\ (4.3)} \\
\hdashline
    $\mathrm{TS}\_\mathrm{A}$ & 6.9 &  16.7 &  30.6 &  \tblwinrec{\tblopt{30.6}} &  14.4 &   0.8 &   0.0 &  \makecell{25.9 \\ (19.2)} &  \makecell{33.8 \\ (18.6)} &  \makecell{\tblopt{22.8} \\ (16.1)} &   \makecell{\dash{8.6} \\ (11.7)} &   \makecell{\dash{1.8} \\ (4.8)} &  \makecell{\dash{0.2} \\ (1.3)} \\
\hline
\end{tabular}
\end{table}       
\begin{table}[p]
\caption{Results for MED identification (part 2/3).}
\label{tbl-eff-b}
\renewcommand{\tabcolsep}{.3em}
\centering
\begin{tabular}{lccccccc|cccccc}
    Algorithm &  E-Stop 
    &\multicolumn{6}{c}{ Recommended} & \multicolumn{6}{c}{Allocated} \\
    \hline
    &
    &   \multicolumn{1}{c}{1}
    &   \multicolumn{1}{c}{2}
    &   \multicolumn{1}{c}{3}
    &   \multicolumn{1}{c}{4}
    &   \multicolumn{1}{c}{5}
    &   \multicolumn{1}{c}{6}
    &   \multicolumn{1}{c}{1}
    &   \multicolumn{1}{c}{2}
    &   \multicolumn{1}{c}{3}
    &   \multicolumn{1}{c}{4}
    &   \multicolumn{1}{c}{5}
    &   \multicolumn{1}{c}{6} \\
\hline
\multicolumn{2}{l}{\textbf{Sc. 6:} Tox prob} & 0.1  & 0.3 & \underline{0.35} & 0.4 & 0.5 & 0.6 & 0.1  & 0.3 & \underline{0.35} & \dash{0.4} & \dash{0.5} & \dash{0.6} \\ 
\multicolumn{2}{l}{\textbf{Sc. 6:} Eff prob} &   0.3  & 0.4 & \underline{0.5} & 0.7 & 0.7 & 0.7 &   0.3  & 0.4 & \underline{0.5} & 0.7 & 0.7 & 0.7 \\ 
\hline
 $\mathrm{MTA}$-$\mathrm{RA}$ & 4.2 &  11.2 &  24.3 &  \tblopt{24.6} &  28.9 &  5.4 &  1.3 &  \makecell{17.9 \\ (19.9)} &  \makecell{24.2 \\ (22.2)} &  \makecell{\tblopt{23.7} \\ (18.6)} &  \makecell{\dash{20.7} \\ (19.8)} &  \makecell{\dash{7.7} \\ (10.5)} &  \makecell{\dash{1.7} \\ (4.2)} \\
 \hdashline
               $\mathrm{TS}$ & 8.4 &  17.8 &  41.9 &  \tblopt{22.4} &   8.1 &  1.1 &   0.2 &  \makecell{29.6 \\ (21.3)} &  \makecell{30.4 \\ (17.0)} &  \makecell{\tblopt{16.9} \\ (13.4)} &           \makecell{8.2 \\ (9.5)} &          \makecell{4.0 \\ (5.7)} &         \makecell{2.4 \\ (3.8)} \\
\hdashline
    $\mathrm{TS}\_\mathrm{A}$ & 9.4 &  28.5 &  43.6 &  \tblopt{14.2} &   4.0 &   0.2 &   0.0 &  \makecell{34.5 \\ (24.0)} &  \makecell{37.2 \\ (20.2)} &  \makecell{\tblopt{14.3} \\ (14.5)} &    \makecell{\dash{3.9} \\ (8.3)} &   \makecell{\dash{0.6} \\ (2.7)} &  \makecell{\dash{0.1} \\ (0.9)} \\
\hline
\multicolumn{2}{l}{\textbf{Sc. 7:} Tox prob} & 0.03  & \underline{0.06} & 0.1 & 0.2 & 0.4 & 0.5 & 0.03  & \underline{0.06} & 0.1 & 0.2 & \dash{0.4} & \dash{0.5} \\ 
\multicolumn{2}{l}{\textbf{Sc. 7:} Eff prob} & 0.3  & \underline{0.5} & 0.52 & 0.54 & 0.55 & 0.55 &  0.3  & \underline{0.5} & 0.52 & 0.54 & 0.55 & 0.55 \\ 
\hline 
 $\mathrm{MTA}$-$\mathrm{RA}$ & 0.1 &   8.6 &              \tblopt{45.5} &  25.1 &  13.7 &  5.7 &  1.4 &  \makecell{16.1 \\ (14.6)} &  \makecell{\tblopt{31.5} \\ (21.8)} &  \makecell{22.8 \\ (17.0)} &  \makecell{17.0 \\ (14.0)} &  \makecell{\dash{9.9} \\ (10.9)} &  \makecell{\dash{2.5} \\ (6.1)} \\
 \hdashline
               $\mathrm{TS}$ & 0.7 &  10.3 &              \tblopt{43.7} &  22.1 &  16.3 &  5.7 &  1.2 &   \makecell{17.5 \\ (8.9)} &  \makecell{\tblopt{22.7} \\ (11.3)} &  \makecell{23.2 \\ (10.8)} &  \makecell{20.6 \\ (11.0)} &         \makecell{10.3 \\ (7.6)} &         \makecell{4.9 \\ (5.2)} \\
\hdashline
    $\mathrm{TS}\_\mathrm{A}$ & 0.4 &  11.3 &  \tblwinrec{\tblopt{47.9}} &  22.8 &  13.3 &  4.2 &   0.1 &  \makecell{19.8 \\ (11.2)} &  \makecell{\tblopt{26.9} \\ (13.3)} &  \makecell{26.1 \\ (11.6)} &  \makecell{19.0 \\ (12.6)} &   \makecell{\dash{6.6} \\ (8.0)} &  \makecell{\dash{1.2} \\ (3.5)} \\
\hline
\multicolumn{2}{l}{\textbf{Sc. 8:} Tox prob} & 0.02  & 0.07 & \underline{0.13} & 0.17 & 0.25 & 0.3 & 0.02  & 0.07 & \underline{0.13} & 0.17 & 0.25 & 0.3\\ 
\multicolumn{2}{l}{\textbf{Sc. 8:} Eff prob}  & 0.3  & 0.5 & \underline{0.7} & 0.73 & 0.76 & 0.77 & 0.3  & 0.5 & \underline{0.7} & 0.73 & 0.76 & 0.77 \\
      \hline
 $\mathrm{MTA}$-$\mathrm{RA}$ & 0.1 &  1.1 &  10.2 &              \tblopt{39.0} &  24.4 &  16.8 &   8.4 &   \makecell{9.3 \\ (7.5)} &  \makecell{15.8 \\ (14.8)} &  \makecell{\tblopt{28.8} \\ (21.0)} &  \makecell{22.6 \\ (16.0)} &  \makecell{15.7 \\ (14.1)} &  \makecell{7.8 \\ (12.3)} \\
 \hdashline
               $\mathrm{TS}$ & 0.3 &  1.2 &  11.1 &              \tblopt{36.9} &  24.2 &  16.1 &  10.2 &  \makecell{12.1 \\ (7.2)} &   \makecell{17.4 \\ (9.9)} &  \makecell{\tblopt{24.1} \\ (12.1)} &  \makecell{21.9 \\ (10.8)} &  \makecell{15.0 \\ (10.3)} &   \makecell{9.1 \\ (8.2)} \\
\hdashline
    $\mathrm{TS}\_\mathrm{A}$ & 0.3 &  1.8 &  13.2 &  \tblwinrec{\tblopt{45.6}} &  24.1 &  11.4 &   3.7 &  \makecell{14.2 \\ (9.4)} &  \makecell{22.2 \\ (13.5)} &  \makecell{\tblopt{28.6} \\ (13.7)} &  \makecell{21.0 \\ (12.5)} &  \makecell{10.3 \\ (11.2)} &   \makecell{3.4 \\ (6.9)} \\
\hline
%
\multicolumn{2}{l}{\textbf{Sc. 9:} Tox prob} & 0.25  & 0.43 & 0.50 & 0.58 & 0.64 & 0.75 & 0.25  & \dash{0.43} & \dash{0.50} & \dash{0.58} & \dash{0.64} & \dash{0.75} \\ 
\multicolumn{2}{l}{\textbf{Sc. 9:} Eff prob}  & 0.3  & 0.4 & 0.5 & 0.6 & 0.61 & 0.63 & 0.3  & \underline{0.4} & 0.5 & 0.6 & 0.61 & 0.63 \\ 
\hline
 $\mathrm{MTA}$-$\mathrm{RA}$ & 18.8 &  40.0 &  33.1 &  7.0 &  0.9 &  0.1 &  0.1 &  \makecell{32.0 \\ (30.3)} &  \makecell{\dash{30.3} \\ (24.5)} &  \makecell{\dash{13.6} \\ (15.6)} &  \makecell{\dash{4.3} \\ (7.4)} &  \makecell{\dash{1.0} \\ (3.2)} &  \makecell{\dash{0.1} \\ (0.7)} \\
 \hdashline
               $\mathrm{TS}$ &  \tblwinrec{49.0} &  37.3 &  12.4 &  1.1 &  0.1 &  0.0 &  0.0 &  \makecell{29.0 \\ (31.9)} &         \makecell{13.7 \\ (16.5)} &           \makecell{4.5 \\ (7.5)} &         \makecell{1.9 \\ (3.9)} &         \makecell{1.1 \\ (2.8)} &         \makecell{0.8 \\ (2.1)} \\
\hdashline
    $\mathrm{TS}\_\mathrm{A}$ &  \tblwinrec{50.5} &  39.8 &   9.2 &   0.5 &  0.1 &  0.0 &  0.0 &  \makecell{31.2 \\ (35.1)} &  \makecell{\dash{14.6} \\ (18.6)} &    \makecell{\dash{3.3} \\ (7.0)} &  \makecell{\dash{0.4} \\ (1.9)} &  \makecell{\dash{0.0} \\ (0.3)} &  \makecell{\dash{0.0} \\ (0.2)} \\
\hline
\multicolumn{2}{l}{\textbf{Sc. 10:} Tox prob} & 0.05  & 0.1 & 0.25 & 0.55 & 0.7 & 0.9 & 0.05  & 0.1 & 0.25 & \dash{0.55} & \dash{0.7} & \dash{0.9} \\ 
\multicolumn{2}{l}{\textbf{Sc. 10:} Eff prob} & 0.01  & 0.02 & 0.05 & 0.35 & 0.55 & 0.7 & 0.01  & 0.02 & 0.05 & 0.35 & 0.55 & 0.7 \\ 
\hline 
 $\mathrm{MTA}$-$\mathrm{RA}$ & 91.8 &    0.5 &   0.5 &  2.3 &   4.8 &   0.1 &  0.0 &  \makecell{0.6 \\ (2.8)} &   \makecell{0.7 \\ (2.6)} &    \makecell{1.3 \\ (5.9)} &  \makecell{\dash{4.4} \\ (15.4)} &  \makecell{\dash{1.1} \\ (4.3)} &  \makecell{\dash{0.2} \\ (1.1)} \\
 \hdashline
               $\mathrm{TS}$ & 61.9 &  12.2 &  2.5 &  1.8 &  19.7 &  1.8 &  0.1 &  \makecell{3.8 \\ (6.8)} &  \makecell{8.9 \\ (15.2)} &  \makecell{16.3 \\ (23.0)} &         \makecell{6.3 \\ (10.6)} &         \makecell{1.9 \\ (4.0)} &         \makecell{0.9 \\ (2.3)} \\
\hdashline
    $\mathrm{TS}\_\mathrm{A}$ &  \tblwinrec{94.1} &    0.2 &   0.1 &  1.2 &   4.3 &   0.1 &  0.0 &  \makecell{0.5 \\ (2.2)} &   \makecell{0.6 \\ (2.5)} &    \makecell{1.4 \\ (6.1)} &  \makecell{\dash{2.9} \\ (12.0)} &  \makecell{\dash{0.5} \\ (2.4)} &  \makecell{\dash{0.0} \\ (0.4)} \\
\hline
\end{tabular}
\end{table}       
\begin{table}[p]
\caption{Results for MED identification (part 3/3).}
\label{tbl-eff-b}
\renewcommand{\tabcolsep}{.3em}
\centering
\begin{tabular}{lccccccc|cccccc}
    Algorithm &  E-Stop 
    &\multicolumn{6}{c}{ Recommended} & \multicolumn{6}{c}{Allocated} \\
    \hline
    &
    &   \multicolumn{1}{c}{1}
    &   \multicolumn{1}{c}{2}
    &   \multicolumn{1}{c}{3}
    &   \multicolumn{1}{c}{4}
    &   \multicolumn{1}{c}{5}
    &   \multicolumn{1}{c}{6}
    &   \multicolumn{1}{c}{1}
    &   \multicolumn{1}{c}{2}
    &   \multicolumn{1}{c}{3}
    &   \multicolumn{1}{c}{4}
    &   \multicolumn{1}{c}{5}
    &   \multicolumn{1}{c}{6} \\
\hline
\multicolumn{2}{l}{\textbf{Sc. 11:} Tox prob} & 0.5  & 0.6 & 0.69 & 0.76 & 0.82 & 0.89 & \dash{0.5}  & \dash{0.6} & \dash{0.69} & \dash{0.76} & \dash{0.82} & \dash{0.89} \\ 
\multicolumn{2}{l}{\textbf{Sc. 11:} Eff prob} & 0.4  & 0.55 & 0.65 & 0.65 & 0.65 & 0.65 & 0.4  & 0.55 & 0.65 & 0.65 & 0.65 & 0.65 \\ 
\hline
 $\mathrm{MTA}$-$\mathrm{RA}$ & 90.1 &  9.6 &  0.2 &  0.1 &  0.0 &  0.0 &  0.0 &  \makecell{\dash{7.2} \\ (22.7)} &  \makecell{\dash{2.0} \\ (7.9)} &  \makecell{\dash{0.5} \\ (2.3)} &  \makecell{\dash{0.1} \\ (0.9)} &  \makecell{\dash{0.0} \\ (0.2)} &  \makecell{\dash{0.0} \\ (0.0)} \\
 \hdashline
               $\mathrm{TS}$ &  \tblwinrec{99.8} &   0.2 &  0.0 &  0.0 &  0.0 &  0.0 &  0.0 &          \makecell{0.1 \\ (3.0)} &         \makecell{0.1 \\ (1.2)} &         \makecell{0.0 \\ (0.4)} &         \makecell{0.0 \\ (0.1)} &         \makecell{0.0 \\ (0.0)} &         \makecell{0.0 \\ (0.1)} \\
\hdashline
    $\mathrm{TS}\_\mathrm{A}$ &  \tblwinrec{99.5} &   0.5 &  0.0 &  0.0 &  0.0 &  0.0 &  0.0 &   \makecell{\dash{0.4} \\ (5.6)} &  \makecell{\dash{0.1} \\ (1.6)} &  \makecell{\dash{0.0} \\ (0.2)} &  \makecell{\dash{0.0} \\ (0.0)} &  \makecell{\dash{0.0} \\ (0.0)} &  \makecell{\dash{0.0} \\ (0.0)} \\
\hline
\multicolumn{2}{l}{\textbf{Sc. 12:} Tox prob}  & $0.01$  & $0.02$ & $0.05$ & $\underline{0.1}$ & $0.25$ & $0.5$  & $0.01$  & $0.02$ & $0.05$ & $\underline{0.1}$ & $0.
25$ & \dash{$0.5$} \\
\multicolumn{2}{l}{\textbf{Sc. 12:} Eff prob} & $0.05$  & $0.25$ & $0.45$ & $\underline{0.7}$ & $0.7$ & $0.7$ & $0.05$  & $0.25$ & $0.45$ & $\underline{0.7}$ & $0.7$
& \dash{$0.7$} \\
\hline
 $\mathrm{MTA}$-$\mathrm{RA}$ & 1.0 &  0.1 &  1.2 &   8.9 &              \tblopt{52.8} &  29.4 &  6.4 &  \makecell{5.8 \\ (2.4)} &   \makecell{7.6 \\ (6.5)} &  \makecell{14.6 \\ (15.7)} &  \makecell{\tblopt{35.9} \\ (24.2)} &  \makecell{24.9 \\ (20.0)} &  \makecell{\dash{10.2} \\ (12.8)} \\
 \hdashline
               $\mathrm{TS}$ & 0.8 &  0.0 &   0.7 &  10.0 &  \tblwinrec{\tblopt{57.0}} &  27.7 &  4.0 &  \makecell{7.7 \\ (4.2)} &  \makecell{10.4 \\ (6.5)} &  \makecell{17.9 \\ (10.1)} &  \makecell{\tblopt{32.2} \\ (13.6)} &  \makecell{21.9 \\ (11.9)} &           \makecell{9.3 \\ (7.0)} \\
\hdashline
    $\mathrm{TS}\_\mathrm{A}$ & 1.7 &  0.0 &  1.4 &  10.0 &  \tblwinrec{\tblopt{56.0}} &  26.8 &  4.2 &  \makecell{7.5 \\ (3.9)} &  \makecell{11.3 \\ (8.5)} &  \makecell{19.5 \\ (11.3)} &  \makecell{\tblopt{32.1} \\ (14.4)} &  \makecell{21.6 \\ (13.0)} &    \makecell{\dash{6.4} \\ (7.5)} \\
\hline
\multicolumn{2}{l}{\textbf{Sc. 13:} Tox prob}  & $0.01$  & $0.05$ & $0.1$ & $0.2$ & $\underline{0.3}$ & $0.5$  & $0.01$  & $0.05$ & $0.1$ & $0.2$ & $\underline{0.3}$ & $\dash{0.5}$ \\
\multicolumn{2}{l}{\textbf{Sc. 13:} Eff prob}  & $0.05$  & $0.1$ & $0.2$ & $0.35$ & $\underline{0.55}$ & $0.55$ & $0.05$  & $0.1$ & $0.2$ & $0.35$ & $\underline{0.55}$ & $0.55$ \\
\hline 
 $\mathrm{MTA}$-$\mathrm{RA}$ & 14.9 &  0.7 &  1.8 &  5.5 &  17.0 &  \tblopt{50.3} &  9.7 &  \makecell{6.4 \\ (6.5)} &   \makecell{7.4 \\ (7.3)} &  \makecell{11.1 \\ (12.6)} &  \makecell{18.7 \\ (18.7)} &  \makecell{\tblopt{30.7} \\ (23.8)} &  \makecell{\dash{10.8} \\ (14.0)} \\
 \hdashline
               $\mathrm{TS}$ & 8.6 &  0.5 &  1.8 &  6.7 &  37.6 &  \tblopt{39.0} &  5.6 &  \makecell{9.1 \\ (6.3)} &  \makecell{11.5 \\ (8.2)} &  \makecell{17.5 \\ (12.0)} &  \makecell{26.3 \\ (14.8)} &  \makecell{\tblopt{18.6} \\ (14.2)} &           \makecell{8.4 \\ (7.7)} \\
\hdashline
    $\mathrm{TS}\_\mathrm{A}$ & 17.3 &  0.5 &  1.4 &  7.4 &  31.6 &  \tblopt{37.5} &  4.2 &  \makecell{7.2 \\ (4.5)} &   \makecell{9.1 \\ (6.8)} &  \makecell{16.7 \\ (13.7)} &  \makecell{26.8 \\ (17.1)} &  \makecell{\tblopt{18.1} \\ (15.3)} &    \makecell{\dash{4.7} \\ (7.3)} \\
\hline
\end{tabular}
\end{table}

\section{Revisiting the Treatment versus Experimentation Trade-off}\label{sec:Discussion}

Ideally, a good design for MTD identification should be supported by a control of both the error probability $e_n = \bP(\hat{k}_n \neq k^*)$ and the number of sub-optimal selections $\bE[N_k(n)]$ for $k\neq k^*$. These two quantities are respectively useful to check whether the design achieves a \emph{good identification of the optimal dose} and whether \emph{a large number of patients have been treated with the optimal dose}.   

For classical bandits (in which $k^*$ is the arm with largest mean instead of the MTD), those two performance measures are known to be antagonistic. Indeed, \cite{Bubeckal11} shows that the smaller the regret (a quantity that can be related to the number of sub-optimal selections), the larger the error probability. Such a trade-off may also exist for the MTD identification problem. However, the precise statement of such a result would be meaningful for large values of the number of patients $n$, which is of little interest for a real clinical trial as it can only involve a small number of patients. In practice, we showed that adaptations of Thompson Sampling, a bandit design aimed at maximizing rewards, achieve good performance in terms of both allocation and recommendation. 

Still, another natural avenue of research is to investigate the adaptation of bandit designs aimed at minimizing the error probability. Minimizing the error probability for MTD can be viewed as a variant of the fixed-budget Best Arm Identification (BAI) problem introduced by \cite{Bubeck10BestArm,Bubeckal11}. In contrast to the standard BAI problem that aims to identify the arm with largest mean (which would correspond here to the most toxic dose), the focus is on identifying the arm whose mean is closest to the threshold $\theta$. A state-of-the art fixed-budget BAI algorithm is Sequential Halving \citep{Karnin13}, and we propose in Algorithm~\ref{alg-ST} a natural adaptation to MTD identification. 

Sequential Halving for MTD identification proceeds in phases. In each of the $\log_2(K)$ phases, all the remaining doses are allocated the same amount of times to patients and their empirical toxicity based on these allocations (that is, the average of the toxicity responses) is computed. At the end of each phase the empirical worst half of the doses is eliminated. For MTD identification, rather than the doses with the smallest empirical means (as the vanilla Sequential Halving algorithm would do), the doses whose empirical toxicity are the furthest away from the threshold $\theta$ are eliminated. Observe that by design of the algorithm, the total number of allocated doses is indeed smaller than the prescribed budget $n$.

\begin{algorithm}[h!]
\label{alg-ST}
\begin{algorithmic}
\State \textbf{Input:} budget $n$, target toxicity $\theta$
\State \textbf{Initialization:} Set of dose levels $S_0 \leftarrow \{1, \dots, K\}$;
  \For{$r \leftarrow 0$ to $\lceil\log_2(K)\rceil-1$}
    \State Allocate each dose $k \in S_r$ to $t_r = \left\lfloor\frac{n}{|S_r|\lceil\log_2(K)\rceil}\right\rfloor$ patients; 
    \State Based on their response compute $\hat p_k^r$, the empirical toxicity of dose $k$ based on these $t_r$ samples;
    \State Compute $S_{r+1}$ the set of $\lceil|S_r|/2\rceil$ arms with 
        smallest $\hat d_k^r:=|\theta - \hat{p}_k^r|$
  \EndFor
\State \textbf{Output:} the unique arm in  $S_{\lceil{\log_2(K)}\rceil}$
\end{algorithmic}
\caption{Sequential Halving for MTD Identification}
\end{algorithm}


Building on the analysis of \cite{Karnin13}, one can establish the following upper bound on the error probability of Sequential Halving for MTD identification. The proof can be found in Appendix~\ref{sec:SHproof}. 

\begin{theorem}\label{thm:SH}
The error probability of the \texttt{SH} algorithm is upper bounded as 
\begin{align*}
  \bP\left(\hat{k}_n \neq k^*\right) \leq 9 \log_2 K \cdot \exp \left(
  - \frac{n}{8 H_2 (\bm p)\log_2 K}
  \right),
\end{align*}
where $H_2(\bm p):= \max_{k \ne k^*} {k}{\Delta_{[k]}^{-2}}$ where $\Delta_k = |p_k - \theta| - |p_{k^*} - \theta|$ and $\Delta_{[1]} \leq \Delta_{[2]} \leq \dots \leq \Delta_{[K]}$.
\end{theorem}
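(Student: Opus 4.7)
The plan is to adapt the analysis of Sequential Halving for best-arm identification by \cite{Karnin13} to the MTD setting. The key reduction is to treat the distance to threshold $d_k := |\theta - p_k|$ as the quantity to minimize, with empirical counterpart $\hat d_k^r := |\theta - \hat p_k^r|$. Structurally, $d_k$ plays exactly the role of $-\mu_k$ in standard Sequential Halving (where arms with large mean are ``good''), so a proof template is ready to be imported; the task is to verify that each concentration argument survives the substitution.

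The first building block is a concentration inequality for $\hat d_k^r$. By the reverse triangle inequality $|\hat d_k^r - d_k| \leq |\hat p_k^r - p_k|$, so Hoeffding's inequality applied to the $t_r$ Bernoulli samples collected in round $r$ gives $\bP(|\hat d_k^r - d_k| \geq \eta) \leq 2\exp(-2 t_r \eta^2)$. Next I would analyze the per-round elimination of $k^*$. At round $r$ with $|S_r| = n_r$ and $k^* \in S_r$, for $k^*$ to be eliminated at least $\lceil n_r/2\rceil$ arms of $S_r \setminus\{k^*\}$ must satisfy $\hat d_j^r \leq \hat d_{k^*}^r$. Defining the bad set $B_r$ as the $\lfloor n_r/2\rfloor$ arms of $S_r \setminus \{k^*\}$ with largest gap $\Delta_j = d_j - d_{k^*}$, a pigeonhole argument shows that elimination of $k^*$ forces at least one $j \in B_r$ to satisfy $\hat d_j^r \leq \hat d_{k^*}^r$. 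The triangle inequality then decomposes this event: either $|\hat d_{k^*}^r - d_{k^*}| \geq \Delta_j/2$ or $|\hat d_j^r - d_j|\geq \Delta_j/2$, each controlled by the concentration above.

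The key quantitative step is to lower-bound the gaps of bad arms uniformly in $r$. Since $S_r \subseteq \{1,\dots,K\}$ and $B_r$ is the top $\lfloor n_r/2\rfloor$ of $S_r \setminus\{k^*\}$ by true gap, its $j$-th smallest gap is at least $\Delta_{[\lceil n_r/2\rceil + j]}$, and the definition of $H_2$ yields $\Delta_{[k]}^{2} \geq k/H_2$. Combined with the algorithmic budget lower bound $t_r \geq n/(2 n_r \lceil \log_2 K\rceil)$, this gives $t_r \Delta_{[\lceil n_r/2\rceil + j]}^2/2 \geq n/(8 H_2 \lceil \log_2 K\rceil)$ for every $j \geq 1$. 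Summing the union bound $\sum_{j \in B_r} 4\exp(-t_r \Delta_j^2/2)$ as a geometric series in $j$ yields a per-round failure probability of order $\exp(-n/(8 H_2 \lceil\log_2 K\rceil))$, uniformly in $r$. A final union bound over the $\lceil \log_2 K\rceil$ rounds, plus careful tracking of the prefactors coming from the four-term Hoeffding decomposition, yields the claimed $9 \log_2 K \cdot \exp(-n/(8 H_2 \log_2 K))$.

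The main obstacle is obtaining a per-round bound of order $O(\exp(\cdot))$ rather than $O(n_r \exp(\cdot))$: a crude union bound over $|B_r| = \lfloor n_r/2\rfloor$ bad arms would give $\sum_r n_r = O(K)$ in the final bound, much worse than the desired $O(\log_2 K)$. Resolving this requires exploiting the geometric decay of $\exp(-t_r \Delta_{[k]}^2/2)$ in $k$ (since $\Delta_{[k]}^2 \geq k/H_2$ grows at least linearly), which absorbs the bad-set size into an absolute constant. Aside from this bookkeeping, the adaptation of Karnin's proof to distances is essentially transparent, thanks to the fact that the symmetric concentration of $\hat d_k^r$ around $d_k$ matches that of $\hat p_k^r$ around $p_k$ up to an unavoidable factor of two (since both $\hat d_j$ and $\hat d_{k^*}$ must be controlled), which is the only source of the slightly larger constants relative to the original best-arm identification bound.
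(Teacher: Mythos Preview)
Your plan to import the Sequential Halving analysis is sound, and your concentration step via the reverse triangle inequality $|\hat d_k^r - d_k|\le |\hat p_k^r - p_k|$ is in fact cleaner than the paper's case analysis (which establishes $\bP(\hat d_{k^*}^r > \hat d_k^r)\le 3\exp(-t_r\Delta_k^2/2)$ by splitting on the sign of $p_k-\theta$); both routes give the same per-pair exponent. The gap is in the step you yourself flag as the main obstacle: the geometric-series argument does not absorb the bad-set size into a constant. After using $\Delta_{[k]}^2\ge k/H_2$, the sum $\sum_{j\ge 1}\exp\bigl(-\alpha(\lceil n_r/2\rceil+j)\bigr)$ has ratio $e^{-\alpha}$ with $\alpha=t_r/(2H_2)\approx n/(2 n_r H_2\log_2 K)$, which in early rounds is of order $1/n_r$. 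A geometric series with ratio $1-O(1/n_r)$ over $|B_r|\approx n_r/2$ terms is $\Theta(n_r)$, not $O(1)$; hence your pigeonhole-plus-union-bound gives a per-round bound of order $n_r\exp(-n/(8H_2\log_2 K))$, and after summing over rounds you land on an $O(K)$ prefactor rather than $O(\log_2 K)$.

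The paper (following \cite{Karnin13}) avoids this with a Markov argument, which is the missing idea. Instead of ``at least one arm of $B_r$ beats $k^*$'', it defines $S_r'$ as $S_r$ minus its best quarter and observes that elimination of $k^*$ forces $N_r:=\sum_{k\in S_r'}\ind\{\hat d_k^r<\hat d_{k^*}^r\}\ge |S_r'|/3$. Then $\bP(N_r\ge |S_r'|/3)\le 3\,\E[N_r]/|S_r'|$, and since $\E[N_r]\le 3|S_r'|\exp(-t_r\Delta_{[k_r]}^2/2)$ with $k_r=|S_r|/4$, the factor $|S_r'|$ cancels, yielding the per-round bound $9\exp(-n/(8H_2\log_2 K))$ uniformly in $r$ and hence the stated $9\log_2 K$ prefactor. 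Your definition of $B_r$ (top half) is too small to support this: with only $\lceil n_r/2\rceil-1$ ``good'' arms outside $B_r$, pigeonhole guarantees only one bad arm beats $k^*$, so no averaging is possible. Enlarging the bad set to the worst three quarters and replacing the union bound by Markov is what makes the argument go through.
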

A consequence of Theorem~\ref{thm:SH} is that in a trial involving more than $n = 8 H_2(\bm p) \log_2 K\log\left(9\log_2(K)/\delta\right)$ patients, Sequential Halving is guaranteed to identify the MTD with probability larger than $1-\delta$. However, this number is typically much larger than the number of patients involved in a clinical trial. Indeed the complexity term $H_2(\bm p)$ may be quite large, when some doses have a distance to the threshold $\theta$ which is very close to the smallest distance $|p_{k^*} - \theta|$. 

An important shortcoming of Sequential Halving is that due to the uniform exploration within each phase each dose is selected at least $n / (K\log_2(K))$ times, even the largest, possibly harmful ones. This is highly unethical in a  clinical trial without prior knowledge that too-toxic (or too ineffective) doses have already been eliminated. This problem of allocating too extreme doses is likely to be shared by adaptations of any other BAI algorithm, that are expected to select all the arms a linear number of times. For example the APT algorithm proposed by \cite{Locatelli16Thres} to identify all arms with mean above a threshold $\theta$ using a fixed budget $n$ also selects all arms a linear number of times.

To overcome this problem, an interesting avenue of research would be to try to incorporate monotonicity assumptions in BAI algorithms. \cite{Garivier17DF} recently proposed such an algorithm, in the fixed confidence setting: given a risk parameter $\delta$, the goal is to identify a dose $\hat{k}_\tau$  such that $\bP(\hat{k}_\tau \neq k^*) \leq \delta$, using as few samples $\tau$ as possible. Their analysis identifies a minimal sample complexity $\bE[\tau]$ that guarantees a $\delta$-correct identification for any increasing toxicities, which can be obtained under an \emph{optimal allocation} $w^*$ (where $w_k^*$ indicates the fraction of time dose $k$ is allocated). Interestingly, this optimal allocation is supported only on the neighboring doses of the MTD. The fixed-confidence setting requires allowing for random stopping rules $\tau$, i.e. for a dose-finding trial based on an adaptively chosen number of patients. This is not always possible in practice, and it would be interesting to investigate optimal allocations in a fixed-budget setting as well. Yet optimality in the fixed-budget setting is a notoriously hard question  already for classical bandits \citep{Locatelli16LBFB}.


\section{Conclusion}\label{sec:Conclusion}

Motivated by the literature on multi-armed bandit models, we advocated the use of the powerful Thompson Sampling principle for dose-finding studies. This Bayesian randomized algorithm can be used in different contexts as it can leverage different prior information about the doses. For increasing toxicities and increasing or plateau efficacies, we proposed variants of Thompson Sampling, notably the $\mathrm{TS}\_\mathrm{A}$ algorithm that often outperforms our baselines in terms of recommendation of the optimal dose, while significantly reducing the allocation to doses with high toxicity. 

We provided theoretical guarantees for the simplest version of Thompson Sampling based on independent uniform priors on each dose toxicity, but advocated the use of more sophisticated priors for practical dose-finding studies. We believe that finding a practical design for which we can also establish non-trivial finite-time performance guarantees is a crucial research question.

Another interesting direction would be taking contextual information (e.g. a patient's medical history and other medications used) into account for a more ``personalized'' assessment of toxicity and efficacy of a drug. Bayesian methods also seem promising for such an objective, following the success of Thompson Sampling for contextual bandits.

%
%




\section*{Acknowledgments} Emilie Kaufmann acknowledges the support of the French Agence Nationale de la Recherche (ANR) under grant ANR-16-CE40-0002 (BADASS project) and ANR-19-CE23-0026-04 (BOLD project). We thank anonymous reviewers of this paper for their helpful suggestions for improvements. 






\bibliographystyle{biom}
\bibliography{biblioBandits}

\appendix

\section{Analysis of Independent Thompson Sampling: Proof of Theorem~\ref{thm:TS}} \label{proof:TS}

Fix a sub-optimal arm $k$. Several cases need to be considered depending on the relative position of $p_k$ and $p_{k^*}$ with respect to the threshold. All cases can be treated similarly and to fix the ideas, we consider the case $p_{k^*} \geq \theta > p_k$, which is illustrated below. In that case $d^*_k = 2\theta - p_{k^*}$ satisfies $p_k < d_{k}^* \leq \theta$.

\begin{figure}[h]\centering
 \includegraphics[height=7cm,angle=-90]{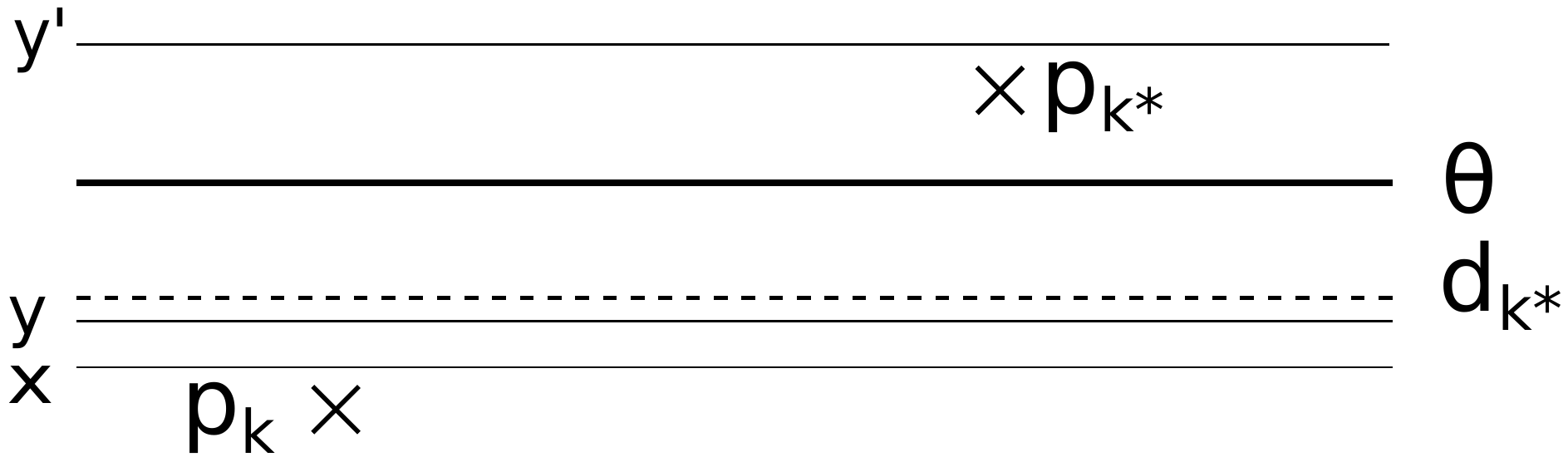}
\end{figure}

Let $x,y \in ]0,1[^2$ be such that $p_k < x < y < d_{k^*}$, that will be chosen later. Define $y' = 2\theta - y > \theta$ the symmetric of $y$ with respect to the threshold (see the above illustration). We denote by  $\hat{\mu}_k(t)$ the empirical mean of the toxicity responses gathered from dose $k$ up to the end of round $t$ and recall $\theta_k(t)$ is the sample from the Beta posterior on $p_k$ after $t$ rounds that is used in the Thompson Sampling algorithm. Inspired by the analysis of \cite{AGAISTAT13}, we introduce the following two events, that are quite likely to happen when enough samples of arm $k$ have been gathered: 
\begin{eqnarray*}
 E_k^\mu (t) & = & \left(\hat{\mu}_k(t) \leq x\right) \ \ \ \text{and} \ \ \ E_k^\theta (t)  =  \left({\theta}_k(t) \leq y\right).
\end{eqnarray*}

The expected number of allocations of dose $k$ is then decomposed in the following way 
\begin{align*}
 \bE[N_k(T)] = &\underbrace{\sum_{t=0}^{T-1}\bP\left(D_{t+1} = k, E_k^\mu(t),E_k^\theta(t)\right)}_{(I)}+ \underbrace{\sum_{t=0}^{T-1}\bP\left(D_{t+1} = k, E_k^\mu(t),\overline{E_k^\theta(t)}\right)}_{(II)}
 \\ &
 	+ \underbrace{\sum_{t=0}^{T-1}\bP\left(D_{t+1} = k, \overline{E_k^\mu(t)}\right)}_{(III)}  
\end{align*}
Terms (II) and (III) are easily controlled using some concentration inequalities and the so-called Beta-Binomial trick, that is the fact that the CDF of a Beta distribution with parameters $a$ and $b$, $F^{\text{Beta}}_{a,b}$, is related to the CDF of a binomial distribution with parameter $n,x$, $F^{B}_{n,x}$, in the following way: \[F^{\text{Beta}}_{a,b}(x) = 1 - F^{B}_{a+b-1, x}(a - 1).\] Term (III) is very small as arm $k$ is unlikely to be drawn often while its empirical mean falls above $x > p_k$ and term (II) grows logarithmically with $T$. More precisely, it can be shown using Lemma 3 and 4 in \cite{AGAISTAT13} that 
\[
 (II)  \leq  \frac{\log(T)}{\kl(x,y)} + 1 \ \ \ \text{ and } \ \ \ (III)  \leq  \frac{1}{\kl(x,y)} + 1.
\] 
The tricky part of the analysis is to control term (I), that is to upper bound the number of selections of dose $k$ when both the empirical mean and the Thompson sample for dose $k$ fall close to the true mean $p_k$. For this purpose, one can prove a counterpart of Lemma 1 in \cite{AGAISTAT13} that relates the probability of selecting dose $k$ to that of selecting the MTD $k^*$. 

\begin{lemma}\label{lem:CrucialAG}Define $p_{y}(t) : = \bP\left({\theta}_{k^*}(t) \in [y,y'] | \cF_{t}\right)$, where $\cF_{s}$ is the filtration generated by the observation up to the end of round $s$. Then 
\begin{align*}
&\bP\left(D_{t+1} = k | E_{k}^\theta(t+1),\cF_{t}\right)\leq \frac{1-p_{y}(t)}{p_y(t)}\bP\left(D_{t+1} = k^* | E_k^\theta(t+1),\cF_{t}\right).
\end{align*}
\end{lemma}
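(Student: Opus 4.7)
\textbf{Proof plan for Lemma~\ref{lem:CrucialAG}.} The plan is to adapt the proof of Lemma~1 in \cite{AGAISTAT13} to the closeness-to-threshold objective used here. The key geometric observation is that on the event $E_k^\theta(t+1) = \{\theta_k(t) \leq y\}$, arm $k$ is at distance at least $\theta - y$ from the target $\theta$, while if the sample $\theta_{k^*}(t)$ falls in the interval $[y, y']$, which is symmetric about $\theta$ since $y' = 2\theta - y$, then arm $k^*$ is at distance at most $\theta - y$ from $\theta$. Hence $\{\theta_{k^*}(t) \in [y,y']\}$ forces $k^*$ to beat (or tie) $k$, playing exactly the role that $\{\theta_1 > y_i\}$ plays in the classical bandit argument of Agrawal and Goyal.

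Concretely, I would introduce the pivot event
\[
F_k = \{\theta_k(t) \leq y\} \cap \bigl\{|\theta_j(t) - \theta| \geq |\theta_k(t) - \theta|\text{ for all }j \notin \{k, k^*\}\bigr\},
\]
which is a function of $\cF_t$ and of the samples $(\theta_j(t))_{j \neq k^*}$, hence independent of $\theta_{k^*}(t)$ given $\cF_t$. Two set inclusions then carry the argument: (i) $\{D_{t+1} = k\} \cap E_k^\theta(t+1) \subseteq F_k \cap \{\theta_{k^*}(t) \notin [y,y']\}$, since if $k$ achieves the $\arg\min$ uniquely then $F_k$ holds and, moreover, $|\theta_{k^*}(t) - \theta| > |\theta_k(t) - \theta| \geq \theta - y$ pushes $\theta_{k^*}(t)$ outside $[y,y']$; and (ii) $F_k \cap \{\theta_{k^*}(t) \in (y, y')\} \subseteq \{D_{t+1} = k^*\} \cap E_k^\theta(t+1)$, since on the left one has $|\theta_{k^*}(t) - \theta| < \theta - y \leq |\theta_k(t) - \theta| \leq |\theta_j(t) - \theta|$ for every $j \neq k^*$, so $k^*$ is strictly closest to $\theta$.

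Conditioning on $\cF_t$ and integrating $\theta_{k^*}(t)$ out, using that its Beta posterior is absolutely continuous, so that $[y,y']$ and $(y,y')$ carry equal mass $p_y(t)$, the two inclusions together with the independence of $\theta_{k^*}(t)$ from $F_k$ yield
\[
\bP\bigl(D_{t+1} = k,\, E_k^\theta(t+1) \,\bigm|\, \cF_t\bigr) \leq (1 - p_y(t))\, \bP(F_k \mid \cF_t),
\]
\[
\bP\bigl(D_{t+1} = k^*,\, E_k^\theta(t+1) \,\bigm|\, \cF_t\bigr) \geq p_y(t)\, \bP(F_k \mid \cF_t).
\]
Taking the ratio eliminates the common factor $\bP(F_k \mid \cF_t)$, and dividing both joint probabilities by $\bP(E_k^\theta(t+1) \mid \cF_t)$ converts them into the conditional probabilities appearing in the statement.

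The main obstacle is a clean treatment of the case split introduced just before the lemma (here $p_{k^*} \geq \theta > p_k$, which fixes on which side of $\theta$ the interval $[y,y']$ is anchored and hence which geometric comparison makes $\theta_{k^*}(t) \in [y,y']$ a ``winning'' event for $k^*$). The three other orderings of $p_k,p_{k^*}$ relative to $\theta$ require symmetric definitions of the good interval for $\theta_{k^*}(t)$ and of $F_k$, but the inclusion-and-integration argument is unchanged. Tie cases between samples are harmless because the posteriors are absolutely continuous, so every equality event has zero conditional probability under $\bP(\cdot \mid \cF_t)$.
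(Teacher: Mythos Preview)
Your proposal is correct and follows essentially the same route as the paper's proof: the paper introduces the pivot event $M_k(t)=\{\,|\theta-\theta_k(t)|\le |\theta-\theta_\ell(t)|\ \forall \ell\neq k^*\,\}$ (your $F_k$ is just $M_k(t)\cap E_k^\theta$), establishes the same two inclusions, and factorizes using the conditional independence of $\theta_{k^*}(t)$ from $(\theta_j(t))_{j\neq k^*}$ given $\cF_t$. Your packaging of $E_k^\theta$ inside the pivot event and your explicit handling of open versus closed intervals via absolute continuity are cosmetic differences; otherwise the arguments coincide.
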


\begin{proof} The proof is inspired of that of Lemma 1 in \cite{AGAISTAT13}. We introduce the event in which the Thompson sample for dose $k$ is the closest to the threshold $\theta$ among all sub-optimal doses:
\[M_k(t) = \{ |\theta - \theta_k(t)| \geq |\theta - \theta_\ell(t)| \forall \ell \neq k^*\}.\]
On the one hand, one has
\begin{align*}
 \bP\left(D_{t+1} = k^* | E_k^\theta(t+1),\cF_t\right)& \geq
 	\bP\left(D_{t+1} = k^*,M_k(t) | E_k^\theta(t+1),\cF_t\right)
 \\ & \geq
   \bP\left(\theta_{k^*}(t) \in [y,y'],M_k(t) | E_k^\theta(t+1),\cF_t\right) 
\\ & =
	p_y(t) \times \bP\left(M_k(t) | E_k^\theta(t+1),\cF_t\right).
\end{align*}
On the other hand, it holds that 
\begin{align*} \bP\left(D_{t+1} = k | E_k^\theta(t+1),\cF_t\right)
 & \leq \bP\left(\theta_{k^*}(t) \notin [y, y'], M_k(t) | E_k^\theta(t+1),\cF_t\right)
\\ & = (1 - p_y(t)) \times \bP\left(M_k(t) | E_k^\theta(t+1),\cF_t\right).
\end{align*}
Combining the two inequalities yields Lemma~\ref{lem:CrucialAG}.
\end{proof}

Using the same steps as \cite{AGAISTAT13} yields an upper bound on the first term:
\[(I) \leq \sum_{j=1}^{T-1}\bE\left[\frac{1}{p_y(\tau_{j})} - 1\right],\]
where $\tau_j$ is the time instant at which dose $k$ is selected for the $j$-th time. The expectation of $1/p_y(\tau_{j})$ can be explicitly written 
\[\bE\left[\frac{1}{p_y(\tau_{j})}\right] =\sum_{s=0}^j \frac{f^B_{j,p_{k^*}}(s)}{\bP\left(y \leq X_{s+1,j-s+1} \leq y'\right)} \]
where $f^B_{n,x}$ stands for the pdf of a Binomial distribution and $X_{a,b}$ denotes a random variable that has a $\mathrm{Beta}(a,b)$ distribution. The following lemma is crucial to finish the proof. This original result was specifically obtained for the MTD identification problem and is needed to control the probability that a Beta distributed random variable fall inside an interval, that is $\bP\left(y \leq X_{s+1,j-s+1} \leq y'\right)$.

\begin{lemma}\label{lem:Technical} There exists $j_0$ such that, for all $j \geq j_0$,
\begin{align*}
&\forall s \in \{0,\dots, j\}, \ \ \bP\left(y \leq X_{s+1,j-s+1} \leq y'\right) \geq \frac{1}{2}\min  \left\{\bP\left(X_{s+1,j+s+1} \geq y\right) ; \bP\left(X_{s+1,j+s+1} \leq y'\right)\right\}
\end{align*}
\end{lemma}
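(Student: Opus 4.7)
The plan is to reduce the Beta tail probabilities to Binomial CDFs via the Beta--Binomial identity and then carry out a case analysis on $s$. Writing $F_{n,p}$ for the Binomial$(n,p)$ CDF and using $\bP(X_{a,b}\ge t)=\bP(\mathrm{Bin}(a+b-1,t)\le a-1)$, the inequality of Lemma~\ref{lem:Technical} takes the form
\[
F_{j+1,y}(s)-F_{j+1,y'}(s)\ \ge\ \tfrac{1}{2}\min\bigl\{F_{j+2s+1,y}(s),\,1-F_{j+2s+1,y'}(s)\bigr\}.
\]
I would partition $\{0,\dots,j\}$ into three regimes depending on the location of $s$: \textbf{(A)} $s\le(j+1)y$, \textbf{(B)} $(j+1)y<s<(j+1)y'$, and \textbf{(C)} $s\ge(j+1)y'$. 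In regime (B), the mean of $X:=X_{s+1,j-s+1}$ lies in $[y,y']$, so by Hoeffding's inequality applied to the equivalent Binomial, $\bP(X\in[y,y'])\to 1$ uniformly as $j\to\infty$, while the right-hand side is trivially at most $1$; the inequality holds comfortably for $j\ge j_0$. In regime (A), a direct computation shows that the mean $(s+1)/(j+2s+2)$ of $Y:=X_{s+1,j+s+1}$ is at most $y/(1+2y)<y$, so $\bP(Y\ge y)\to 0$ while $\bP(Y\le y')\to 1$; the $\min$ therefore equals $\bP(Y\ge y)$. Regime (C) is symmetric with $\min=\bP(Y\le y')$.

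The main work is regime (A). Since $X$ and $Y$ share the first Beta parameter while $Y$ has the larger second parameter, stochastic dominance yields $\bP(X\ge y)\ge\bP(Y\ge y)$, so it suffices to show $\bP(X\in[y,y'])\ge\tfrac12\bP(X\ge y)$, i.e.\ $\bP(X>y')\le\tfrac12\bP(X\ge y)$. In regime (A), the Beta mode $s/j$ is at most $y$, so the density $f_X(x)\propto x^s(1-x)^{j-s}$ is non-increasing and log-concave on $[y,1]$. Its log-derivative at $y'$,
\[
(\log f_X)'(y')=\frac{s}{y'}-\frac{j-s}{1-y'},
\]
is negative with magnitude $\Theta(j)$ for $y,y'$ fixed (more precisely, of order $j(y'-y)/(y'(1-y'))$). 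Log-concavity then gives $f_X(x)\le f_X(y')\exp(-cj(x-y'))$ for $x\ge y'$, whence $\bP(X>y')\le f_X(y')/(cj)$, while $\bP(X\in[y,y'])\ge f_X(y')(y'-y)$ by monotonicity on $[y,y']$. The ratio is $O(1/j)$, so $\bP(X>y')\le\tfrac12\bP(X\in[y,y'])$ for $j\ge j_0$, which combined with $\bP(X\ge y)=\bP(X\in[y,y'])+\bP(X>y')$ yields $\bP(X\in[y,y'])\ge\tfrac23\bP(X\ge y)\ge\tfrac12\bP(Y\ge y)$. Regime (C) is handled by the symmetric argument comparing $\bP(X<y)$ to $\bP(X\le y')$.

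The main obstacle is making the threshold $j_0$ uniform over all $s\in\{0,\dots,j\}$, particularly near the transitions $s\approx(j+1)y$ and $s\approx(j+1)y'$ where the naive concentration of regime (B) and the density-ratio argument of regimes (A)/(C) simultaneously degrade. At these boundaries one needs sharper Chernoff/KL-type Binomial tail estimates---in the spirit of those used for terms (II) and (III) in the proof of Theorem~\ref{thm:TS}---to interpolate across the three regimes and obtain a single $j_0$ that works for every $s$.
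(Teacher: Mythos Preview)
First, a typo in the lemma statement has led you astray: the right-hand side should feature $X_{s+1,j-s+1}$, not $X_{s+1,j+s+1}$. This is clear both from the paper's own proof (each of the four cases concludes with $X_{s+1,j-s+1}$ on the right) and from how the lemma is applied immediately afterward, where $\bP(X_{s+1,\cdot}\ge y)$ is identified with $F^{B}_{j+1,y}(s)$. Your auxiliary variable $Y=X_{s+1,j+s+1}$ and the stochastic-dominance step are therefore unnecessary, and in fact your ``symmetric'' reduction in regime (C) would fail for the typo'd statement: since $Y$ is stochastically smaller than $X$, one has $\bP(X\le y')\le\bP(Y\le y')$, which is the wrong direction. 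For the corrected statement both sides involve the same $X$, and your plan simplifies to showing $\bP(X>y')\le\tfrac12\bP(X\ge y)$ in (A) and the mirror inequality in (C).

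The genuine gap is regime (B). Your claim that $\bP(X\in[y,y'])\to 1$ uniformly is false near the boundaries: when $s=\lceil(j+1)y\rceil$ the mean of $X$ sits essentially at $y$, so $\bP(X<y)\approx\tfrac12$ and $\bP(X\in[y,y'])$ converges to roughly $\tfrac12$, not $1$. You correctly flag this as the obstacle, but the paper's remedy is not an interpolation across your three regimes; it is a \emph{four}-case split at $y$, $\ymid:=(y+y')/2$, and $y'$. In the inner cases (e.g.\ $(j+1)y\le s\le(j+1)\ymid$) one writes $\bP(X\in[y,y'])=\bP(S_{j+1,y}\le s)-\bP(S_{j+1,y'}\le s)$; the first term is $\ge\tfrac12$ because $s$ is at least the Binomial median, while the second is $\le e^{-(j+1)\kl(\ymid,y')}\le\tfrac14$ by Sanov. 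This gives the factor $\tfrac12$ directly, with a $j_0$ depending only on $\kl(\ymid,y')$ and $\kl(\ymid,y)$, uniformly in $s$. In the two outer cases the paper compares the two Binomial tails via Sanov rather than through your density/log-concavity argument; both routes work there, but Sanov produces an explicit $j_0$ and avoids the mode-location issue in your argument (your claim ``mode $s/j\le y$'' is only approximately true at the boundary of (A), so the monotonicity bound $\bP(X\in[y,y'])\ge f_X(y')(y'-y)$ needs additional care).
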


Using Lemma~\ref{lem:Technical} and the Beta-Binomial trick, one can write, for $j \geq j_0$, 
\begin{align}
\nonumber
\bE\left[\frac{1}{p_y(\tau_{j})}\right]
 & \leq
\sum_{s=0}^j \frac{2f^B_{j,p_{k^*}}(s)}{\bP\left(X_{s+1,j+s+1} \geq y\right)}+\sum_{s=0}^j \frac{2f^B_{j,p_{k^*}}(s)}{\bP\left(X_{s+1,j+s+1} \leq y'\right)} 
 \\ \nonumber & =
 \sum_{s=0}^j \frac{2f^B_{j,p_{k^*}}(s)}{F^B_{j+1,y}(s)} + \sum_{s=0}^j \frac{2f^B_{j,p_{k^*}}(s)}{1-F^B_{j+1,y'}(s)}
 \\ & = \sum_{s=0}^j \frac{2f^B_{j,p_{k^*}}(s)}{F^B_{j+1,y}(s)} + \sum_{s=0}^j \frac{2f^B_{j,1-p_{k^*}}(s)}{F^B_{j+1,1-y'}(s)},\label{eq:star}
\end{align}
where the last equality relies on the following properties of the Binomial distribution \[f^B_{n,x}(s) = f^B_{n,1-x}(n-s) \ \ \text{and} \ \ F^B_{n,x}(s) = 1 - F_{n,1-x}(n-s-1)\]
and a change of variable in the second sum. 

Now the following upper bound can be extracted from the proof of Lemma 3 in \cite{AGAISTAT13}. 

\begin{lemma}\label{lem:UB} Fix $u$ and $v$ such that $u < v$ and let $\Delta = v - u$. Then 
\begin{align*}
\sum_{s=0}^j \frac{f^B_{j,v}(s)}{F^B_{j,u}(s)} \leq
 \left\{\begin{array}{cl}
 1 + \frac{3}{\Delta} & \text{if } j < {8}/\Delta,
 \\
 1 + \Theta\left(e^{-\Delta^2 j /2} + \frac{1}{(j+1)\Delta^2}e^{-2\Delta^2 j} + \frac{1}{e^{\Delta^2j/4} - 1}\right) & \text{else.} 
\end{array}
\right.
\end{align*}
\end{lemma}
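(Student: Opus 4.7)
\medskip

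\noindent\textbf{Proof plan.} Write $S_j := \sum_{s=0}^{j} f^B_{j,v}(s)/F^B_{j,u}(s)$. Following Agrawal and Goyal (AISTATS 2013), the strategy is to split the sum at an index $s_0$ lying between the means $ju$ and $jv$ of the two Binomials; a convenient choice is $s_0 = \lceil j(u+v)/2 \rceil$, so that $s_0-ju = jv-s_0 = j\Delta/2$. The two halves are then controlled by two complementary estimates: the upper half by the fact that $F^B_{j,u}$ is nearly $1$, and the lower half by the fact that $f^B_{j,v}$ is in its exponentially small left tail.

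\medskip

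\noindent\textbf{Large-$j$ regime ($j \geq 8/\Delta$).} For the upper half, Hoeffding's inequality applied to $\mathrm{Bin}(j,u)$ gives $1-F^B_{j,u}(s_0) \leq \exp(-j\Delta^2/2)$, so for every $s \geq s_0$, $F^B_{j,u}(s) \geq 1-e^{-j\Delta^2/2}$. The contribution of $\{s \geq s_0\}$ is therefore at most $(1-e^{-j\Delta^2/2})^{-1}\sum_s f^B_{j,v}(s) \leq 1 + 1/(e^{j\Delta^2/2}-1)$, which yields the $1/(e^{\Delta^2 j/4}-1)$ tail term up to constants. For the lower half $\{s<s_0\}$, the trivial relaxation $F^B_{j,u}(s) \geq f^B_{j,u}(s)$ turns the summand into the likelihood ratio $(v/u)^s\bigl((1-v)/(1-u)\bigr)^{j-s}$, which is nothing but $f^B_{j,v}(s)/f^B_{j,u}(s)$. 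I would then locate the maximizer of this ratio over $s \leq s_0$, show it is attained at the endpoint $s \approx s_0$ where it is of order $\exp(-2j\Delta^2)$ by a direct logarithmic computation, and sum the remaining terms as a geometric series whose ratio is bounded away from $1$ by an amount of order $\Delta^2$. This yields the $\frac{1}{(j+1)\Delta^2}e^{-2j\Delta^2}$ contribution, together with a residual $e^{-j\Delta^2/2}$ coming from matching the endpoint.

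\medskip

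\noindent\textbf{Small-$j$ regime ($j < 8/\Delta$).} Exponential concentration provides no useful information here since $j\Delta^2$ is small. Instead, I would identify the index $s^\star$ maximizing $f^B_{j,v}(s)/F^B_{j,u}(s)$ by comparing consecutive ratios, observe that $s^\star$ lies near the peak $jv$ of the numerator, and argue that $F^B_{j,u}(s^\star) \gtrsim \Delta$, essentially because along a range of $\Theta(1)$ indices the Binomial$(j,u)$ mass cannot be smaller than a constant multiple of $\Delta$ when the two means differ by only $j\Delta$. Using $\sum_s f^B_{j,v}(s)=1$ together with the pointwise bound $1/F^B_{j,u}(s) \leq C/\Delta$ on a $\Theta(1/\Delta)$-sized window around $s^\star$, and the fact that the total number of summands is $j+1 \leq 1+8/\Delta$, gives the crude estimate $S_j \leq 1 + 3/\Delta$.

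\medskip

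\noindent\textbf{Main difficulty.} The delicate step is the lower-half bound in the large-$j$ regime: extracting the polynomial prefactor $1/((j+1)\Delta^2)$ in front of $e^{-2j\Delta^2}$ requires more than a Chernoff bound on the likelihood ratio. One has to approximate the binomial coefficients via Stirling, track the dominant term of $(v/u)^s((1-v)/(1-u))^{j-s}$ near $s = s_0$, and verify that the geometric-series remainder is controlled by a ratio of the form $1 - c\Delta^2$. This bookkeeping is exactly what is carried out in the proof of Lemma~3 of \cite{AGAISTAT13}, from which the stated inequality can be read off; the statement here is simply a repackaging of the intermediate estimates obtained there.
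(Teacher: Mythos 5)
The paper does not actually prove this lemma: it states it as something that ``can be extracted from the proof of Lemma 3 in \cite{AGAISTAT13},'' so your closing deferral to that reference is precisely the paper's own route. The problem is with the reconstruction you offer in place of that extraction: the lower-half bound in the large-$j$ regime, as you describe it, fails. If you split at $s_0=\lceil j(u+v)/2\rceil$ and apply $F^B_{j,u}(s)\ge f^B_{j,u}(s)$ on all of $\{s<s_0\}$, the resulting summand is the likelihood ratio $f^B_{j,v}(s)/f^B_{j,u}(s)=\exp\left(-j\left[\kl(s/j,v)-\kl(s/j,u)\right]\right)$, and for $s$ in the range $ju<s<s_0$ this need not be small: at $s/j=m:=(u+v)/2$ it equals $\exp\left(j\left[\kl(m,u)-\kl(m,v)\right]\right)$, which is \emph{exponentially large} in $j$ whenever $\kl(m,u)>\kl(m,v)$ (take $u=0.1$, $v=0.3$, $m=0.2$: $\kl(0.2,0.1)\approx 0.044 > \kl(0.2,0.3)\approx 0.026$). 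Your geometric series is then dominated by its last term and the bound is vacuous. Relatedly, the maximizer of this ratio over $s\le s_0$ is not ``of order $e^{-2j\Delta^2}$''; the value $e^{-j\,\kl(u,v)}\le e^{-2j\Delta^2}$ is what you get at $s=ju$, not at $s=j(u+v)/2$.

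The repair, which is what Agrawal and Goyal actually do, is a three-way split. For $s\le\lfloor ju\rfloor$ the bound $F^B_{j,u}(s)\ge f^B_{j,u}(s)$ is safe: the ratio is increasing in $s$, equals roughly $e^{-j\,\kl(u,v)}\le e^{-2j\Delta^2}$ at the endpoint $s=\lfloor ju\rfloor$, and the geometric summation there produces the polynomial prefactor in front of $e^{-2\Delta^2 j}$. For $\lfloor ju\rfloor< s< s_0$ one instead uses that $s$ lies at or above the median of $\mathrm{Bin}(j,u)$, so $F^B_{j,u}(s)\ge 1/2$, and bounds $\sum_{ju\le s<s_0}f^B_{j,v}(s)=\bP(\mathrm{Bin}(j,v)<s_0)\le e^{-j\Delta^2/2}$ by Hoeffding --- this middle regime, not the ``endpoint matching'' of your geometric series, is the source of the $e^{-\Delta^2 j/2}$ term in the statement. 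Your treatment of $\{s\ge s_0\}$ and your (admittedly vague) handling of the small-$j$ case are fine in spirit, but as written the lower-half argument would not survive being fleshed out, so the claim that the lemma ``can be read off'' from your sketch does not hold; it can only be read off from the reference itself, which is all the paper does.
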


Each of the two sums in \eqref{eq:star} can be upper bounded using Lemma~\ref{lem:UB}. Letting $\Delta_1 = p_{k^*} - y$ and $\Delta_2 = y' - p_{k^*}$, one obtains  
\begin{align*}
 (I) \leq &
 	\sum_{j=1}^{j_0} \bE\left[\frac{1}{p_y(\tau_{j})} \right]
 	- j_0 + \frac{24}{\Delta_1^2} + \frac{24}{\Delta_2^2} 
\\ &
	+ C \sum_{j=0}^{T-1} \left[e^{-\Delta_1^2 j /2} + \frac{1}{(j+1)\Delta_1^2}e^{-2\Delta_1^2 j} + \frac{1}{e^{\Delta_1^2j/4} - 1}\right]
\\ &
	+ C \sum_{j=0}^{T-1} \left[e^{-\Delta_2^2 j /2} + \frac{1}{(j+1)\Delta_2^2}e^{-2\Delta_2^2 j} + \frac{1}{e^{\Delta_2^2j/4} - 1}\right], 
\end{align*}
which is a constant (as the series have a finite sum) that only depends on $y, \theta$ and $p_{k^*}$ (through $y'$ and the gaps $\Delta_1$ and $\Delta_2$ defined above). 

Putting things together, we proved that for every $x$ and $y$ satisfying $p_k < x < y < d_{k^*}$, the number of selections of dose $k$ is upper bounded as 
\[ \bE[N_k(T)] \leq \frac{1}{\kl(x,y)}\log(T) + C_{x,y,\theta,\bm p}\]
for some constant that depends on the toxicity probabilities, the threshold $\theta$ and the choice of $x$ and $y$. Now, picking $x$ and $y$ such that $\kl(x,y) = \frac{\kl(p_k,d_{k^*})}{1+\epsilon}$ yield the result. 

\qed 

\paragraph{Proof of Lemma~\ref{lem:Technical}.} The proof uses the two equalities below
\begin{align}
 \bP\left(y \leq X_{s+1,j-s+1} \leq y'\right) & = \bP\left(X_{s+1,j-s+1} \geq y\right) - \bP\left(X_{s+1,j-s+1} \geq y'\right)\label{ForSmallS}
\\ 
\bP\left(y \leq X_{s+1,j-s+1} \leq y'\right)
 & = \bP\left(X_{s+1,j-s+1} \leq y'\right) - \bP\left(X_{s+1,j-s+1} \leq y\right),\label{ForLargeS}
\end{align}
as well as the Sanov inequalities: if $S_{n,x}$ is a binomial distribution with parameters $n$ and $x$, then 
\begin{align}
\nonumber
\frac{e^{-n\kl(k/n,x)}}{n+1}
& \leq \bP\left(S_{n,x} \geq k\right) 
\\ & \leq e^{-n\kl(k/n,x)} \ \ \text{if } \ k > xn \label{Sanov}
\\ \nonumber
\frac{e^{-n\kl(k/n,x)}}{n+1}
& \leq \bP\left(S_{n,x} \leq k\right)
\\ & \leq
	e^{-n\kl(k/n,x)} \ \ \text{if } \ k < xn \label{SanovMin}
\end{align}

We prove the inequality considering 4 cases. We define $\ymid = \frac{y+y'}{2}$. 

\paragraph{Case 1: $\bm{s < (j+1)y}$} Starting from equality \eqref{ForSmallS} and using the Beta-Binomial trick yields  
\begin{align*}
&\bP\left(y \leq X_{s+1,j-s+1} \leq y'\right) = \bP\left(S_{j+1,y} \leq s\right) - \bP\left(S_{j+1,y'} \leq s\right).
\end{align*}
Using Sanov inequalities, we shall prove that there exists some $j_1$ such that if $j\geq j_1$, 
\[\forall s \leq (j+1)y, \ \ \bP\left(S_{j+1,y'} \leq s\right) \leq \frac{1}{2}\bP\left(S_{j+1,y} \leq s\right).\]
As $s$ is smaller than the mean of the two Binomial distributions, by \eqref{SanovMin} it is sufficient to prove that 
\begin{align*}
&\forall s \leq (j+1)y,
\ \ e^{-(j+1)\kl\left(\frac{s}{j+1} , y'\right)} \leq \frac{1}{2(j+2)}e^{-(j+1)\kl\left(\frac{s}{j+1} , y\right)}
\end{align*}
which in turn is equivalent to 
\begin{align*}
&\forall s \leq (j+1)y,
\ \ \kl\left(\frac{s}{j+1} , y'\right)  -  \kl\left(\frac{s}{j+1} , y\right) \geq \frac{\log(2(j+2))}{j+1}.
\end{align*}
As the function in the left-hand side is non-increasing in $s$, a sufficient condition is that $j$ satisfies 
\[ \kl\left(y , y'\right)\geq \frac{\log(2(j+2))}{j+1},\]
which is the case for $j$ superior to some $j_1$. Thus, for $j\geq j_1$, 
\begin{align*}
\bP\left(y \leq X_{s+1,j-s+1} \leq y'\right)\geq \frac{1}{2}\bP\left(S_{j+1,y} \leq s\right)= \frac{1}{2}\bP\left(X_{s+1,j-s+1} \geq y\right).
\end{align*}

\paragraph{Case 2: $\bm{(j+1)y \leq s \leq (j+1)\ymid}$} Starting from equality \eqref{ForSmallS} and using the Beta-Binomial trick and the upper bound in \eqref{SanovMin} yields 
\begin{align*}
\bP\left(y \leq X_{s+1,j-s+1} \leq y'\right) 
 & \geq \bP\left(S_{j+1,y} \leq s\right) - e^{-(j+1)\kl\left(\frac{s}{j+1} , y'\right)}
\\ & \geq \bP\left(S_{j+1,y} \leq s\right) - e^{-(j+1)\kl\left(\ymid , y'\right)}. 
\end{align*}
The median of $S_{j+1,y}$ is $\lfloor(j+1)y\rfloor$ or $\lceil(j+1)y\rceil$. As $s \leq  (j+1)y$, it holds that $\bP\left(S_{j+1,y} \leq s\right) \geq \frac{1}{2}$. Therefore, for all $j \geq j_2 := \frac{\ln 4}{\kl(\ymid,y')}-1$, 
\[e^{-(j+1)\kl\left(\ymid , y'\right)} \leq \frac{1}{4} \leq \frac{1}{2}\bP\left(S_{j+1,y} \leq s\right).\]
Therefore if $j \geq j_2$, $\bP\left(y \leq X_{s+1,j-s+1} \leq y'\right)  \geq \frac{1}{2}\bP\left(X_{s+1,j-s+1} \geq y\right)$.

\paragraph{Case 3: $\bm{(j+1)\ymid \leq s \leq (j+1)y'}$} Starting from equality \eqref{ForLargeS} and using the Beta-Binomial trick and the upper bound in \eqref{Sanov} yields 
\begin{align*}
 \bP\left(y \leq X_{s+1,j-s+1} \leq y'\right) & \geq \bP\left(S_{j+1,y'} \geq s\right) - e^{-(j+1)\kl\left(\frac{s}{j+1} , y\right)}
\\ & \geq \bP\left(S_{j+1,y'} \geq s\right) - e^{-(j+1)\kl\left(\ymid , y\right)}. 
\end{align*}
The median of $S_{j+1,y'}$ is $\lfloor(j+1)y'\rfloor$ or $\lceil(j+1)y'\rceil$. As $s \leq  (j+1)y'$, it holds that $\bP\left(S_{j+1,y'} \geq s\right) \geq \frac{1}{2}$. Therefore, for all $j \geq j_3 := \frac{\ln 4}{\kl(\ymid,y)}-1$, 
\[e^{-(j+1)\kl\left(\ymid , y\right)} \leq \frac{1}{4} \leq \frac{1}{2}\bP\left(S_{j+1,y'} \geq s\right).\]
Therefore if $j \geq j_3$, $\bP\left(y \leq X_{s+1,j-s+1} \leq y'\right)  \geq \frac{1}{2}\bP\left(X_{s+1,j-s+1} \leq y'\right)$.

\paragraph{Case 4: $\bm{s > (j+1)y'}$} Starting from equality \eqref{ForLargeS} and using the Beta-Binomial trick yields  
\[\bP\left(y \leq X_{s+1,j-s+1} \leq y'\right)  =  \bP\left(S_{j+1,y'} \geq s\right) - \bP\left(S_{j+1,y} \geq s\right).\]
Using Sanov inequalities, we shall prove that there exists some $j_4$ such that if $j\geq j_4$, 
\[\forall s \geq (j+1)y', \ \ \bP\left(S_{j+1,y} \geq s\right) \leq \frac{1}{2}\bP\left(S_{j+1,y'} \geq s\right).\]
As $s$ is larger than the mean of the two Binomial distributions, by \eqref{Sanov} it is sufficient to prove that 
\[\forall s \geq (j+1)y', \ \ e^{-(j+1)\kl\left(\frac{s}{j+1} , y\right)} \leq \frac{1}{2(j+2)}e^{-(j+1)\kl\left(\frac{s}{j+1} , y'\right)}\]
which in turn is equivalent to 
\[\forall s \geq (j+1)y', \ \ \kl\left(\frac{s}{j+1} , y\right)  -  \kl\left(\frac{s}{j+1} , y'\right) \geq \frac{\log(2(j+2))}{j+1}.\]
As the function in the left-hand side is non-decreasing in $s$, a sufficient condition is that $j$ satisfies 
\[ \kl\left(y' , y\right)\geq \frac{\log(2(j+2))}{j+1},\]
which is the case for $j$ superior to some $j_4$. Thus, for $j\geq j_4$, 
\begin{align*}
\bP\left(y \leq X_{s+1,j-s+1} \leq y'\right) 
&\geq \frac{1}{2}\bP\left(S_{j+1,y'} \geq s\right) 
\\ &= \frac{1}{2}\bP\left(X_{s+1,j-s+1} \leq y'\right).
\end{align*}

\paragraph{Conclusion} Letting $j_0 = \max(j_1,j_2,j_3,j_4)$, for all $j\geq j_0$, for every $s \in \{0,\dots,j\}$,
\begin{align*}
&\bP\left(y \leq X_{s+1,j-s+1} \leq y'\right) \geq \frac{1}{2}\min \left\{\bP\left(X_{s+1,j+s+1} \geq y\right) ; \bP\left(X_{s+1,j+s+1} \leq y'\right)\right\}
\end{align*}

\section{Lower Bound on the Number of Allocation: Proof of Theorem~\ref{thm:LB}}\label{proof:LB}

Fix a uniformly efficient algorithm and a vector of toxicity probabilities $\bm p$. We denote by $\bE_{\bm p}$ the expectation under the model parameterized by $\bm p$ when this algorithm is used. Letting $\bm p'$ be another vector of probabilities, it follows from the change-of-distribution lemma of \cite{GMS18} that for all random variable $Z_T \in [0,1]$ which is $\cF_T$-measurable 
\begin{equation}\sum_{\ell=1}^K \bE_{\bm p}[N_{\ell}(T)]\kl\left(p_{\ell}, p'_{\ell}\right) \geq \kl\left(\bE_{\bm p}[Z_T], \bE_{\bm p'} [Z_T]\right).\label{eq:CD}\end{equation}
Letting $k^*$ be a MTD in $\bm p$, we fix $k$ which is not a MTD (i.e. $|p_k - \theta | > |p_{k^*} - \theta|$) and we prove that 
\begin{equation}\liminf_{T\rightarrow \infty}\frac{\bE_{\bm p}[N_{k}(T)]}{\ln(T)} \geq \frac{1}{\kl(p_k,d_k^*)}\;.\label{ToProveLB}\end{equation} 

Recall that we assume $p_{k^*} \neq \theta$. Then one can define the alternative model $\bm p'$ in which for all $\ell \neq k$, $p'_{\ell} = p_{\ell}$ and $p'_{k} = d_{k}^* + \epsilon$ if $d_k^* < \theta$ and $p'_{k} = d_{k}^* - \epsilon$ if $d_k^* > \theta$, with $\epsilon$ small enough such that under $\bm p'$, dose $k$ is the unique MTD (refer to Figure~\ref{fig:doses} for an illustration). 

For this particular choice of alternative model $\bm p'$, \eqref{eq:CD} becomes
\begin{eqnarray*}\bE_{\bm p} [N_k(T)] \kl(p_k,d^*_k \pm \epsilon) &\geq& \kl\left(\bE_{\bm p}[Z_T], \bE_{\bm p'} [Z_T]\right)\\
 & \geq & \left(1 -\bE_{\bm p}[Z_T]\right) \ln \left(\frac{1}{1-\bE_{\bm p'} [Z_T]}\right) - \ln(2)
\end{eqnarray*}
Choosing $Z_T = \frac{N_{k}(T)}{T}$, exploiting the fact that the algorithm is uniformly efficient we know that 
\begin{itemize}
 \item $\lim_{T\rightarrow \infty}\bE_{\bm p}[Z_T] = 0$ as $k$ is a sub-optimal dose under $\bm p$ 
 \item $\frac{1}{1-\bE_{\bm p'} [Z_T]} = \frac{T}{T - \bE_{\bm p'}[N_k(T)]} = \frac{T}{\sum_{\ell \neq k} \bE_{\bm p'}[N_{\ell}(T)]}$ and $\sum_{\ell \neq k} \bE_{\bm p'}[N_{\ell}(T)] = o(T^\alpha)$ for all $\alpha \in (0,1)$ as $k$ is the only MTD under $\bm p'$, which yields, for all $\alpha \in (0,1)$, 
 \[\lim_{T\rightarrow \infty} \frac{1}{\ln(T)} \ln \left(\frac{1}{1-\bE_{\bm p'} [Z_T]}\right) \geq 1 - \alpha\;.\]
\end{itemize}
Letting $\alpha$ go to zero, we obtain  
\[\liminf_{T\rightarrow \infty} \frac{\bE_{\bm p} [N_k(T)] \kl(p_k,d^*_k \pm \epsilon)}{\ln(T)} \geq 1\]
and \eqref{ToProveLB} follows by letting $\epsilon$ go to zero. 


\section{Analysis of Sequential Halving: Proof of Theorem~\ref{thm:SH}}
\label{sec:SHproof}

Recall $\hat{d}_k^r = |\theta - \hat{p}_k^t|$ is the empirical distance from the toxicity of dose $k$ to the threshold, where $\hat{p}_k^r$ is the empirical average of the toxicity responses observed for dose $k$ during phase $r$ (based on $t_r$ samples). The central element of the proof is Lemma~\ref{lemma-inversion} below, that controls the probability that dose $k$ seems to be be closer to the threshold than the MTD $k^*$ in phase $r$. Its proof is more sophisticated than that of Lemma 4.2 in \cite{Karnin13} as several cases need to be considered. 
%
%
\begin{lemma}
\label{lemma-inversion}
Assume that the arm closest to $\theta$ was not eliminated
prior to round $r$.
Then for any arm $k \in S_r$,
\begin{equation}\bP(\hat{d}^r_{k^*} > \hat{d}^r_k) \leq 3 \exp\left(- \frac{t_r}{2}\Delta_k^2\right).\label{toproofin4case}\end{equation}
\end{lemma}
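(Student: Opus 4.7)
My plan is to linearize the event $\{\hat d^r_{k^*} > \hat d^r_k\}$ via a simple algebraic identity and then apply Hoeffding's inequality to the sum and difference of the two empirical toxicities. The starting point is
\[
\hat d^r_{k^*} > \hat d^r_k \ \Longleftrightarrow \ (\hat p^r_{k^*}-\theta)^2 > (\hat p^r_k-\theta)^2 \ \Longleftrightarrow \ \bigl(\hat p^r_{k^*} - \hat p^r_k\bigr)\bigl(\hat p^r_{k^*} + \hat p^r_k - 2\theta\bigr) > 0,
\]
which decomposes the event as $A \cup B$ with
\[
A = \{\hat p^r_{k^*} > \hat p^r_k\} \cap \{\hat p^r_{k^*}+\hat p^r_k > 2\theta\}, \qquad B = \{\hat p^r_{k^*} < \hat p^r_k\} \cap \{\hat p^r_{k^*}+\hat p^r_k < 2\theta\}.
\]
I would then bound each of $\bP(A), \bP(B)$ by the probability of whichever of its two defining constraints is the less likely one.

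The crux is a sign-based case analysis. Writing $D_j = p_j - \theta$ so that $\Delta_k = |D_k| - |D_{k^*}| > 0$, one has $p_{k^*} - p_k = D_{k^*} - D_k$ and $p_{k^*} + p_k - 2\theta = D_{k^*} + D_k$. A four-way split over the signs of $(D_{k^*}, D_k)$ reveals a very clean structure: in every case, exactly one of $|p_{k^*}-p_k|$ and $|p_{k^*}+p_k-2\theta|$ equals $\Delta_k$ while the other equals $|D_{k^*}|+|D_k|\ge \Delta_k$. Moreover, the two constraints defining $A$ (resp.\ $B$) always sit on \emph{opposite} sides of their respective true means, one being ``likely'' (consistent with the sign of the mean) and the other ``unlikely''. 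Using the unlikely constraint to bound $\bP(A)$ and $\bP(B)$ thus guarantees a required deviation from the true mean of at least $\Delta_k$ in every scenario.

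Finally, since $\hat p^r_{k^*}$ and $\hat p^r_k$ are independent averages of $t_r$ Bernoulli draws, each is $\frac{1}{4t_r}$-sub-Gaussian around its mean, so both $\hat p^r_{k^*} + \hat p^r_k$ and $\hat p^r_{k^*} - \hat p^r_k$ are $\frac{1}{2t_r}$-sub-Gaussian. Hoeffding's inequality then yields $\bP(A), \bP(B) \le \exp(-t_r \Delta_k^2)$, and combining gives
\[
\bP\bigl(\hat d^r_{k^*} > \hat d^r_k\bigr) \le 2\,e^{-t_r \Delta_k^2} \le 3\,e^{-t_r \Delta_k^2/2},
\]
where the last step uses $2 \le 3\,e^{x/2}$ for all $x \ge 0$. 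The main obstacle is simply the bookkeeping of the case analysis --- once the likely/unlikely structure of $A$ and $B$ is identified in each of the four sign-patterns (both $p_j$ below $\theta$, both above, or one on each side), the same Hoeffding tail of width $\Delta_k$ applies uniformly.
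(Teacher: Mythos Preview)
Your proof is correct and in fact yields a slightly sharper bound than the paper's. The paper proceeds differently: it splits according to the sign of $p_k-\theta$ (two cases) and within each case bounds the error event by the union of three events --- one involving only $\hat p^r_k$ (namely $\{\hat p^r_k \lessgtr \theta\}$) and two involving the sum and difference $\hat p^r_{k^*}\pm\hat p^r_k$ --- and then applies Hoeffding with the range bound $[-1,1]$ for the sum and difference, obtaining the stated $3\exp(-t_r\Delta_k^2/2)$. Your factorization $(\hat p^r_{k^*}-\hat p^r_k)(\hat p^r_{k^*}+\hat p^r_k-2\theta)>0$ is cleaner: it reduces the error event to the two events $A$ and $B$, and your observation that in every sign pattern exactly one of $|D_{k^*}-D_k|,|D_{k^*}+D_k|$ equals $\Delta_k$ and the other equals $|D_{k^*}|+|D_k|$ makes the case bookkeeping transparent. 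You also gain a factor $2$ in the exponent by exploiting the $\tfrac14$-sub-Gaussian constant of a $[0,1]$-bounded variable (Hoeffding's lemma) rather than the cruder $[-1,1]$ range bound on the difference, which is why you end up with $2e^{-t_r\Delta_k^2}$ before weakening to the paper's form. Either route works; yours is a bit more economical and avoids the auxiliary single-arm event.
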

%
%
\begin{proof}
For the means $p_{k^*}$ and $p_k$ let $\hat{p}^r_{k^*}$ and $\hat{p}^r_k$ denote their
expected rewards in round $r$, respectively.
We will first derive a probability bound which does not depend on the
ordering of $p_k$ and $p_{k^*}$ w.r.t. $\theta$, and then we will do a case
analysis of the possible orderings to produce our final bound.

The error event can be decomposed as follows. 
\begin{align*}
&\Set{\hat{d}^r_{k^*} > \hat{d}^r_k} =
\\
	&~~~\left( \Set{\hat{p}_{{k^*},r} > \theta}
		\cap \Set{\hat{p}_{k,r} > \theta}
		\cap \Set{\hat{p}_{{k^*},r} - \theta > \hat{p}_{k,r} - \theta}
		\right) \\
	&\cup \left(
		\Set{\hat{p}_{{k^*},r} \le \theta}
		\cap \Set{\hat{p}_{k,r} > \theta}
		\cap \Set{\theta - \hat{p}_{{k^*},r} > \hat{p}_{k,r} - \theta}
		\right) \\
	&\cup \left(
		\Set{\hat{p}_{{k^*},r} > \theta}
		\cap \Set{\hat{p}_{k,r} \le \theta}
		\cap \Set{\hat{p}_{{k^*},r} - \theta > \theta - \hat{p}_{k,r}}
		\right) \\
	&\cup \left(
		\Set{\hat{p}_{{k^*},r} \le \theta}
		\cap \Set{\hat{p}_{k,r} \le \theta}
		\cap \Set{\theta - \hat{p}_{{k^*},r} > \theta - \hat{p}_{k,r}}
		\right)
\end{align*}
From there, we distinguish two cases, in which we show the error event is included in a reunion of events whose probability can be controlled using the Hoeffding's inequality. 

\paragraph{Case 1: $\bm{p_k \geq \theta}$.} In that case, it is very unlikely that $\{\hat{p}_{k,r} < \theta\}$. Hence, we can isolate that event and use the previous decomposition to write
\begin{align*}
&\Set{\hat{d}^r_{k^*} > \hat{d}^r_k} \subseteq 
\\
&\Set{\hat{p}_{k,r} \le \theta} \cup \Set{\hat{p}_{{k^*},r} - \hat{p}_{k,r} > 0}
		\cup \Set{\hat{p}_{k,r} + \hat{p}_{{k^*},r} < 2\theta}
		.
\end{align*}
When $p_k \geq \theta$, irrespective of the position of $p_{k^*}$ with respect to $\theta$, one can justify that $p_k > \theta$, $p_{k^*} - p_{k} < 0$ and ${p}_{k} + {p}_{{k^*}} > 2\theta$ (as $p_k \geq \max(p_{k^*},2\theta - p_{k^*})$ because $k$ is a suboptimal arm larger than the threshold). Therefore, the above three events are unlikely. More precisely, using Hoeffding's inequality yields  
\begin{align*}
\bP(\hat{d}^r_{k^*} > \hat{d}^r_k) 
 &\leq 
	\bP(\hat{p}_{k,r} \le \theta)
	+ \bP(\hat{p}_{{k^*},r} - \hat{p}_{k,r} > 0)
	+ \bP(\hat{p}_{k^*,r} + p_{k,r} < 2\theta)
\\ &\leq 
	\exp\left( -2t_r (\theta - p_k)^2 \right\}
	+ \exp\left\{ - \frac{t_r}{2} (p_{k^*} - p_k)^2 \right\}
\\
	& \hspace{0.4cm} + \exp\left\{ - \frac{t_r}{2} (p_{k^*} + p_k - 2\theta )^2 \right)
\\ &\leq 
	3 \exp\left( -\frac{t_r}{2} \min\left\{(p_{k} - \theta)^2,
		(p_{k} - p_{k^*})^2,
		(p_{k^*}+ p_k - 2\theta)^2
	  \right\} \right)
\\ & = 
	3 \exp\left( -\frac{t_r}{2} \min\left\{
		(p_{k} - p_{k^*})^2,
		(p_{k} - (2\theta - p_{k^*}))^2
	  \right\} \right)
\end{align*}
Equation~\eqref{toproofin4case} follows as $\Delta_k^2 = \min\left\{
		(p_{k} - p_{k^*})^2,
		(p_{k} - (2\theta - p_{k^*}))^2
	  \right\}$. 

\paragraph{Case 2: $\bm{p_k \leq \theta}$.} In that case, the unlikely event is $\{\hat{p}_{k,r} > \theta\}$ and we write 
\begin{align*}
&\Set{\hat{d}^r_{k^*} > \hat{d}^r_k} \subseteq 
\Set{\hat{p}_{k,r} > \theta} \cup \Set{\hat{p}_{{k},r} - \hat{p}_{k^*,r} > 0}\cup \Set{\hat{p}_{k,r} + \hat{p}_{{k^*},r} > 2\theta}.
\end{align*}
When $p_k < \theta$, irrespective of the position of $p_{k^*}$ with respect to $\theta$, one can justify that $p_k < \theta$, $p_{k} - p_{k^*} < 0$ and ${p}_{k} + {p}_{{k^*}} < 2\theta$ (using the fact that $p_k \leq \min(p_{k^*},2\theta - p_{k^*})$). Then from Hoeffding's inequality,  
\begin{align*}
\bP(\hat{d}^r_{k^*} > \hat{d}^r_k) & \leq 
	\bP(\hat{p}_{k,r} > \theta)
	+ \bP(\hat{p}_{{k},r} - \hat{p}_{k^*,r} > 0)
	+ \bP(\hat{p}_{k^*,r} + p_{k,r} > 2\theta)
\\ & \leq 
	\exp\left( -2t_r (\theta - p_k)^2 \right\}
	+ \exp\left\{ - \frac{t_r}{2} (p_{k^*} - p_k)^2 \right\}
\\ & \hspace{0.4cm}
	+ \exp\left\{ - \frac{t_r}{2} (2\theta - p_{k^*} - p_k)^2 \right)
\\ & \leq 
	3 \exp\left( -\frac{t_r}{2} \min\left\{(\theta - p_k)^2,
		(p_{k^*} - p_k)^2,
		(2\theta - p_{k^*} -p_k)^2
	  \right\} \right)
\\ & = 
	3 \exp\left( -\frac{t_r}{2} \min\left\{
		(p_{k^*} - p_{k})^2,
		((2\theta - p_{k^*}) - p_k)^2
	  \right\} \right)
\end{align*}
which proves Equation~\ref{toproofin4case} as $\Delta_k^2 =\min\left\{
		(p_{k^*} - p_{k})^2,
		((2\theta - p_{k^*}) - p_k)^2
	  \right\}$. 
\end{proof}

Building on Lemma~\ref{lemma-inversion}, the next step is to control the probability that the MTD is eliminated in phase $r$. The proof bears strong similarities with that of Lemma~4.3 in \cite{Karnin13}. It is given below for the sake of completeness. 

%
%
\begin{lemma}
\label{lemma-best-survives}
The probability that the MTD is eliminated at the end of phase $r$ is at most
\begin{align*}
9 \exp\left(
	- \frac{n}{8 \log_2 K} \cdot \frac{\Delta^2_{k_r}}{k_r}
\right)	
\end{align*}
where $k_r = K/2^{r + 2}$.
\end{lemma}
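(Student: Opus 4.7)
The plan is to adapt the argument of \cite{Karnin13} (their Lemma~4.3) to the distance-based notion of sub-optimality used here: the bandit gap is replaced by $\Delta_k = |p_k-\theta| - |p_{k^*}-\theta|$, and single-arm comparisons of sample means are replaced by comparisons of empirical distances to $\theta$, which is precisely what Lemma~\ref{lemma-inversion} provides. I would first partition the arms in $S_r\setminus\{k^*\}$ into two groups: \emph{dangerous} arms, those with $\Delta_k\leq\Delta_{[k_r]}$ (of which at most $k_r$ exist globally, and hence also in $S_r$, by definition of the sorted order $\Delta_{[1]}\leq\dots\leq\Delta_{[K]}$), and \emph{safe} arms, those with $\Delta_k > \Delta_{[k_r]}$.

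The combinatorial core is then the following dichotomy. Since $|S_r|=K/2^r=4k_r$, the rule keeps $|S_r|/2=2k_r$ doses, so if $k^*$ is eliminated at the end of phase $r$ there must exist at least $2k_r$ arms in $S_r\setminus\{k^*\}$ with $\hat{d}_k^r\leq \hat{d}_{k^*}^r$. At most $k_r$ of these can be dangerous, so at least $k_r$ of them must be safe. Writing $N$ for the number of safe arms in $S_r$ that empirically beat $k^*$, this shows $\{k^*\text{ eliminated at end of phase }r\}\subseteq\{N\geq k_r\}$. Each safe arm $k$ satisfies $\Delta_k^2\geq \Delta_{[k_r]}^2$, hence by Lemma~\ref{lemma-inversion} the probability that it empirically beats $k^*$ is at most $3\exp(-t_r\Delta_{[k_r]}^2/2)$. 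Summing over the at most $|S_r|\leq 4k_r$ safe arms and applying Markov's inequality to $N$ yields
\begin{equation*}
\bP\bigl(k^*\text{ eliminated at end of phase }r\bigr) \;\leq\; \frac{\bE[N]}{k_r} \;\leq\; 12\exp\!\left(-\frac{t_r\,\Delta_{[k_r]}^2}{2}\right).
\end{equation*}
Substituting $t_r\geq n/(|S_r|\lceil\log_2 K\rceil) = n/(4k_r\lceil\log_2 K\rceil)$ inside the exponential produces the exponent $n\Delta_{[k_r]}^2/(8k_r\log_2 K)$ announced in the statement.

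I do not expect a genuine obstacle beyond what is already packaged in Lemma~\ref{lemma-inversion}: the argument is a vanilla Markov-plus-union-bound Sequential Halving analysis, the only novelty being that all gaps and deviations are phrased in terms of the distance to the threshold~$\theta$ rather than differences of means. The remaining care is in tracking integer parts in the definition of $t_r$ (and assuming without loss of generality that $K$ is a power of~$2$), and in tightening the prefactor from the $12$ that comes out of my bookkeeping to the announced~$9$; the latter should be achievable by exploiting that $k^*$ itself is one of the top-$k_r$ arms (leaving only at most $k_r-1$ \emph{other} dangerous arms in $S_r$) together with the ``$-1$'' in $|S_r\setminus\{k^*\}|$.
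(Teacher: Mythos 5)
Your route is essentially the paper's: ensure every arm you must compare against $k^*$ has gap at least $\Delta_{[k_r]}$, argue that elimination of $k^*$ forces at least $k_r$ such arms to have smaller empirical distance to $\theta$, bound the expected number of these inversions with Lemma~\ref{lemma-inversion}, and finish with Markov's inequality. The small blemishes you flag or incur (defining ``dangerous'' by the inequality $\Delta_k\le\Delta_{[k_r]}$, which with ties can capture more than $k_r$ arms --- fix by taking a fixed set of $k_r$ smallest-gap arms; the floor in $t_r$; weak versus strict inequalities at elimination) are shared with, or no worse than, the paper's own write-up and are easily repaired.

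The genuine shortfall is the constant. As written your argument proves the lemma with $12$ in place of $9$, and the tightening you sketch does not close the gap: with at most $k_r-1$ dangerous arms in $S_r\setminus\{k^*\}$, hence at least $k_r+1$ safe inverters and at most $4k_r-1$ safe arms, Markov gives $3(4k_r-1)/(k_r+1)$, which exceeds $9$ as soon as $k_r\ge 5$ (so already in the first round for moderately large $K$) and tends to $12$. The trouble is that you worst-case two quantities that cannot be extremal simultaneously: if few globally small-gap arms survive into $S_r$, then the safe set does have size close to $4k_r$, but the Markov threshold is then close to $2k_r$, not $k_r$. The paper sidesteps this by classifying \emph{within} $S_r$: it removes the $k_r$ arms of $S_r$ whose true means are closest to $\theta$, leaving a set $S_r'$ of exactly $3k_r$ arms, each of which still satisfies $\Delta_k\ge\Delta_{[k_r]}$ (since the removed arms, including $k^*$, are all no farther from $\theta$), and elimination of $k^*$ forces at least $\tfrac13|S_r'|=k_r$ arms of $S_r'$ to beat it, so Markov yields $3\cdot 3=9$. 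Your global classification can also be repaired by letting $d$ be the number of dangerous arms actually present in $S_r\setminus\{k^*\}$ and using the matched counts $4k_r-1-d$ (safe arms) and $2k_r-d$ (required safe inverters), giving $3(4k_r-1-d)/(2k_r-d)\le 9k_r/(k_r+1)<9$; but without one of these devices the stated factor $9$ is not established.
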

%
%

%
%
The end of the proof of Theorem~\ref{thm:SH} is identical to than of Theorem~4.1 in \cite{Karnin13}, except that it uses our Lemma~\ref{lemma-best-survives}. We repeat the argument below with the appropriate modifications. 
Observe that if the algorithm recommends a wrong dose, the MTD must have been eliminated in one of t $\log_2(K)$ phases. Using Lemma~\ref{lemma-best-survives} and a union bound yields the upper bound
\begin{align*}
\bP\left(\hat{k}_n \neq k^*\right) &\leq 9 \sum_{r=1}^{\log_2 K} \exp \left(
	- \frac{n}{8 \log_2 K} \cdot \frac{\Delta^2_{k_r}}{k_r}
	\right)	
\\
	&\le 9 \log_2 K \cdot \exp \left(
		- \frac{n}{8 \log_2 K} \cdot \frac{1}{\max_k k \Delta^{-2}_k}
	\right)
\\
	&\le 9 \log_2 K \cdot \exp \left(
		- \frac{n}{8 H_2(\bm p) \log_2 K}
	\right),
\end{align*}
which concludes the proof.

\paragraph{Proof of Lemma~\ref{lemma-best-survives}}
Define $S_r'$ as the set of arms in $S_r$, excluding the
$\frac{1}{4}|S_r| = K/2^{r+2}$ arms with means closest to $\theta$.
If the MTD $k^*$ is eliminated in round $r$,
it must be the case that at least half the arms of $S_r$
(i.e., $\frac{1}{2}|S_r| = K/2^{r+1}$ arms)
have their empirical average closer to $\theta$ than its empirical
average.
In particular, the empirical means of at least
$\frac{1}{3}|S_r'| = K/2^{r+2}$ of the arms in $S_r'$ must be closer to
$\theta$ than that of the $k^*$ at the end of round $r$.
Letting $N_r$ denote the number of arms in $S_r'$ whose empirical average
is closer to $\theta$ than that of the optimal arm, we have by
Lemma~\ref{lemma-inversion}:
\begin{align*}
	\E[N_r] &= \sum_{k \in S_r'}
		\P(\hat d^r_k < \hat d^r_{k^*})
\\
	&\le \sum_{k \in S_r'} 3 \exp\left( -\frac{t_r}{2} \Delta^2_k \right)
\\
	&\le 3 \sum_{k \in S_r'} \exp\left( -\frac{1}{2} \Delta^2_k
		\cdot \frac{n}{|S_r| \log_2 K} \right)
\\
	&\le 3 |S_r'| \max_{k \in S_r'} \exp\left( -\frac{1}{2} \Delta^2_k
		\cdot \frac{2^r n}{K \log_2 K} \right)
\\
	&\le 3 |S_r'| \exp\left( -\frac{n}{8 \log_2 K}
		\cdot \frac{\Delta^2_{k_r}}{k_r} \right)
\end{align*}
Where the last inequality follows from the fact that there are at least
$k_r - 1$ arms that are not in $S_r'$ with average reward closer to $\theta$
than that of any arm in $S_r'$.
We now apply Markov's inequality to obtain
\begin{align*}
\P\left(N_r > \frac{1}{3}|S_r'|\right) &\le 3 \E[N_r] / |S_r'|
\\
	&\le 9 \exp \left(
		- \frac{n}{8 \log_2 K}
		\cdot \frac{\Delta^2_{k_r}}{k_r}
	\right),
\end{align*}
and the lemma follows.

\label{lastpage}

\end{document}